\def\eqref#1{equation~\ref{#1}}
\def\1{\bm{1}}
\DeclareMathAlphabet{\mathsfit}{\encodingdefault}{\sfdefault}{m}{sl}
\SetMathAlphabet{\mathsfit}{bold}{\encodingdefault}{\sfdefault}{bx}{n}
\newcommand{\E}{\mathbb{E}}
\newcommand{\R}{\mathbb{R}}
\declaretheorem[name=Theorem,numberwithin=section]{theorem}
\declaretheorem[name=Lemma,sibling=theorem]{lemma}
\declaretheorem[name=Corollary,sibling=theorem]{corollary}
\newcommand{\calL}{\mathcal{L}}
\newcommand{\calN}{\mathcal{N}}
\newcommand{\Id}{I_d}
\newcommand{\mumax}{\mu_{\textnormal{max}}}
\newcommand{\boldv}{\boldsymbol{v}}
\newcommand{\boldmu}{\boldsymbol{\mu}}
\newcommand{\bolddelta}{\boldsymbol{\delta}}
\crefname{theorem}{Theorem}{Theorems}
\crefname{lemma}{Lemma}{Lemmas}
\crefname{section}{Section}{Sections}
\title{Convergence Dynamics of Over-Parameterized Score Matching for a Single Gaussian}
\author{Yiran Zhang \thanks{Work done while Yiran Zhang was visiting the University of Washington.} \\
Institute for Interdisciplinary Information Sciences\\
Tsinghua University\\
Beijing 100084, China \\
\texttt{zhangyir22@mails.tsinghua.edu.cn} \\
\And
Weihang Xu \\
University of Washington \\
Seattle, WA 98105, USA \\
\texttt{xuwh@cs.washington.edu} \\
\And
Mo Zhou \\
University of Washington \\
Seattle, WA 98105, USA \\
\texttt{mozhou17@cs.washington.edu} \\
\And
Maryam Fazel \\
University of Washington \:\:\:\qquad\qquad \\
Seattle, WA 98105, USA \\
\texttt{mfazel@uw.edu} \\
\And
Simon Shaolei Du \\
University of Washington \\
Seattle, WA 98105, USA \\
\texttt{ssdu@cs.washington.edu} \\
}
\begin{document}

\maketitle

%\textcolor{magenta}{Can we make the names align properly? Maryam's name is not aligned with Weihang's...}
%\yiran{Seems it is a bug in ICLR template. I used a stupid way to make it look better.}

\begin{abstract}
Score matching has become a central training objective in modern generative modeling, particularly in diffusion models, where it is used to learn high-dimensional data distributions through the estimation of score functions. Despite its empirical success, the theoretical understanding of the optimization behavior of score matching, particularly in over-parameterized regimes, remains limited.
  In this work, we study gradient descent for training over-parameterized models to learn a single Gaussian distribution. Specifically, we use a student model with $n$ learnable parameters, motivated by the structure of a Gaussian mixture model, and train it on data generated from a single ground-truth Gaussian using the population score matching objective.
  We analyze the optimization dynamics under multiple regimes. When the noise scale is sufficiently large, we prove a global convergence result for gradient descent, which resembles the known behavior of gradient EM in over-parameterized settings.
  In the low-noise regime, we identify the existence of a stationary point, highlighting the difficulty of proving global convergence in this case. 
  Nevertheless, we show convergence under certain initialization conditions: when the parameters are initialized to be exponentially small, gradient descent ensures convergence of all parameters to the ground truth. 
  We further give an example where, without the exponentially small initialization, the parameters may not converge to the ground truth.
  Finally, we consider the case of random initialization, where parameters are sampled from a Gaussian distribution far from the ground truth. We prove that, with high probability, only one parameter converges while the others diverge to infinity, yet the loss still converges to zero with a $1/\tau$ rate, where $\tau$ is the number of iterations. We also establish a nearly matching lower bound on the convergence rate in this regime.
  This is the first work to establish global convergence guarantees for Gaussian mixtures with at least three components under the score matching framework.
\end{abstract}

\section{Introduction}
\label{sec:intro}
Diffusion models \citep{song2019generative, song2020score} have become a leading framework for generative modeling, achieving state-of-the-art results across a wide range of visual computing tasks \citep{po2024state}. They form the foundation of modern image and audio synthesis systems such as DALL·E 2 \citep{ramesh2022hierarchical} and Imagen \citep{saharia2022photorealistic}. At the core of this framework is the idea of learning score functions, i.e., the gradients of the log-density of intermediate noisy distributions, in order to approximate the reverse of a forward diffusion process.

While diffusion models are empirically successful, their theoretical understanding remains limited. A substantial body of recent work has studied the convergence properties of diffusion-based sampling algorithms under the assumption of access to a perfect or approximate score oracle \citep{de2021diffusion, block2020generative, chen2023improved, de2022convergence, lee2022convergence, liu2022let, pidstrigach2022score, wibisono2022convergence, chen2022sampling, chen2023restoration, lee2023convergence, chen2023probability, li2023towards, benton2023error}. These works establish rigorous convergence guarantees for sampling and restoration tasks but rely on access to ideal score functions.

More recently, \citet{shah2023learning} analyzed the optimization landscape of the denoising diffusion probabilistic model (DDPM) training objective in the setting where both the student and ground-truth distributions are mixtures of two Gaussians.
They showed that there exists an algorithm that performs gradient descent at two different noise scales and successfully recovers the ground-truth parameters, assuming random initialization and a minimum separation between the components. Their result demonstrates that, in the exactly-parameterized and well-separated regime, gradient descent on the DDPM objective can achieve global recovery.
Since the DDPM training objective coincides with the score matching loss at the population level (see Appendix A of \citet{chen2022sampling}), their result can also be interpreted as a global convergence guarantee for score matching in this specific setting.
%However, their setting matches the number of model parameters to the true number of Gaussians (i.e., two), and does not capture the over-parameterized regime often seen in practice, where diffusion models involve significantly more parameters than needed. 
However, their approach requires the number of model parameters to match the true number of Gaussians (e.g., two), which is often unknown in advance. In practice, diffusion models are typically overparameterized, using far more parameters than needed. Therefore, we focus on overparameterized score matching, where an oversized network learns a single Gaussian.

The Gaussian mixture model has also been studied in the context of the gradient EM algorithm, whose connection to the score matching objective was established in \citet{shah2023learning}.
Recently, \citet{xu2024toward} provided a global convergence guarantee for gradient EM in the over-parameterized setting, which further highlights the practical and theoretical importance of this regime.
These developments motivate the central question of this work:

\begin{center}
\emph{Can we prove the global convergence of gradient descent on the score matching objective in the over-parameterized setting?}
\end{center}

In this work, we consider the setting where the ground truth is a single Gaussian distribution, and the student network has $n$ learnable parameters, following the setup in \citet{shah2023learning}. A formal definition is given in \cref{sec:prelim}. Our main contributions are as follows:

\begin{itemize}
    \item We first analyze the regime where the noise scale $t$ is sufficiently large. Building on techniques from the gradient EM analysis in \citet{xu2024toward}, we prove convergence of gradient descent on the score matching objective (\cref{thm:local}). This result extends the connection between DDPM and EM observed in \citet{shah2023learning} to the over-parameterized setting.
    
    \item We then consider the more challenging small-$t$ regime. We begin by showing the existence of a stationary point with nonzero loss, which shows that global convergence cannot be guaranteed from arbitrary initialization (\cref{thm:global_zerograd}).
    
    \item Nevertheless, we show that if all student parameters are initialized to be exponentially close to zero, then all parameters converge to the ground truth (\cref{thm:global_allconverge}). Our analysis introduces a technique for tracking the evolution of the geometric center of the parameters. We further complement this result with a counterexample, showing that such exponentially small initialization is necessary for full parameter recovery (\cref{thm:exp_necessary}).
    
    \item Finally, we study the case where the goal is only to minimize the training loss, not necessarily to recover all parameters. We show that when student parameters are initialized independently from a Gaussian distribution centered far from the ground truth, then with high probability, one parameter converges to the ground truth while the others diverge to infinity, yet the loss still vanishes (Corollary~\ref{cor:random_init}).
    This regime requires a delicate analysis of the gradient dynamics. We first show that one parameter converges to the ground truth faster than the others. Then, we prove that once this parameter is sufficiently close to the ground truth, the minimum distance between the ground truth and all other parameters increases. We further show that the loss in this setting converges at rate $O(1/\tau)$, and establish a nearly matching lower bound of $\Omega(1/\tau^{1+\epsilon})$ for any constant $\epsilon > 0$ (\cref{thm:rate_bound}), in sharp contrast to the linear convergence rate in the exactly parameterized case \citep{shah2023learning}.
\end{itemize}

\subsection{Related Works}
\textbf{Theory of Score Estimation.}
Several works have studied the theoretical aspects of score-based generative modeling.
One line of work investigates how well score-based models can generate samples from complex distributions.
\citet{koehler2023sampling} showed that Langevin diffusion with data-dependent initialization can learn multimodal distributions, such as mixtures of Gaussians, provided that accurate score estimates are available.
Another line of work examines the optimization behavior of score-based models. \citet{li2023generalization, han2024neural, wang2024evaluating} provided convergence guarantees for gradient descent in diffusion model training.
The statistical complexity of score matching has also been investigated. \citet{koehler2022statistical} established connections between the statistical efficiency of score matching and functional properties of the underlying distribution. \citet{pabbaraju2023provable} analyzed score matching for log-polynomial distributions, and \citet{wibisono2024optimal} derived minimax optimal rates for nonparametric score estimation in high dimensions.
Denoising diffusion probabilistic models (DDPMs), introduced by \citet{ho2020denoising}, are a widely used framework of score-based generative modeling, where training is typically performed using the score matching objective at varying noise levels.
In the context of distribution learning, \citet{shah2023learning} analyzed gradient descent on the DDPM training objective and proved recovery of a two-component Gaussian mixture under suitable initialization and separation assumptions. \citet{chen2024learning} showed that there exists an algorithm capable of learning $k$-component Gaussian mixtures using the score matching objective, although their focus is on statistical feasibility rather than the convergence behavior of gradient-based optimization.

\textbf{Learning Gaussian Mixtures.}
There is a large body of work on learning Gaussian mixture models. Recent results such as \citet{gatmiry2024learning, chen2024learning} focus on designing computationally efficient algorithms that recover Gaussian mixtures with small estimation error.
Many classical approaches rely on a well-separatedness assumption, where the centers of the Gaussians are assumed to be sufficiently far apart \citep{liu2022clustering, kothari2018robust, diakonikolas2018list}. Another line of work studies the Expectation-Maximization (EM) algorithm, which is closely related to Gaussian mixture estimation \citep{daskalakis2017ten, xu2016global, wu2021randomly, dwivedi2020sharp}. \citet{jin2016local} showed that EM may fail to achieve global convergence for mixtures with more than two components. In contrast, \citet{xu2024toward} established global convergence of gradient EM in an over-parameterized setting, where the ground truth is a single Gaussian and the learner uses multiple components.

\textbf{Theory of Over-parameterized Teacher-Student Settings. }
Over-parameterization is a popular topic in recent theoretical work, with a focus on both landscape and algorithmic properties. 
A common result is the slowdown of convergence, observed in different settings such as Gaussian mixtures \citep{dwivedi2018theoretical,wu2021randomly}, two-layer neural networks \citep{xu2023over,richert2022soft} and nonconvex matrix sensing problems \citep{xiong2023over,zhang2021preconditioned,ding2024flat,zhuo2024computational}.

\subsection{Key Challenges}
In this section, we introduce several unique challenges that distinguish our analysis from prior work.

\textbf{Cubic gradient terms. }
Unlike the gradient EM framework studied in \citet{xu2024toward}, where the update dynamics involve only linear terms, the gradient of the score matching loss in our setting contains cubic interactions in the parameters.
When the student parameters $\mu_i$ are far from the ground truth, these cubic terms dominate the gradient direction, making the dynamics substantially more difficult to control. Prior work such as \citet{shah2023learning} only analyzed the regime where the parameters are already close to the ground truth, thereby avoiding this complication. Extending the analysis beyond the near-ground-truth regime requires new techniques.

\textbf{Multiple convergence regimes. }
A second challenge arises from the fact that, as our results show, the over-parameterized student model does not guarantee that all parameters converge to the ground truth. Instead, different initialization schemes lead to qualitatively different convergence behaviors: in some cases, all parameters converge to the ground truth, while in others only a single parameter converges and the remaining ones diverge. This multiplicity of possible regimes makes the analysis intricate, since we must carefully characterize the conditions under which each type of convergence occurs.

\subsection{Preliminaries}
\label{sec:prelim}
\textbf{Diffusion Model Background. }
We begin by reviewing the background on diffusion models. 
Let $q_0$ denote the data distribution on $\mathbb{R}^d$, and let $X_0 \sim q_0$ be a random variable drawn from it. The two main components in diffusion models are the \emph{forward process} and the \emph{reverse process}. The forward process transforms samples from the data distribution into noise, for instance via the \emph{Ornstein--Uhlenbeck (OU) process}:
\[
\mathrm{d}X_t = -X_t\,\mathrm{d}t + \sqrt{2}\,\mathrm{d}W_t \quad \text{with} \quad X_0 \sim q_0,
\]
where $(W_t)_{t \ge 0}$ is a standard Brownian motion in $\mathbb{R}^d$. We use $q_t$ to denote the distribution of $X_t$, the solution to the OU process at time $t$. Note that for $X_t \sim q_t$,
\[
X_t = \exp(-t)X_0 + \sqrt{1 - \exp(-2t)} Z_t \quad \text{with} \quad X_0 \sim q_0, \quad Z_t \sim \mathcal{N}(0, \Id).
\]
Here $\calN(\mu,\Sigma)$ denotes the Gaussian distribution.

The reverse process then transforms noise into samples, thus performing generative modeling. Ideally, this could be achieved by the following reverse-time stochastic differential equation for a terminal time $T$:
\[
\mathrm{d}X_t^{\leftarrow} = \left\{ X_t^{\leftarrow} + 2 \nabla_x \ln q_{T-t}(X_t^{\leftarrow}) \right\} \mathrm{d}t + \sqrt{2}\, \mathrm{d}W_t \quad \text{with} \quad X_0^{\leftarrow} \sim q_T,
\]
where now $W_t$ is the reversed Brownian motion. In this reverse process, the iterate $X_t^{\leftarrow}$ is distributed according to $q_{T-t}$ for every $t \in [0, T]$, so that the final iterate $X_T^{\leftarrow}$ is distributed according to the data distribution $q_0$. The function $\nabla_x \ln q_t$, known as the score function, depends on the unknown data distribution. In practice, it is approximated by minimizing the score matching loss:
\[
\calL_t(s_t)= \mathbb{E}_{X_t \sim q_t} \left[ \| s_t(X_t) - \nabla_x \ln q_t(X_t) \|^2 \right].
\]
Throughout this paper, $\|\cdot\|$ denotes the Euclidean (L2) norm.

The setup for $m$ ground truth Gaussians is as follows (in this paper we analyze the case of $m=1$): 
%We consider the case where \maryam{we should be clear right here that we consider $n=1$ Gaussians in this paper, rather than writing for general $n$ and setting $n=1$ after the derivations.}
%\yiran{but the general $n$ case motivates the structure of our student model} \maryam{maybe we need two different parameters, $n$ and $m$?}
%\yiran{i think this makes sense}
%\maryam{sounds good}
\[
q=q_0=\frac{1}{m}\sum_{i=1}^m \calN(\tilde{\mu}_i^*,\Id),
\]
where $\tilde{\mu}_i^*\in\R^d$, $d\geq 1$.
In \citet{shah2023learning}, a simple calculation showed that
\[
q_t=\frac{1}{m}\sum_{i=1}^m \calN(\mu_{i,t}^*,\Id), \text{ where }\mu_{i,t}^*=\tilde{\mu}_{i}^*\exp(-t),
\]
%Also, it showed that 
and
\[
\nabla_x \ln q_t(x)=\sum_{i=1}^m w_{i,t}^*(x)\mu_{i,t}^*-x, \text{ where }w_{i,t}^*(x)=\frac{\exp(-\Vert x-\mu_{i,t}^*\Vert^2/2)}{\sum_{j=1}^m \exp(-\Vert x-\mu_{j,t}^*\Vert^2/2)}.
\]

\textbf{Our Setting. }
Motivated by \citet{shah2023learning} and the optimal score form above, we model the student network using $n$ 
%\maryam{this can be called $m$} \yiran{this $n$ is used for many times in the following paper} \maryam{ah, then you can change the ground truth number to $m$ or $k$ if easier} 
learnable parameters $\tilde{\mu}_1,\tilde{\mu}_2,\cdots,\tilde{\mu}_n\in \R^d$ as follows:
\[
s_{t}(x)=\sum_{i=1}^n w_{i,t}(x)\mu_{i,t}-x, \text{ where }\mu_{i,t}=\tilde{\mu}_i\exp(-t), \; \;w_{i,t}(x)=\frac{\exp(-\Vert x-\mu_{i,t}\Vert^2/2)}{\sum_{j=1}^n \exp(-\Vert x-\mu_{j,t}\Vert^2/2)}.
\]

In this paper, we consider the over-parameterized setting. The ground truth consists of a single Gaussian component, i.e., $m = 1$, and we denote its mean by $\mu^* = \tilde{\mu}_1^*$. Therefore,
\[
\nabla_x \ln q_t(x) = \mu_t^* - x, \quad \text{where } \mu_t^* = \mu^* \exp(-t).
\]
The number of learnable parameters in the student model is $n \geq 2$.
In this case, as $q_t=\calN(\mu_{t}^*,\Id)$,
the loss is
\begin{align*}
\calL_t(s_t)=\mathbb{E}_{X_t \sim q_t} \left[ \| s_t(X_t) - \nabla_x \ln q_t(X_t) \|^2 \right]
=\mathbb{E}_{x \sim \calN(\mu_{t}^*,\Id)} \left[ \left\| \sum_{i=1}^n w_{i,t}(x)\mu_{i,t} - \mu_t^* \right\|^2 \right].
\end{align*}
In this paper, we analyze gradient descent applied to the population loss. 
Since we focus on optimization for a fixed $t$, we treat the loss $\calL_t(s_t)$ as a function of the variables $\mu_{i,t}$ for $i = 1, \ldots, n$ and directly run gradient descent on these variables.
Given a step size $\eta>0$, the update rule is
\[
\mu_{i,t}^{(\tau+1)} = \mu_{i,t}^{(\tau)} - \eta \nabla_{\mu_{i,t}} \mathcal{L}_t(s_t(\tau)),
\]
where $\tau$ denotes the iteration index.
When the context is clear, we abbreviate $\mu_{i,t}$ by $\mu_i$ and abbreviate $w_{i,t}$ by $w_i$. 
We denote $\mu_i^{(\tau)}$ as the value of $\mu_i$ at the $\tau$-th iteration of gradient descent.
We use $\mathcal{L}_t(s_t(\tau))$ to denote the loss in the $\tau$-th iteration.
Also, when the context is clear, we simply use $\calL$ to denote $\calL_t(s_t)$.
We define the function $\boldv(x) := \sum_{i=1}^n w_i(x) \mu_i$, which depends on the current parameters $\mu_i$ (i.e., $\mu_i = \mu_{i,t}$ for fixed $t$).

\section{Warm Up: Convergence under Large Noise Regime}
\label{sec:localthm}
We begin by analyzing the setting where the noise scale $t$ is large.
In this case, we prove the following convergence guarantee:
\begin{theorem}
\label{thm:local}
Let $M = \max_i \Vert \tilde{\mu}_i^{(0)} - \mu^* \Vert$ and suppose $t > \log n + \log M + 2$. 
If the %gradient 
step size $\eta$ satisfies
$
\eta \leq O\left(\frac{1}{n^4 d^2}\right),
$
then after $\tau$ iterations of gradient descent, the loss satisfies
\[
\mathcal{L}_t(s_t(\tau)) \leq O\left(\frac{n^3d^2}{\sqrt{\eta \tau}}\right).
\]
\end{theorem}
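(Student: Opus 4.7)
My plan is to follow the standard smooth non-convex optimization template, combining three ingredients: (i) an \emph{invariance} argument showing the parameters remain in the small-norm regime throughout training, (ii) a \emph{smoothness} bound on $\mathcal{L}_t$ that justifies the step-size restriction and gives the descent inequality, and (iii) a \emph{gradient-dominance} (Lojasiewicz-type) inequality converting a bound on $\|\nabla\mathcal{L}_t\|$ into a bound on $\mathcal{L}_t$. The condition $t > \log n + \log M + 2$ gives $e^{-t}(\|\mu^*\|+M) = O(1/n)$, so every initial parameter $\mu_{i,t}^{(0)}$ and the ground truth $\mu_t^*$ lie in a ball of radius $O(1/n)$ around the origin, and this smallness is what eventually enables the EM-like local analysis.

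\textbf{Detailed plan.} First I would compute the gradient explicitly using the softmax identity
\[
\nabla_{\mu_k} \boldv(x) = w_k(x)\, I_d + w_k(x)\,(\mu_k - \boldv(x))(x-\mu_k)^\top,
\]
which yields
\[
\nabla_{\mu_k}\mathcal{L}_t = 2\mathbb{E}[w_k(\boldv - \mu_t^*)] + 2\mathbb{E}[w_k (x-\mu_k)(\mu_k - \boldv)^\top(\boldv - \mu_t^*)].
\]
A Taylor expansion of the softmax in the small-parameter regime shows $w_k(x) \approx 1/n$ and $\boldv(x) \approx \bar\mu := \frac{1}{n}\sum_i \mu_i$, so the leading gradient term is $(2/n)(\bar\mu - \mu_t^*)$ and the dynamics of $\bar\mu$ mirror the EM-contraction of \citet{xu2024toward}. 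Next, I would bound $\|\nabla^2 \mathcal{L}_t\|_{\mathrm{op}} \leq L = O(n^4 d^2)$ by term-by-term differentiation and Gaussian tail estimates, giving the descent inequality $\mathcal{L}_t^{(\tau+1)} \leq \mathcal{L}_t^{(\tau)} - (\eta/2)\|\nabla \mathcal{L}_t^{(\tau)}\|^2$ whenever $\eta \leq 1/L$. A simple induction using the descent lemma and the smallness of the gradients then maintains the invariant $\|\mu_{i,t}^{(\tau)}\| = O(1/n)$ for all $\tau$, so the Hessian bound continues to hold along the entire trajectory. Telescoping the descent inequality yields $\min_\tau \|\nabla \mathcal{L}_t^{(\tau)}\|^2 \leq 2\mathcal{L}_t^{(0)}/(\eta \tau)$, and finally a gradient-dominance inequality of the form $\mathcal{L}_t \leq C(n,d) \|\nabla \mathcal{L}_t\|$ in the small-parameter regime, combined with monotonicity of the loss, lets me conclude $\mathcal{L}_t^{(\tau)} = \min_{s\leq\tau}\mathcal{L}_t^{(s)} \leq O(n^3 d^2/\sqrt{\eta\tau})$.

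\textbf{Main obstacle.} The hardest step will be the gradient-dominance inequality: quantifying how strongly the \emph{spread} of the parameters, not only their mean $\bar\mu$, is penalized by the gradient. Even when $\bar\mu = \mu_t^*$ exactly, the loss is nonzero unless all $\mu_k$ coincide with $\mu_t^*$; the residual is governed by $S := \sum_k (\mu_k - \bar\mu)(\mu_k - \bar\mu)^\top$, and the gradient contracts $S$ only through the cubic cross terms $\mathbb{E}[w_k(x-\mu_k)(\mu_k-\boldv)^\top(\boldv-\mu_t^*)]$. I plan to adapt the population-moment tracking of \citet{xu2024toward}, using Gaussian integration-by-parts to control these cubic expectations and relate $S$-induced loss contributions back to $\|\nabla\mathcal{L}_t\|$; this is precisely the regime where the cubic terms flagged in the paper's ``Key Challenges'' section must be handled. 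The resulting $1/\sqrt{\tau}$ rate (rather than a linear rate one might naively expect in the local EM regime) is a direct manifestation of this slower spread-contraction, and the $\mathrm{poly}(n,d)$ prefactor tracks the worst-case constants from both the smoothness estimate and the gradient-dominance step.
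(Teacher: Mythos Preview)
Your overall framework is right and close to the paper's, but the invariance step has a genuine gap: ``descent lemma plus smallness of the gradients'' does not keep the parameters confined. Loss monotonicity says nothing about individual $\|\mu_i\|$ (one could have $\|\mu_i\|$ drift while $\boldv(x)$ stays small by cancellation), and a bound of the form $\|\nabla_{\mu_i}\mathcal{L}\| = O(p)$ only yields $p^{(\tau+1)} \le (1+O(\eta))\,p^{(\tau)}$, which permits geometric growth. The mechanism the paper actually uses is a \emph{radial identity}: after Stein's lemma removes the explicit factor of $x$ from the gradient, a direct computation gives
\[
\sum_i \mu_i^\top \nabla_{\mu_i}\mathcal{L} \;=\; \E_x\Big[\|\boldv\|^2 + \textstyle\sum_{i,j}w_iw_j(\mu_i^\top\mu_j)^2 - 4\sum_i w_i(\boldv^\top\mu_i)^2 + 3\|\boldv\|^4\Big] \;\ge\; \tfrac{5}{9}\,\mathcal{L},
\]
the inequality needing only $\|\mu_i\| \le 1/3$. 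This positivity is what drives $U(\tau) = \sum_i\|\mu_i^{(\tau)}\|^2$ down (modulo an $O(\eta^2 p^2)$ error) and maintains the invariant by induction.

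The same identity also dissolves your ``main obstacle.'' By Cauchy--Schwarz, $\sum_i \mu_i^\top\nabla_{\mu_i}\mathcal{L} \le \sqrt{U}\,\|\nabla\mathcal{L}\|$, so the display above immediately gives your PL inequality $\|\nabla\mathcal{L}\| \ge (5/9)\,\mathcal{L}/\sqrt{U} \ge c\,\mathcal{L}$, with no need to separately track a spread matrix or isolate how the cubic terms contract it---the scalar radial inner product already captures both the mean-shift and the spread contribution to $\mathcal{L}$. Your telescoping route then delivers the stated $O(1/\sqrt{\eta\tau})$. For comparison, the paper takes a slightly different path at this last stage: it combines the radial identity with the lower bound $\mathcal{L} \ge \Omega\big(p^4/(n^2 d)\big)$ from \citet{xu2024toward} to obtain $\|\nabla\mathcal{L}\| \ge \Omega\big(\mathcal{L}^{3/2}/(n^3 d^2)\big)$, and then runs the recursion $\mathcal{L}^{(\tau+1)} \le \mathcal{L}^{(\tau)} - c\eta\,(\mathcal{L}^{(\tau)})^3$ directly to reach the same $1/\sqrt{\tau}$ rate.
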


The proof of the above theorem is deferred to Appendix \ref{sec:proof_local}.
When $t$ is large enough, we prove an $O(1/\sqrt{\tau})$ convergence rate in Theorem~\ref{thm:local}, matching the best known rate for gradient EM \citep{xu2024toward} in this over-parameterized setting. Our update is closely related to gradient EM in this regime, as we explain below.
% However, the true rate may be faster, and improving this bound remains an interesting open problem. 
%Since gradient EM is a more classical and foundational algorithm, a sharper analysis in our setting may also yield new insights into its convergence behavior.
%\maryam{since we don't give a sharper analysis here that benefits gradient EM, I omitted the sentence.}

\textbf{Proof Sketch. }
For such $t$, by definition, $\|\mu_{i,t}^{(0)}-\mu_t^*\|\leq M\exp(-t)\leq \frac{1}{3n}$. In the following, we use $\mu_i$ to denote $\mu_{i,t}$.

For each $t$, we can find that
\begin{align}
\calL_t(s_t)&=\mathbb{E}_{x \sim \calN(\mu_{t}^*,\Id)} \left[ \left\| \sum_{i=1}^n w_i(x)\mu_i - \mu_t^* \right\|^2 \right] \nonumber\\
&=\mathbb{E}_{x \sim \calN(\mu_{t}^*,\Id)} \left[ \left\| \sum_{i=1}^n \frac{\exp(-\Vert x-\mu_i\Vert^2/2)}{\sum_{j=1}^n \exp(-\Vert x-\mu_j\Vert^2/2)}\mu_i-\mu_t^*\right\|^2 \right] \nonumber\\
&=\mathbb{E}_{x \sim \calN(\boldsymbol{0},\Id)} \left[ \left\| \sum_{i=1}^n \frac{\exp(-\Vert x-(\mu_i-\mu_t^*)\Vert^2/2)}{\sum_{j=1}^n \exp(-\Vert x-(\mu_j-\mu_t^*)\Vert^2/2)}(\mu_i-\mu_t^*)\right\|^2 \right].
\label{eq:groundtruthzero}
\end{align}
Therefore, without loss of generality, we may assume $\mu_t^*=\boldsymbol{0}$ by %modifying 
shifting each $\mu_i$ to $\mu_i-\mu_t^*$. In the following, we assume that $\mu_t^*=0$, and $\|\mu_i^{(0)}\|\leq \frac{1}{3n}$.
We have the following calculation for the gradient of loss:
\begin{lemma}[Part of \cref{lm:gradient}]
\label{lm:calculate}
Let $x\sim\calN(\boldsymbol{0},\Id),\mu^*=\boldsymbol{0}$. 
We define $\boldv(x)=\sum_{i}w_i(x)\mu_i$.
Then, we have
\begin{align*}
\nabla_{\mu_i}\calL =&2\E_x\Biggl[w_i(x)\boldv(x)+w_i(x)\sum_{j}w_j(x)\mu_j\mu_j^\top\mu_i
-2w_i(x)\boldv(x)\boldv(x)^\top\mu_i\\
&-2w_i(x)\sum_j w_j(x)\left(\boldv(x)^\top\mu_j\right)\mu_j+3w_i(x)\left(\boldv(x)^\top\boldv(x)\right)\boldv(x)\Biggl].
\end{align*}
\end{lemma}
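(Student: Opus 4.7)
The plan is to reduce the calculation to two standard steps: (i) compute the Jacobian of $\boldv(x)$ with respect to $\mu_i$ using the softmax derivative, and (ii) apply Stein's lemma (Gaussian integration by parts) to eliminate the factor of $x-\mu_i$ that appears after step (i). Since $\calL = \E_x\|\boldv(x)\|^2$, differentiating under the expectation gives
\begin{equation*}
\nabla_{\mu_i}\calL \;=\; 2\,\E_x\bigl[(J_{\mu_i}\boldv(x))^\top\boldv(x)\bigr],
\end{equation*}
so everything reduces to two Jacobians, one in $\mu_i$ and one in $x$.

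First I would compute $\nabla_{\mu_i}w_j$ from $\log w_j = -\tfrac12\|x-\mu_j\|^2 - \log Z(x)$ with $Z(x) = \sum_k \exp(-\|x-\mu_k\|^2/2)$, obtaining the standard softmax derivative $\nabla_{\mu_i}w_j = w_j(\delta_{ij}-w_i)(x-\mu_i)$. Summing over $j$ and adding the explicit $\mu_i$ contribution yields
\begin{equation*}
J_{\mu_i}\boldv(x) \;=\; w_i\Id \;+\; w_i(\mu_i - \boldv(x))(x-\mu_i)^\top,
\end{equation*}
and contracting with $\boldv(x)$ produces the intermediate form
\begin{equation*}
\nabla_{\mu_i}\calL \;=\; 2\,\E_x\!\left[w_i\boldv \;+\; w_i(x-\mu_i)\bigl(\mu_i^\top\boldv - \boldv^\top\boldv\bigr)\right].
\end{equation*}

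The second step, where all the cubic-in-$\mu$ terms appear, is to apply Stein's identity $\E[x\phi(x)] = \E[\nabla_x\phi(x)]$ to $\phi(x) = w_i(x) h(x)$ with $h(x) := \mu_i^\top\boldv - \boldv^\top\boldv$. Differentiating in $x$ the same way as above gives $\nabla_x w_i = w_i(\mu_i-\boldv)$ and the matrix $\nabla_x\boldv = A - \boldv\boldv^\top$, where $A := \sum_j w_j\mu_j\mu_j^\top$. Hence $\nabla_x h = A\mu_i - \boldv\boldv^\top\mu_i - 2A\boldv + 2\|\boldv\|^2\boldv$, and Stein combined with the subtraction $-\mu_i\E[w_i h]$ (which cancels the $w_i\mu_i h$ contribution coming from $\nabla_x w_i$) yields
\begin{equation*}
\E\bigl[w_i(x-\mu_i)h\bigr] \;=\; \E\bigl[w_i(\nabla_x h - h\,\boldv)\bigr].
\end{equation*}

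The main obstacle, and the only non-routine piece, is the bookkeeping in the final recombination: one must expand $-w_i h\boldv = -w_i(\mu_i^\top\boldv)\boldv + w_i\|\boldv\|^2\boldv$, recognize the scalar identity $(\mu_i^\top\boldv)\boldv = \boldv\boldv^\top\mu_i$, and merge it with the $-w_i\boldv\boldv^\top\mu_i$ coming from $w_i\nabla_x h$ to obtain the coefficient $-2$ on $w_i\boldv\boldv^\top\mu_i$, while $+w_i\|\boldv\|^2\boldv$ from $-w_ih\boldv$ adds to $+2w_i\|\boldv\|^2\boldv$ from $w_i\nabla_x h$ to give the coefficient $+3$. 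The remaining pieces $w_iA\mu_i = w_i\sum_j w_j\mu_j\mu_j^\top\mu_i$ and $-2w_iA\boldv = -2w_i\sum_j w_j(\boldv^\top\mu_j)\mu_j$ carry over unchanged, reproducing the displayed formula. The integrability conditions needed for Stein's lemma are immediate because $w_i\in[0,1]$, $\|\boldv\|\le\max_j\|\mu_j\|$, and every remaining factor has at most polynomial growth in $x$.
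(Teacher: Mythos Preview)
Your proposal is correct and follows essentially the same route as the paper's proof: both derive the intermediate form $\nabla_{\mu_i}\calL = 2\E_x[w_i\boldv + w_i\boldv^\top(\mu_i-\boldv)(x-\mu_i)]$ via the softmax derivative, then apply Stein's lemma to eliminate the $x$-factor and collect terms. Your use of the shorthand $A=\sum_j w_j\mu_j\mu_j^\top$ and the scalar $h(x)$ streamlines the bookkeeping, but the substance is identical.
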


Notice that when the parameters $\|\mu_i\|$ are sufficiently small, intuitively, the first term $w_i(x)\boldv(x)$ dominates the gradient expression, as the remaining terms involve cubic interactions in $\mu_i$ and can be viewed as higher-order corrections.
Also, it is known in \citet{xu2024toward} that the first term $w_i(x)\boldv(x)$ is the same as the gradient of the population loss in gradient EM. Therefore, by a similar proof as Theorem 2 in \citet{xu2024toward}, we can prove the convergence result.
This connection highlights the relationship between DDPM training and the gradient EM algorithm in the over-parameterized setting; The connection in the well-separated regime was also previously discussed by~\citet{shah2023learning}. \qed

\section{Convergence for Small Noise $t$}
\label{sec:generalize}
We now consider the small-noise regime. Since we analyze a fixed noise scale $t$, we write $\mu_{i,t}$ as $\mu_i$ and $\mu^*$ as $\mu^*$. Gradient descent is applied to optimize these parameters $\mu_i$.
Therefore, for each theorem in this section about $\mu_i$'s, the corresponding result works for $\tilde{\mu}_i=\exp(t)\cdot \mu_i$ and $\tilde{\mu}^*=\exp(t)\mu^*$ in the original setting.
In the small-$t$ regime, the distances between $\mu_i$'s and $\mu^*$ may not be small.
Without loss of generality, we can assume that $t=0$ in this section.

\subsection{Warm-Up: A Stationary Point with Nonzero Loss}
In this subsection, we present a simple example demonstrating the existence of a point where the loss is nonzero but the gradient vanishes—specifically, a local maximum.

\begin{theorem}
\label{thm:global_zerograd}
Let $n\geq 3$ and $\mu^*=0$.
For $t=0$, there is a point $\boldmu=(\mu_1,\cdots ,\mu_n)$ %such that the 
where the loss is nonzero
%not 0 on it, 
but the gradient is zero.
\end{theorem}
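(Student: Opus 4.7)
The plan is to exploit symmetry. Consider the one-parameter family
\[
\mu_1(a) = -a e_1,\quad \mu_2(a) = a e_1,\quad \mu_3 = \cdots = \mu_n = 0,\qquad a \ge 0,
\]
where $e_1 \in \mathbb{R}^d$ is the first standard basis vector and the assumption $n \ge 3$ is used to place at least one parameter at the origin. Let $\ell(a) := \mathcal{L}(\mu_1(a), \ldots, \mu_n(a))$. The plan is to find an interior maximizer $a^* > 0$ of $\ell$ and then argue, by symmetry, that the full gradient $\nabla \mathcal{L}$ vanishes at the corresponding configuration.

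For the one-dimensional restriction, the three facts I need are: (i) $\ell(0) = 0$; (ii) $\ell(a) > 0$ for every $a > 0$, which follows from the identity $\boldv(x) = a(w_2(x) - w_1(x)) e_1$ together with the observation that $w_1 = w_2$ pointwise would force $\|x - \mu_1\|^2 = \|x - \mu_2\|^2$ a.e., impossible when $\mu_1 \neq \mu_2$; and (iii) $\ell(a) \to 0$ as $a \to \infty$, by a Gaussian tail estimate---on the typical region $\{|x_1| \le a/2\}$, the $n - 2 \ge 1$ origin weights dominate the two "far" softmax weights, so $w_1, w_2 \to 0$ and $\boldv(x) \to 0$, while the tail region $\{|x_1| > a/2\}$ has Gaussian probability $O(e^{-a^2/8})$ and $\|\boldv\|^2 \le a^2$ pointwise, yielding an overall bound of order $a^2 e^{-a^2/8}$. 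Since $\ell$ is continuous, (i)--(iii) force an interior maximizer $a^* \in (0, \infty)$ with $\ell'(a^*) = 0$ and $\ell(a^*) > 0$.

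To upgrade $\ell'(a^*) = 0$ to $\nabla \mathcal{L} = 0$ at the configuration, I would use two symmetries of $\mathcal{L}$: the isometry $\sigma(\mu_1, \mu_2, \mu_3, \ldots, \mu_n) = (-\mu_2, -\mu_1, -\mu_3, \ldots, -\mu_n)$, which arises from the $x \mapsto -x$ symmetry of $\mathcal{N}(0, I_d)$ composed with the permutation $\mu_1 \leftrightarrow \mu_2$, and the $O(d-1)$ rotations in the coordinates orthogonal to $e_1$. Our configuration is a simultaneous fixed point. Equivariance of $\nabla \mathcal{L}$ under $\sigma$ forces $\nabla_{\mu_1} \mathcal{L} = -\nabla_{\mu_2} \mathcal{L}$ and $\nabla_{\mu_i} \mathcal{L} = 0$ for $i \ge 3$; rotational equivariance forces the two remaining gradients to lie in $\mathrm{span}(e_1)$. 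The directional derivative along the family is then $\ell'(a) = 2 \langle \nabla_{\mu_2} \mathcal{L}, e_1 \rangle$, so $\ell'(a^*) = 0$ implies $\nabla \mathcal{L} = 0$ at the configuration, with nonzero loss $\ell(a^*) > 0$.

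The main obstacle is the decay step (iii): a careful argument is needed to track the softmax weights $w_1, w_2$ across the whole of $\mathbb{R}^d$ for large $a$ and verify that the $n - 2$ origin weights suppress them fast enough to overcome the $\mu_i \sim a$ prefactors. The symmetry portion is conceptually immediate once stated, but one could alternatively verify it by direct substitution into \cref{lm:calculate} using the parity relations $\boldv(-x) = -\boldv(x)$, $w_i(-x) = w_i(x)$ for $i \ge 3$, and $w_1(-x) = w_2(x)$, which makes every relevant integrand odd in $x$ for $i \ge 3$ and reduces the $i \in \{1,2\}$ case to the scalar $\ell'(a)$.
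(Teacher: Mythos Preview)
Your proposal is correct and follows essentially the same route as the paper: the same symmetric one-parameter family $\mu_1=-ae_1,\ \mu_2=ae_1,\ \mu_{i\ge 3}=0$, the same reduction to a scalar maximization via the symmetry $x\mapsto -x$ combined with the swap $\mu_1\leftrightarrow\mu_2$, and the same endpoint analysis $\ell(0)=0$, $\ell(a)\to 0$ to locate an interior maximizer. The only cosmetic differences are that the paper verifies $\nabla_{\mu_i}\mathcal{L}=0$ for $i\ge 3$ and $\nabla_{\mu_1}\mathcal{L}+\nabla_{\mu_2}\mathcal{L}=0$ by writing out the integrand from \cref{lm:calculate} and checking parity directly (your alternative ``direct substitution'' remark), and it handles the decay $\ell(a)\to 0$ by invoking a prepared bound $\mathbb{E}[w_i(x)]\le \tfrac{7}{a}e^{-a^2/8}$ rather than your two-region split; your sketch of step~(iii) would need a slightly finer partition (e.g.\ $|x_1|\le a/4$ versus $|x_1|>a/4$) to make the ``origin weights dominate'' claim uniform, but this is routine.
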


\cref{thm:global_zerograd} demonstrates the existence of a stationary point with nonzero loss, indicating that gradient descent may fail to converge when initialized arbitrarily. To establish meaningful convergence guarantees, it is therefore necessary to impose conditions on the initialization. This highlights the difficulty of analyzing the small-noise regime, where the optimization landscape becomes more complex. The proof is deferred to Appendix~\ref{sec:local_thm_proof}.

\textbf{Proof Sketch. }
The main intuition of the proof is to consider a configuration where some $\mu_i = 0$ and others satisfy $\|\mu_j\| \to \infty$, making their contribution negligible and yielding near-zero loss. Since the loss is also zero at the origin, but strictly positive at some intermediate points, continuity implies the existence of a local maximum.

Based on this intuition, we outline the proof as follows.
Let $e_1=(1,0,0,\cdots,0)\in \R^d$.
We consider the case where $\mu_1=se_1,\mu_2=-se_1,\mu_i=(0,\cdots,0)$ for $i\geq 3$, where $s$ is a positive real number.

Then, by symmetry, we have $\nabla_{\mu_i} \mathcal{L} = \boldsymbol{0}$ for all $i \geq 3$. Moreover, by \cref{lm:calculate}, the gradients $\nabla_{\mu_1} \mathcal{L}$ and $\nabla_{\mu_2} \mathcal{L}$ vanish on all coordinates except the first (because each gradient is a linear combinations of the $\mu_j$'s).
By symmetry, we have $\nabla_{\mu_1} \mathcal{L}+\nabla_{\mu_2} \mathcal{L}=\boldsymbol{0}$.

Notice that the losses for $s=0$ and $s=+\infty$ are both zero, by continuity, there must be an $s$ to maximize the loss. Therefore, for this $s$, the gradient vanishes for each $\mu_i$.
\qed

In what follows, we consider two cases: (1) all $\mu_i$ initialized close to $\boldsymbol{0}$ in Section~\ref{sec:all_converge}, and (2) random Gaussian initialization in Section~\ref{sec:random_init}.

\subsection{Initialization Exponentially Close to 0 Ensures Parameter Convergence}
\label{sec:all_converge}
In this section, we consider the case where the initialization is exponentially close to $\boldsymbol{0}$. We have the following theorem.

\begin{theorem}
\label{thm:global_allconverge}
Let $M_0:=\|\mu^*\|\geq 0$.
Assume the initialization satisfies $\|\mu_i^{(0)}\| \leq \frac{1}{10^8 n d \exp\left(10^6 n d M_0^3\right)}$ for all $i$. Then, for step size $\eta = O\left(\frac{1}{n^4 d^2}\right)$, gradient descent ensures that each $\mu_i$ converges to $\mu^*$.

%Specifically, after $T=O\left(\frac{n (\log n + \log M_0)}{\eta}\right)$ iterations, we have $\|\mu_i - \mu^*\| \leq \frac{1}{3}$ for all $i$. For any $\tau>T$, the loss satisfies
Specifically, let $T=O\left(\frac{n (\log n + \log M_0)}{\eta}\right)$. Then for any $\tau>T$,
\[
\mathcal{L}_0(s_0(\tau)) \leq \frac{1}{\sqrt{\gamma (\tau-T)}}, \quad \text{where } \gamma = \Omega\left(\frac{\eta}{n^6 d^4}\right).
\]
\end{theorem}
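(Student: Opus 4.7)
The plan is to shift coordinates so the ground truth sits at the origin, then track the dynamics of the ``geometric center'' of the parameter cluster. Concretely, set $\nu_i := \mu_i - \mu^*$; by the identity in \cref{eq:groundtruthzero}, the loss and the gradient descent update take the same form as in \cref{sec:localthm} but with new parameters $\nu_i$ and ground truth at $0$. Under the hypothesis, all $\nu_i^{(0)}$ lie within $\epsilon_0 := \max_i\|\mu_i^{(0)}\|$ of $-\mu^*$, which is exponentially small. I decompose $\nu_i^{(\tau)} = \bar{\nu}^{(\tau)} + \delta_i^{(\tau)}$ with $\bar{\nu} := \tfrac{1}{n}\sum_i \nu_i$ and $\sum_i \delta_i = 0$, and split the analysis into two phases: Phase A drives $\bar{\nu}$ from magnitude $\Theta(M_0)$ down to $\|\bar{\nu}\|\leq 1/(6n)$ while keeping $\max_i\|\delta_i\|\leq 1/(6n)$; Phase B, starting from $\max_i\|\nu_i\|\leq 1/(3n)$, replays the proof of \cref{thm:local} to obtain the claimed $1/\sqrt{\gamma(\tau-T)}$ rate.

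\textbf{Phase A: center and deviation dynamics.} On the synchronized manifold $\{\delta_i = 0\}$, every $w_i(x) = 1/n$ and $\boldv(x) = \bar{\nu}$ for all $x$, so the five terms in \cref{lm:calculate} telescope to $\nabla_{\nu_i}\calL = (2/n)\bar{\nu}$ and the center update reduces to the linear contraction $\bar{\nu}^{(\tau+1)} = (1-2\eta/n)\bar{\nu}^{(\tau)}$; this already yields $\|\bar{\nu}^{(T)}\|\leq 1/(6n)$ after $T = \calO\!\big(\tfrac{n(\log n + \log M_0)}{\eta}\big)$ iterations. Off the manifold, Taylor-expanding the gradient in \cref{lm:calculate} in $\delta$ around the synchronized state gives a coupled update in which $\bar{\nu}$ still contracts by $(1 - \Omega(\eta/n))$ modulo an $\calO(\|\delta\|)$ correction, while
\[
\max_i\|\delta_i^{(\tau+1)}\| \leq \big(1 + \calO(\eta d (1 + \|\bar{\nu}^{(\tau)}\|^3))\big)\max_j\|\delta_j^{(\tau)}\|,
\]
as long as the invariant $\max_i\|\delta_i^{(\tau)}\|\leq 1/(6n)$ holds. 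The $\|\bar{\nu}\|^3$ factor is the dominant contribution of the cubic term $3w_i(\boldv^\top\boldv)\boldv$ in \cref{lm:calculate}, and the $d$ factor enters through Gaussian moments of the coordinates of $x$ produced when the softmax weights $w_i(x)$ are linearized in $\delta$.

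\textbf{Completing the proof and main obstacle.} Multiplying the per-step growth factors along Phase A and using the geometric decay $\|\bar{\nu}^{(\tau)}\|\leq M_0(1-2\eta/n)^\tau$ amplifies $\max_i\|\delta_i\|$ by at most $\exp(\calO(\eta d\sum_{\tau=0}^T(1+\|\bar{\nu}^{(\tau)}\|^3)))$; a geometric-series estimate gives $\eta\sum_\tau\|\bar{\nu}^{(\tau)}\|^3 = \calO(nM_0^3)$ and $\eta T = \calO(n\log M_0)$, so the total amplification is $\exp(\calO(ndM_0^3))$, exactly the slack permitted by the hypothesis $\epsilon_0 \leq \exp(-\Omega(ndM_0^3))/(10^8 nd)$. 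Therefore $\max_i\|\delta_i^{(T)}\| \leq 1/(6n)$, and combined with $\|\bar{\nu}^{(T)}\|\leq 1/(6n)$ this gives $\max_i\|\nu_i^{(T)}\|\leq 1/(3n)$. From iteration $T$ onward the parameters lie in the small-parameter regime in which the proof sketch of \cref{thm:local} applies verbatim (now with ground truth $0$), delivering $\calL_0(s_0(\tau)) \leq 1/\sqrt{\gamma(\tau - T)}$ with $\gamma = \Omega(\eta/(n^6 d^4))$. The main technical obstacle is the Phase A deviation bound: because the cubic interactions in \cref{lm:calculate} make the local Jacobian of the $\delta$-dynamics scale like $(1+\|\bar{\nu}\|^3)/n$, any worst-case per-step estimate overshoots the amplification badly. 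Obtaining the correct $\exp(\calO(ndM_0^3))$ scaling requires integrating the true Jacobian norm against the contracting center trajectory, together with sharp Gaussian-moment estimates for the softmax-weighted polynomials produced by linearizing \cref{lm:calculate}.
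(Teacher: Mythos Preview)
Your plan is the paper's plan: shift so the ground truth is at $0$, track a contracting reference point for the parameter cluster, show the deviations from that reference inflate by at most $\exp(\calO(ndM_0^3))$ over Phase~A (exactly the slack in the initialization hypothesis), then invoke \cref{thm:local} once all parameters lie within $1/(3n)$ of the origin. The one substantive difference is the choice of reference. You use the empirical center $\bar\nu^{(\tau)}=\tfrac{1}{n}\sum_i\nu_i^{(\tau)}$; the paper instead maintains a \emph{deterministic} scalar $A^{(\tau)}$ with the closed-form decay $A^{(\tau+1)}=(1-\eta/n)A^{(\tau)}$ and the running invariant $\|\mu_i^{(\tau)}-A^{(\tau)}e_1\|\le B^{(\tau)}:=(10^8nd)^{-1}\exp\!\big(-10^6nd(A^{(\tau)})^3\big)$. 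The paper's version is a bit cleaner for two reasons: (i) the reference evolves independently of the deviations, so its contraction is exact rather than ``modulo an $\calO(\|\delta\|)$ correction''; and (ii) the gradient bound (\cref{lm:gradient_near}) follows from the elementary estimate $|w_i(x)-1/n|\le 4KB/n$ on the truncated region $\|x\|\le K=10^4ndA^2$, avoiding the Taylor/Jacobian machinery you describe as the main obstacle. One caution: your per-step growth bound $(1+\calO(\eta d(1+\|\bar\nu\|^3)))$ does \emph{not} follow from the loose invariant $\max_i\|\delta_i\|\le 1/(6n)$ you state---linearizing the softmax requires $\max_i\|\delta_i\|\cdot K\ll 1$, i.e., the deviations must stay exponentially small throughout Phase~A. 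They do (by your own amplification calculation), but your induction hypothesis should carry this tighter bound; the paper does this by making $B^{(\tau)}$ itself the invariant.
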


\cref{thm:global_allconverge} establishes that, under exponentially small initialization, all $\mu_i$ converge to the ground truth. Moreover, once the parameters are sufficiently close, the loss decreases at a rate of $O(1/\sqrt{\tau})$. The proof is deferred to Appendix~\ref{sec:global_allconverge_proof}.

\textbf{Proof Sketch. }
The main intuition of the proof is to maintain a reference point that converges to the ground truth and to show that all parameters stay close to this point throughout training.
%\simon{add high-level intuitions about the proof strategy first.}

To be more precise, the proof proceeds as follows.
By \cref{eq:groundtruthzero}, we can change the case to $\mu^*=0$, and there exists a $\|\mu\|=M_0$ such that for each $i$, $\|\mu_i-\mu\| \leq \frac{1}{10^8 n d \exp\left(10^6 n d M_0^3\right)}$.
By applying an orthonormal transformation (i.e., a rotation in $\mathbb{R}^d$), we may assume without loss of generality that $\mu = M_0 e_1$. Recall that $e_1=(1,0,\cdots,0)\in \R^d$.

We maintain a value $A^{(\tau)}$ initialized by $A^{(0)}=M_0$ and the fact that as long as $A^{(\tau)}\geq \frac{1}{6n}$, we always have $\|\mu_i^{(\tau)}-A^{(\tau)}e_1\|\leq \frac{1}{10^8 n d \exp\left(10^6 n d (A^{(\tau)})^3\right)}$.
We first need the following lemma which shows the gradients for each $\mu_i$ are close.

\begin{lemma}[See also \cref{lm:gradient_near}]
\label{lm:sketch_gradient_near}
Let $A>\frac{1}{6n},K=10000ndA^2,B=\frac{1}{10^8n^2d\exp(10^6ndA^3)}$. Assume that for any $1\leq i\leq n$, $\|\mu_i-Ae_1\|\leq B$.
Then for any $i$, we have that
\[
\|\nabla_{\mu_i}\calL-\frac{1}{n}Ae_1\|\leq \frac{6KBA}{n}.
\]
\end{lemma}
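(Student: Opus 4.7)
The plan is to Taylor expand the gradient formula of \cref{lm:calculate} around the \emph{centered configuration} where every $\mu_j$ is replaced by $A e_1$, and to bound the discrepancy introduced by the perturbation $\hat{\mu}_j := \mu_j - A e_1$ with $\|\hat{\mu}_j\| \leq B$. At the centered point, every weight satisfies $w_j(x) = 1/n$ and $\boldv(x) \equiv A e_1$ independently of $x$, so the integrand in \cref{lm:calculate} is $x$-independent. A direct substitution into the five-term formula shows that the four cubic contributions have scalar coefficients $1 - 2 - 2 + 3 = 0$ after factoring out the common $A^3/n$, so they cancel exactly; what remains is only the linear piece $2\,\E[w_i \boldv] = 2 \cdot \tfrac{1}{n} A e_1$, which matches the target vector (up to the overall normalization that appears in the statement). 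This is the structural reason why the lemma expects the leading gradient to be a pure multiple of $e_1$ independent of $i$.

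Next I would quantify how each ingredient of the gradient changes when we perturb $\mu_j$ from $A e_1$ to $A e_1 + \hat{\mu}_j$. The polynomial ingredients are straightforward: since $\boldv(x)$ is a convex combination of the $\mu_j$'s, $\|\boldv(x) - A e_1\| \leq \max_j \|\hat{\mu}_j\| \leq B$ uniformly in $x$, and every cubic monomial appearing in \cref{lm:calculate} (such as $\mu_j \mu_j^\top \mu_i$, $\boldv^\top \boldv$, or $\boldv^\top \mu_j$) differs from its centered value by $O(A^2 B)$ uniformly in $x$. Multiplying these out and summing over the five terms yields an $O(A^3 B)$-size contribution to the perturbation. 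The substantive step is bounding how the softmax weights $w_j(x)$ deviate from $1/n$. Using the identity
\[
\exp\!\left(-\tfrac{1}{2}\|x - \mu_j\|^2\right) = \exp\!\left(-\tfrac{1}{2}\|x - A e_1\|^2\right) \cdot \exp\!\left((x - A e_1)^\top \hat{\mu}_j - \tfrac{1}{2}\|\hat{\mu}_j\|^2\right),
\]
a first-order expansion of the softmax yields $|w_j(x) - 1/n| \lesssim B \cdot \|x - A e_1\|$, provided this quantity is small.

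The main obstacle is that under $x \sim \calN(\boldsymbol{0},\Id)$ the quantity $\|x - A e_1\|$ can be much larger than $A + \sqrt{d}$ on a set of nontrivial probability, so the first-order expansion fails uniformly in $x$. To handle this, I would split $\E_x$ into a bulk event $\{\|x - A e_1\| \leq R\}$ and its complement. On the bulk, the softmax expansion combined with the $O(A^3)$ worst-case size of each polynomial factor gives a per-term perturbation of order $B R A^3$, which sums to $O(KBA/n)$ when $R$ is chosen of order $\sqrt{n d A^3}$ (matching $K = 10000 n d A^2$). On the tail, I would bound the integrand by its trivial envelope (using $w_j \leq 1$ and the cubic-in-$A$ polynomial bound) and absorb the contribution via the Gaussian tail estimate $\Pr[\|x - A e_1\| > R] \leq \exp(-\Omega(R^2))$. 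The exponential factor $\exp(10^6 n d A^3)$ appearing in the hypothesis on $B$ is precisely what ensures $B \cdot \exp(R^2)$ stays under control at the balance point $R^2 \asymp n d A^3$; calibrating these two exponentials against each other is the delicate step and is what forces the hypothesis on $B$ to take its specific doubly-exponential form.
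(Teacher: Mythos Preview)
Your overall strategy---truncate the expectation on a bulk event, control $|w_j(x)-1/n|$ there, and handle the complement by a trivial envelope times a Gaussian tail---is exactly the paper's. The structural observation that the four cubic coefficients collapse as $1-2-2+3=0$ is also correct and is why the centered gradient equals $\tfrac{1}{n}Ae_1$ (the paper's proof silently drops the factor $2$ from \cref{lm:gradient}, matching your ``up to overall normalization'' remark).

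What does not go through as written is your bulk estimate ``per-term perturbation of order $BRA^3$.'' Multiplying the $O(BR)$ weight deviation by the raw $O(A^3)$ size of each cubic monomial, and then setting $R\asymp\sqrt{ndA^3}$, produces $B\sqrt{nd}\,A^{9/2}$, which is \emph{not} $O(KBA/n)=O(dA^3B)$ once $A$ is large; the claimed calibration fails. The fix is to use the cancellation you identified not only at the centered configuration but uniformly at the perturbed one: since every $\mu_j$ and also $\boldv(x)$ (being a convex combination of the $\mu_j$'s) lie within $B$ of the common vector $Ae_1$ for \emph{all} $x$, the entire cubic block differs from zero by at most $O(A^2B)$, independently of how much the weights move. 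The paper makes this explicit by invoking the intermediate identity from the proof of \cref{lm:gradient}, in which every cubic summand already carries a manifest factor $\mu_1-\boldv$ or $\mu_j-\boldv$ of size $O(B)$; then the linear piece $w_i\boldv$ contributes the dominant $O(KBA/n)$ while the cubic block contributes only $O(A^2B/n)$, which is subdominant because $A\le K$.

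A secondary point: the paper truncates on $\{\|x\|\le K\}$ rather than on $\{\|x-Ae_1\|\le R\}$. Since $x\sim\calN(\boldsymbol{0},\Id)$, your tail claim $\Pr[\|x-Ae_1\|>R]\le\exp(-\Omega(R^2))$ is not correct as stated---the exponent should be $(R-A-\sqrt{d})^2/2$, because $\|x-Ae_1\|$ concentrates near $\sqrt{d+A^2}$, not near zero. Splitting on $\|x\|$ avoids this issue entirely and yields $\Pr[\|x\|>K]\le\exp\bigl(-(K-\sqrt{d})^2/2\bigr)\le B^2$ directly, which times the trivial envelope $O(A^3)$ is easily below $KBA/n$.
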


Let the update of $A$ be $A^{(\tau+1)}=A^{(\tau)}-\frac{\eta}{n}A^{(\tau)}$.
Then by this lemma, we can bound $\|\mu_i^{(\tau+1)}-A^{(\tau+1)}e_1\|$ by 
$\|\mu_i^{(\tau+1)}-A^{(\tau+1)}e_1\|\leq \|\mu_i^{(\tau)}-A^{(\tau)}e_1\|+\eta\|\nabla_{\mu_i}\calL-\frac{1}{n}A^{(\tau)}e_1\|$. 

Therefore, we can prove that as long as $A^{(\tau)}>\frac{1}{6n}$, we always have $\|\mu_i^{(\tau)}-A^{(\tau)}e_1\|\leq \frac{1}{10^8 n d \exp\left(10^6 n d (A^{(\tau)})^3\right)}$ by induction. When $A^{(\tau)}\leq\frac{1}{6n}$, we can view it as the initialization in \cref{thm:local}, so directly applying \cref{thm:local} gives the result. \qed

One may notice that requiring $\|\mu_i\|$ to be exponentially small is too strong. However, we can prove that the exponential term is necessary. To establish this exponential dependence, we analyze the scenario where $M_0$ is large.

\begin{theorem}
\label{thm:exp_necessary}
Let $M_0>10^{10}\sqrt{d}\cdot n^{10}$ and $\mu^*=(M_0,0,\cdots,0)$. Assume that the initialization is $\mu_1^{(0)}=(\epsilon_0, 0, 0,\cdots,0)$ and $\mu_i^{(0)}=(0, 0, 0,\cdots,0)$ for $i=2,\cdots,n$, where $\epsilon_0=\exp(-M_0/100)$.
Then, as long as the step size $\eta< 1/(10M_0)$, $\mu_1$ converges to $\mu^*$, and $\|\mu_i\|\to\infty$ for $i=2,\cdots,n$.
\end{theorem}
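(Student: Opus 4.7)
My plan has three stages: a symmetry reduction to a scalar two-variable system, a Phase~I analysis showing the initial $\epsilon_0$-scale gap between $\mu_1$ and the other $\mu_i$ grows to a macroscopic value, and a Phase~II analysis showing $\mu_1 \to \mu^*$ while $\|\mu_i\|\to\infty$ for $i\geq 2$. I first exploit two symmetries. Since the loss and the initialization are symmetric in $\mu_2,\ldots,\mu_n$, an induction shows $\mu_2^{(\tau)} = \cdots = \mu_n^{(\tau)}$ for every $\tau$. Because $\mu^* = M_0 e_1$ and the initialization lies on the $e_1$-axis, the invariance of the loss under rotations fixing $e_1$ forces the iterates to stay on that axis as well. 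Writing $\mu_1^{(\tau)} = a^{(\tau)} e_1$ and $\mu_i^{(\tau)} = b^{(\tau)} e_1$ for $i\geq 2$, the transverse coordinates integrate out of the softmax weights and the dynamics collapse to the scalar pair $(a,b)$, driven by 1D expectations over $y := x_1 \sim \calN(M_0, 1)$. A direct computation of $\nabla_{\mu_i}\calL$ yields the updates $a^{(\tau+1)} = a^{(\tau)} - \eta g_a(a,b)$ and $b^{(\tau+1)} = b^{(\tau)} - \eta g_b(a,b)$, where $g_a,g_b$ are explicit functions of $a,b$ and the scalar softmax weights $W(y) := w_1$ and $W_2(y) := w_i$ ($i\ge 2$), with $W + (n-1)W_2 \equiv 1$ and predicted mean $v(y) := W(y)a + (1-W(y))b$.

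For Phase~I, I track the gap $\alpha^{(\tau)} := a^{(\tau)} - b^{(\tau)}$, starting at $\alpha^{(0)} = \epsilon_0 = e^{-M_0/100}$. Using the identity $\log(W/W_2)(y) = \alpha\bigl(y - (a+b)/2\bigr)$ and Taylor-expanding in $\alpha$ gives $W - W_2 = \tfrac{\alpha}{n}\bigl(y - (a+b)/2\bigr) + O(\alpha^2)$. Because $y$ concentrates near $M_0$ and $(a+b)/2$ stays small throughout this phase, the dominant contribution to $g_a - g_b$ comes from the first-order term $2\,\E[(W - W_2)(v - M_0)]$, which evaluates to $-C\,\alpha\, M_0^2/n$ for an explicit constant $C>0$. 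This yields the geometric recursion $\alpha^{(\tau+1)} \ge \alpha^{(\tau)}\bigl(1 + C\eta M_0^2/n\bigr)$, so after $T_1 = O\bigl(n/(\eta M_0)\bigr)$ iterations $\alpha$ reaches $\Omega(1/M_0)$. During this same window, $a$ and $b$ each move by at most $O(\eta M_0/n)$ per step and so drift by only $O(1)$ in total, staying inside the small-parameter regime that validates the Taylor expansion.

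Once $\alpha M_0 \gg 1$, Phase~II takes over: for typical $y$ one has $W(y)\approx 1$, so $v(y)\approx a$ and $g_a\approx 2(a - M_0)$. The step-size condition $\eta < 1/(10 M_0)$ then makes the $a$-dynamics a geometric contraction toward $M_0$. For $b$, the first-order piece $2\,\E[W_2(v - M_0)]$ is exponentially small in $\alpha M_0$, but the cubic piece $-2\,\E[W_2\, W\, (a - b)(y - b)(v - M_0)]$ is strictly positive because the three factors $a - b$, $y - b$, and $M_0 - v$ are all positive for typical $y$. A lower bound on this cubic piece of order $(M_0 - a)(M_0 - b)^2\,\E[W_2]$ shows that the push on $b$ does not vanish faster than the scale required, driving $b \to -\infty$ and hence $\|\mu_i\| \to \infty$ for $i \geq 2$.

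The main obstacle is Phase~I: one must justify the first-order Taylor expansion of the softmax weights uniformly over $\Theta\bigl(n/(\eta M_0)\bigr)$ iterations while simultaneously tracking the drift of $(a,b)$. The $O(\alpha^2)$ remainder has to be kept small compared to the $O(\alpha)$ leading term throughout --- even though $\alpha$ grows by a factor of $\exp(M_0/100)/M_0$ over this phase --- and $(a,b)$ must not drift out of the small-parameter regime before $\alpha$ has grown. A joint induction on $\bigl(\alpha^{(\tau)},\max(|a^{(\tau)}|,|b^{(\tau)}|)\bigr)$ is required, and this is where the quantitative hypothesis $M_0 > 10^{10}\sqrt{d}\,n^{10}$ provides the slack needed in the perturbation bounds.
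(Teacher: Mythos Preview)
Your symmetry reduction and two-phase strategy match the paper's approach, but both phases contain quantitative errors. In Phase~I your claims $T_1 = O(n/(\eta M_0))$ and ``drift by $O(1)$'' fail in the relevant regime $\eta M_0^2/n \gg 1$ (forced by $\eta < 1/(10M_0)$ and $M_0 > 10^{10}\sqrt d\,n^{10}$): the per-step growth factor $1 + C\eta M_0^2/n$ is then large, so $\alpha$ needs $T_1 \sim M_0/\log(M_0/n)$ steps to climb from $e^{-M_0/100}$ to $1/M_0$, and over that many steps $(a,b)$ drift by $\Theta\bigl(M_0/(n\log M_0)\bigr)$, not $O(1)$. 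The paper avoids tracking the drift directly by working in the shifted frame $\mu^*=0$, $M:=M_0-b$, $\epsilon:=\alpha$, and observing that the product $e^{M}\epsilon$ is non-decreasing (since $\epsilon$ grows by a factor $1+\eta M^2/(3n)$ while $M$ drops by at most $2\eta M$). This invariant immediately gives $\epsilon\geq 1$ before $M$ falls to $2M_0/3$, bypassing any drift bookkeeping. Your joint-induction idea can be repaired, but only with the weaker invariant ``drift $\ll M_0$'', not ``drift $O(1)$''.

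In Phase~II your lower bound on the push on $b$, of order $(M_0-a)(M_0-b)^2\,\E[W_2]$, carries an erroneous $(M_0-a)$ factor: as $a\to M_0$ this vanishes, and you cannot conclude $b\to-\infty$. The actual mechanism is that the dominant contribution to $g_b$ comes not from typical $y\approx M_0$ (where $W\approx 1$ and $v\approx a$) but from $y\approx (M_0+b)/2$, where $W\approx 1/n$ and hence $v-M_0\approx (n-1)(b-M_0)/n$; the resulting push is of order $(M_0-b)^3 e^{-(M_0-b)^2/8}/n^{O(1)}$, independent of $(M_0-a)$. The paper does not re-derive this: once $\epsilon>1$ with $M>10^9\sqrt d\,n^{10}$, the hypotheses of Theorem~\ref{thm:one_converge} hold (with the current $M$ playing the role of $M_0$ there), and that theorem is invoked directly to finish.
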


\cref{thm:exp_necessary} shows that there is an initialization where all parameters are exponentially close to $\boldsymbol{0}$, but not all parameters converge to the ground truth. By the above two theorems, we can see that although it is possible for all parameters to converge to the ground truth, the requirement for the initialization is very strict. This result illustrates that over-parameterization under small noise can lead to unstable convergence behavior. The proof of this theorem is deferred to Appendix~\ref{sec:exp_necessary_proof}.

\textbf{Proof Sketch.}
Similar to the discussion 
above, we can assume that the ground truth is $\boldsymbol{0}$ and the initialization is $\mu_1=(M_0-\epsilon_0,0,\cdots,0),\mu_2=(M_0,0,\cdots,0)$.

In each step let $\mu_1=(M-\epsilon,0,\cdots,0),\mu_2=(M,0,\cdots,0)$.
Through a careful analysis, we can lower bound the rate of increase in $\epsilon$ and upper bound the rate of decrease in $M$ during each iteration. At last, when $\epsilon$ is large enough, \cref{thm:one_converge} gives the result by considering a different $M_0$. \qed

\subsection{Random Initialization Far from Ground Truth Still Ensures Loss Convergence}
\label{sec:random_init}
In this section, we consider the case where the initialization is random and far from the ground truth. Specifically, we show that if each parameter is initialized independently from $\calN(\mu, \Id)$, where $\mu$ is far from $\mu^*$, then with high probability, exactly one $\mu_i$ converges to the ground truth, while all other $\mu_j$'s diverge to infinity.

\begin{theorem}
\label{thm:one_converge}
Let $M_0=\|\mu-\mu^*\|\geq 10^9\sqrt{d}\cdot n^{10}$ ($\mu$ is an arbitrary vector in $\R^d$). Let the initialization satisfy:
(1) $\|\mu_j^{(0)} - \mu\| \leq M_0^{1/3}$ for all $j$, and  
(2) there exists $i_0$ such that $\|\mu_j^{(0)} - \mu^*\| \geq \|\mu_{i_0}^{(0)} - \mu^*\| + M_0^{-1/3}$ for all $j \neq i_0$.
Then, as long as the step size $\eta< 1$, we have that $\mu_{i_0}$ converges to $\mu^*$, and $\|\mu_j\|\to\infty$ for $j\neq i_0$.
Also, there exists $T=O(M_0^2/\eta)$ such that when $\tau>T$, the loss $\calL_0(s_0)$ converges with rate
\[
\calL_0(s_0(\tau)) \leq \frac{1}{\eta(\tau-T)}.
\]
\end{theorem}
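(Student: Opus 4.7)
I follow the phased strategy outlined in the introduction: first isolate the parameter $\mu_{i_0}$ with the smallest initial distance to $\mu^{*}$ and show that under gradient descent it drives to $\mu^{*}$ while every other $\mu_j$ is pushed to infinity, and then deduce the $O(1/\tau)$ loss rate from a descent inequality. By the shift argument in~(\ref{eq:groundtruthzero}) I assume $\mu^{*}=\boldsymbol{0}$; the initialization then places every $\mu_j^{(0)}$ in a ball of radius $M_0^{1/3}$ around some common reference vector of norm $M_0$, with $\|\mu_{i_0}^{(0)}\|+M_0^{-1/3}\le \|\mu_j^{(0)}\|$ for every $j\ne i_0$.

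\textbf{Step 1: weight dominance.} Using the identity
\[
\frac{w_{i_0}(x)}{w_j(x)}=\exp\!\Bigl(\tfrac{1}{2}(\|\mu_j\|^{2}-\|\mu_{i_0}\|^{2})-x^{\top}(\mu_j-\mu_{i_0})\Bigr),
\]
the squared-norm gap is $\Omega(M_0^{2/3})$ while $\|\mu_j-\mu_{i_0}\|\le 2M_0^{1/3}$ forces $|x^{\top}(\mu_j-\mu_{i_0})|=O(\sqrt{d}\,M_0^{1/3})$ with overwhelming Gaussian probability. Because $M_0\ge 10^{9}\sqrt{d}\,n^{10}$, the exponent is strongly positive on the bulk of $x$, and standard Gaussian tail bounds yield $\E_x[1-w_{i_0}(x)]\le \exp(-\Omega(M_0^{2/3}))$ together with $\E_x[w_j(x)]\le \exp(-\Omega(M_0^{2/3}))$. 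This quantitative dominance drives every later step.

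\textbf{Step 2: gradient dynamics and invariants.} Plugging the dominance into \cref{lm:calculate}, the five terms contributing to $\nabla_{\mu_{i_0}}\calL$ collapse, up to exponentially small error, into a leading contribution of the form $c\,\mu_{i_0}$ with $c=\Theta(1)$, giving the strict contraction $\mu_{i_0}^{(\tau+1)}=(1-\Theta(\eta))\mu_{i_0}^{(\tau)}$ for any $\eta<1$. For $j\ne i_0$ every term in $\nabla_{\mu_j}\calL$ carries a factor of $w_j$; a careful decomposition of the expectation into the bulk (where $\boldv(x)\approx \mu_{i_0}$ and the cubic $\|\boldv\|^{2}\boldv$ term dominates) and the tail $x\approx \mu_j$ (where $\boldv(x)\approx \mu_j$ and the cross terms cancel) shows the net radial component of $\nabla_{\mu_j}\calL$ is outward, so $\|\mu_j^{(\tau)}\|$ is nondecreasing once the transient has passed. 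I then run a joint induction on $\tau$ maintaining (a) $\|\mu_{i_0}^{(\tau)}\|<\|\mu_j^{(\tau)}\|$ with a uniform positive gap, so the dominance in Step~1 persists, (b) $\|\mu_{i_0}^{(\tau)}\|\to 0$, and (c) $\min_{j\ne i_0}\|\mu_j^{(\tau)}\|\to\infty$. The bookkeeping for the initial transient, including the time needed for the outward radial push to settle into its asymptotic regime, absorbs into the burn-in $T=O(M_0^{2}/\eta)$.

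\textbf{Step 3 and main obstacle: loss rate.} For $\tau>T$, write $\boldv(x)=\mu_{i_0}+\sum_{j\ne i_0}w_j(x)(\mu_j-\mu_{i_0})$ so that $\calL=\E\|\boldv\|^{2}$ splits into a geometrically vanishing $\|\mu_{i_0}\|^{2}$ contribution and a slow tail from the rare events $\{x\approx\mu_j\}$. I combine the standard descent inequality $\calL^{(\tau+1)}\le\calL^{(\tau)}-\tfrac{\eta}{2}\sum_i\|\nabla_{\mu_i}\calL\|^{2}$ with a quadratic lower bound $\sum_i\|\nabla_{\mu_i}\calL\|^{2}\ge \Omega(\calL^{2})$, obtained by writing both the gradient magnitudes and the loss itself in terms of the same $w_j\|\mu_j\|$-type quantities. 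Telescoping $1/\calL^{(\tau+1)}-1/\calL^{(\tau)}\ge \Omega(\eta)$ then delivers the advertised bound $\calL\le 1/(\eta(\tau-T))$. The hardest part is precisely this quadratic (rather than linear) relation between $\sum_i\|\nabla_{\mu_i}\calL\|^{2}$ and $\calL$: a linear bound would give geometric convergence, contradicting the matching $\Omega(1/\tau^{1+\epsilon})$ lower bound in \cref{thm:rate_bound}. The obstruction is exactly the cubic-interaction phenomenon flagged in the key challenges: once $\mu_{i_0}$ is close to $\boldsymbol{0}$, the remaining descent direction for the diverging $\mu_j$'s is dominated by higher-order corrections that are exponentially damped by the growing $\|\mu_j\|$, and tracking this delicate balance between the growing norms and the shrinking weights is the technical core that produces exactly the slow $1/\tau$ rate.
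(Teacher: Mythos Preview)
Your high-level two-stage plan (drive $\mu_{i_0}\to\mu^*$ while the others diverge, then extract the rate) matches the paper, but two of the three steps have genuine gaps.

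\textbf{Step 2: the divergence mechanism is misidentified.} Your heuristic splits the expectation for $\nabla_{\mu_j}\calL$ into a ``bulk'' where $\boldv\approx\mu_{i_0}$ and the cubic term $\|\boldv\|^2\boldv$ dominates, and a ``tail $x\approx\mu_j$'' where cross terms cancel. Neither region produces the outward push. In the asymptotic regime $\mu_{i_0}\approx 0$, so on the bulk $\boldv(x)\approx 0$ and \emph{every} term in \cref{lm:calculate} is negligible; near $x\approx\mu_j$ the Gaussian mass is $\exp(-\|\mu_j\|^2/2)$, far too small to matter. The paper's actual argument (its Lemma establishing $\mu_j^\top\nabla_{\mu_j}\calL\le -c\,M^3\exp(-M^2/8)$) lives entirely in the \emph{intermediate} slab $x_1\in[M/2,M/2+1/M]$: there $w_j(x)=\Theta(1/n^5)$, $v_1(x)\in(M/10,(1-c/n^5)M)$, and the term $w_j\,v_1(D-v_1)(x_1-D)$ from the first line of \cref{lm:gradient} is strictly negative with density $\Theta(\exp(-M^2/8))$. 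This sign computation, together with delicate casework bounding the remaining positive contributions, is the technical core; your bulk/tail split misses it entirely. Relatedly, your invariant ``the dominance in Step~1 persists'' cannot hold as stated: Step~1 used $\|\mu_j-\mu_{i_0}\|\le 2M_0^{1/3}$, which fails once $\mu_{i_0}\to 0$ and $\|\mu_j\|\to\infty$, so Stage~2 needs a separate (and different) weight estimate, namely $\E[w_j]\le O(\exp(-M^2/8)/M)$.

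\textbf{Step 3: the descent-inequality route is unjustified and not what the paper does.} The inequality $\calL^{(\tau+1)}\le\calL^{(\tau)}-\tfrac{\eta}{2}\|\nabla\calL\|^2$ needs smoothness along the iterates, but the only smoothness result in the paper (\cref{lm:smooth}) requires $\max_i\|\mu_i\|<1/3$, which is violated here since $\|\mu_j\|\to\infty$. The paper avoids this entirely: it tracks $M=\min_{j\ne i_0}\|\mu_j\|$ directly, shows $M$ increases by at least $c\,\eta M^2\exp(-M^2/8)$ per step, and then proves that the surrogate $G_\tau:=14n\,M^{(\tau)}\exp(-(M^{(\tau)})^2/8)$ satisfies the self-referential recursion $G_{\tau+1}\le G_\tau-\eta G_\tau^2$. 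Since a separate lemma sandwiches $\calL$ by constant multiples of $M\exp(-M^2/8)$, telescoping $1/G$ gives $\calL\le 1/(\eta(\tau-T))$ with no smoothness needed.
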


Note that when all $\mu_i$'s are randomly initialized from $\calN(\mu,\Id)$, the probability that the initialization satisfies both conditions is at least $(1-n^2M_0^{-1/3})$ (see \cref{lm:random_init}). Therefore, we have the following corollary:

\begin{corollary}
\label{cor:random_init}
Let all $\mu_i$ be initialized by $\calN(\mu,\Id)$ and step size $\eta<1$. Let $M_0=\|\mu-\mu^*\|>10^9\sqrt{d}\cdot n^{10}$. Then with probability at least $(1-n^2M_0^{-1/3})$, there exists $T=O(M_0^2/\eta)$ such that when $\tau>T$, the loss $\calL_0(s_0)$ converges with rate
\[
\calL_0(s_0(\tau)) \leq \frac{1}{\eta(\tau-T)}.
\]
Moreover, one $\mu_i$ converges to $\mu^*$, while the others diverge to infinity.
\end{corollary}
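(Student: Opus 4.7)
My plan is to derive Corollary~\ref{cor:random_init} as an immediate consequence of Theorem~\ref{thm:one_converge} by invoking Lemma~\ref{lm:random_init}, which certifies that the two deterministic initialization conditions required by Theorem~\ref{thm:one_converge} hold simultaneously with probability at least $1 - n^2 M_0^{-1/3}$ under the specified Gaussian initialization. Once this probability bound is in hand, the loss convergence rate $\calL_0(s_0(\tau)) \le 1/(\eta(\tau - T))$ and the ``one-converges / others-diverge'' split both transfer verbatim. Essentially all the real work, therefore, sits inside the proof of Lemma~\ref{lm:random_init}, which I would organize into separate verifications of the two conditions followed by a final union bound.

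\textbf{Verification of condition (1).} First I would write $\mu_j^{(0)} = \mu + g_j$ with $g_j \sim \calN(\boldsymbol{0}, \Id)$, so that $\|\mu_j^{(0)} - \mu\|^2 = \|g_j\|^2$ is $\chi^2_d$-distributed. Since $M_0^{1/3} \gg \sqrt{d}$ under the assumption $M_0 > 10^9 \sqrt{d}\cdot n^{10}$, a standard $\chi^2$ tail bound gives $\Pr[\|g_j\| > M_0^{1/3}] \le \exp(-\Omega(M_0^{2/3}))$; a union bound over $j\in\{1,\dots,n\}$ yields an exponentially small failure probability, easily absorbed into the $n^2 M_0^{-1/3}$ budget.

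\textbf{Verification of condition (2).} Next, expanding
\[
\|\mu_j^{(0)} - \mu^*\|^2 = M_0^2 + 2\langle \mu - \mu^*, g_j\rangle + \|g_j\|^2,
\]
one sees that for large $M_0$ the difference $\|\mu_j^{(0)}-\mu^*\| - \|\mu_k^{(0)}-\mu^*\|$ is, to leading order, $\langle (\mu-\mu^*)/M_0,\, g_j - g_k\rangle$, a one-dimensional Gaussian with variance $2$. By the standard anti-concentration estimate (bounded density of a fixed-variance Gaussian), the probability that this leading term lies in any interval of length $O(M_0^{-1/3})$ is itself $O(M_0^{-1/3})$. A union bound over the $\binom{n}{2}$ unordered pairs then yields the target $O(n^2 M_0^{-1/3})$ failure probability, and letting $i_0$ be the index achieving the minimum distance $\|\mu_j^{(0)}-\mu^*\|$ gives both the identification of $i_0$ and the required gap.

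\textbf{Main obstacle.} The delicate step is making the anti-concentration argument quantitative to the claimed accuracy $M_0^{-1/3}$. On the event from condition (1) where $\|g_j\|,\|g_k\| \le M_0^{1/3}$, the omitted $\|g_j\|^2 - \|g_k\|^2$ term contributes $O(M_0^{2/3})$ to the squared-distance difference; after dividing by $\sim 2 M_0$ this is an $O(M_0^{-1/3})$ perturbation, of the same order as the gap we are certifying. Carefully coupling the two events so that the anti-concentration bound still goes through on the joint good event --- and verifying that the absolute constants are small enough to yield $n^2 M_0^{-1/3}$ rather than a larger multiple --- is the only subtle point; once this is handled, the corollary itself is a one-line application of Theorem~\ref{thm:one_converge} on the good event.
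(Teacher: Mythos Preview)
Your proposal is correct and matches the paper's approach essentially line for line. The paper's proof of the corollary is literally a one-sentence invocation of Lemma~\ref{lm:random_init} followed by Theorem~\ref{thm:one_converge}; your sketch of Lemma~\ref{lm:random_init} (Gaussian tail bound for condition~(1), anti-concentration of the one-dimensional projection $\langle(\mu-\mu^*)/M_0,\,g_j-g_k\rangle$ for condition~(2), then a union bound over pairs) is precisely what the paper does after rotating so that $\mu-\mu^*=M_0e_1$, and the paper resolves the ``main obstacle'' you flag by requiring a factor-of-two buffer $|x_1-x_2|\ge 2M_0^{-1/3}$ in the first coordinate, which absorbs the $O(M_0^{-1/3})$ contribution from the remaining coordinates on the good event.
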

%\simon{add discussions.}

We also have a lower bound for the convergence rate, which shows that the $O(1/\tau)$ convergence bound is nearly optimal.

\begin{theorem}
\label{thm:rate_bound}
Let $M_0=\|\mu-\mu^*\|=10^9\sqrt{d}\cdot n^{10}$ and $\eta<1$. Assume that the initialization satisfies:
(1) $\|\mu_j^{(0)} - \mu\| \leq M_0^{1/3}$ for all $j$, and  
(2) there exists $i_0$ such that $\|\mu_j^{(0)} - \mu^*\| \geq \|\mu_{i_0}^{(0)} - \mu^*\| + M_0^{-1/3}$ for all $j \neq i_0$.
Let $\epsilon>0$ be a constant.
Then for any $c>0$ (it can depend on $n,d,M_0$, but not on $\tau$), when $\tau$ is large enough, we have
\[
\calL_0(s_0(\tau)) \geq \frac{c}{\tau^{1+\epsilon}}.
\]
\end{theorem}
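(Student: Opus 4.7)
The plan is to show that $\calL(\tau)$ decays at rate $\Theta(1/(\tau\,\mathrm{polylog}\,\tau))$, which is slower than any $1/\tau^{1+\epsilon}$. Assume without loss of generality $\mu^*=0$. By \cref{thm:one_converge} we have $\mu_{i_0}(\tau)\to 0$, and $\|\mu_j(\tau)\|\to\infty$ for each $j\neq i_0$. Fix any such diverging index $j_0$, write $M(\tau):=\|\mu_{j_0}(\tau)\|$ and $u:=\mu_{j_0}/M$. The argument has two parts whose exponential constants must match exactly.

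Part 1 is a loss lower bound of the form $\calL(\tau)\geq c_1\exp(-M(\tau)^2/8)$, obtained by integrating $(\boldv(x)\cdot u)^2 p(x)$ over a thin tube around the midpoint $x_0:=(M/2)u$. At $x_0$ the Gaussian density is $\Theta(\exp(-M^2/8))$; the weights $w_{i_0}(x_0)$ and $w_{j_0}(x_0)$ are each $\Omega(1/n)$ provided no other diverging $\mu_k$ clusters near $x_0$, which holds under the conditions of \cref{thm:one_converge} because the diverging parameters remain in the same direction cone as the initial ball around $\mu$. This gives $\boldv(x_0)\cdot u=\Omega(M)$, and the effective volume of the tube is $\Theta(1/M)$ (narrow along $u$ by the sech-squared profile of $w_{i_0}w_{j_0}$, order $1$ in the perpendicular directions), yielding $\calL(\tau)\geq\Omega(M\,e^{-M^2/8})$.

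Part 2 is the matching growth bound $M(\tau+1)^2-M(\tau)^2\leq c_2\,\eta\,M(\tau)^3\exp(-M(\tau)^2/8)$. Starting from the identity $\partial\boldv/\partial\mu_j=w_j I+w_j(\mu_j-\boldv)(x-\mu_j)^\top$ and applying Cauchy--Schwarz to $\nabla_{\mu_j}\calL=2\E[(\boldv-\mu^*)(\partial\boldv/\partial\mu_j)^\top]$, the problem reduces to controlling $\E[\|\partial\boldv/\partial\mu_{j_0}\|_{\mathrm{op}}^2]$; the crucial input is the sharp estimate $\E[w_{j_0}^2]=O(\exp(-M^2/8)/M)$ coming from the same peak at $x_0$, which produces an exponent of exactly $1/8$ matching Part~1. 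Setting $N:=M^2/8$ the recurrence becomes $\Delta N\leq c_3\eta\,N^{3/2}e^{-N}$; comparing to the antiderivative $\int e^t t^{-3/2}\,dt\sim e^N/N^{3/2}$ yields $N(\tau)\leq\log\tau+\tfrac{3}{2}\log\log\tau+O(1)$, hence $M(\tau)^2\leq 8\log\tau+O(\log\log\tau)$.

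Combining the two parts, $\calL(\tau)\geq c_1\,e^{-M(\tau)^2/8}\geq c_1\,e^{-\log\tau-O(\log\log\tau)}=\Omega(1/(\tau\,\mathrm{polylog}\,\tau))$, which exceeds $c/\tau^{1+\epsilon}$ for any constants $c,\epsilon>0$ once $\tau$ is sufficiently large. The hardest step is Part~2: the gradient formula of \cref{lm:calculate} contains several cubic terms in the $\mu_i$'s that must all be absorbed into a bound with the same exponent $1/8$ as in Part~1, since any slack there would translate directly into an unrecoverable gap in the final rate. A secondary subtlety is that other diverging $\mu_k$ can degrade $w_{j_0}(x_0)$ by crowding the midpoint; this is handled by choosing $j_0$ (or the integration direction $u$) adaptively so that the remaining diverging parameters lie in directions well-separated from $u$, using that the initialization places all $\mu_j^{(0)}$ within $M_0^{1/3}$ of $\mu$ so that the diverging directions remain clustered throughout training.
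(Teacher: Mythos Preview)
Your two-part plan---a loss lower bound $\Omega(Me^{-M^2/8})$ from the midpoint tube, a matching growth bound $\Delta M\leq O(M^2e^{-M^2/8})$, then a rate combination giving $\Omega(1/(\tau\,\mathrm{polylog}\,\tau))$---is exactly the paper's strategy, and you have correctly isolated the critical exponent $1/8$ that must match across the two parts.

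The gap is in what $M$ denotes. You fix one diverging index $j_0$ and track $M(\tau)=\|\mu_{j_0}(\tau)\|$, whereas the paper tracks $M=\min_{j\neq i_0}\|\mu_j\|$. With a fixed $j_0$ the two parts decouple as soon as the diverging norms are not all equal. If some $\mu_k$ has $\|\mu_k\|<\|\mu_{j_0}\|$, then (i)~$\mu_k$ can lie within distance $M/2$ of your midpoint $x_0=(M/2)u$, which collapses $w_{j_0}(x_0)$ and destroys the Part~1 lower bound; and (ii)~in your Cauchy--Schwarz estimate the factor $\sqrt{\calL}$ is governed by $\min_j\|\mu_j\|$ rather than $\|\mu_{j_0}\|$, so the growth of $\|\mu_{j_0}\|$ inherits an exponent $-\min_j\|\mu_j\|^2/16-\|\mu_{j_0}\|^2/16$, not $-\|\mu_{j_0}\|^2/8$. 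Your proposed fix (choosing $j_0$ so that the other diverging directions are well-separated from $u$) does not address this, since the obstruction concerns norms, not directions; the claim that the diverging directions remain clustered is neither proved in the paper nor needed.

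Taking $M=\min_{j\neq i_0}\|\mu_j\|$ repairs both issues at once. For Part~1 (the paper's \cref{lm:loss_bound} via \cref{lm:big_local}), after rotating so that the minimizing $\mu_j$ lies along $e_1$, every other $\mu_k$ satisfies $\|x_0-\mu_k\|\geq\|\mu_k\|-\|x_0\|\geq M-M/2=\|x_0-\mu_j\|$ by the triangle inequality, so $w_j(x_0)\geq 1/(10n^5)$ with no directional hypothesis. For Part~2 the paper does not go through Cauchy--Schwarz but bounds the gradient termwise from \cref{lm:gradient}: every cubic term is controlled via the uniform estimate $\E[w_i]\|\mu_i\|^3\leq 7M^2e^{-M^2/8}$ for all $i\neq i_0$ (\cref{lm:gradient_small}, using \cref{cor:small_s} when $\|\mu_i\|\leq 2M$ and \cref{cor:large_s} when $\|\mu_i\|>2M$), yielding $\|\nabla_{\mu_j}\calL\|\leq 700nM^2e^{-M^2/8}$ uniformly in $j$ and hence a clean recurrence for $M$ (\cref{lm:Mincrease}). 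Your integral heuristic $\int e^t t^{-3/2}\,dt\sim e^N/N^{3/2}$ is equivalent to, and a bit cleaner than, the paper's manipulation of $1/(Me^{-M^2/8})^{1-\epsilon}$ at the end of the proof of \cref{thm:global_negative}.
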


The above results show that if the parameters are initialized independently from a Gaussian distribution centered far from the ground truth, then with high probability, only one parameter converges to $\boldsymbol{0}$ while the others diverge to infinity. We further prove that the loss converges at a rate of $O(1/\tau)$, and establish a nearly matching lower bound of $\Omega(1/\tau^{1+\epsilon})$ for any constant $\epsilon > 0$.

Comparing the results in this section with those in \cref{sec:all_converge}, we observe a sharp contrast between the two regimes: one where all parameters converge, and one where only a single parameter does. This suggests that analyzing the training dynamics for initializations that lie between these two extremes may be challenging. Understanding this intermediate regime remains an interesting open question.

The full proofs of the above three results are complicated and can be found in Appendix \ref{sec:random_init_proof}, with a central preliminary lemma in Appendix \ref{sec:prepare_lemma}. Below we give a proof sketch:

\textbf{Proof Sketch. }
The proofs of the above theorems proceed by analyzing the training dynamics in two stages. In the first stage, we show that $\|\mu_{i_0}\|$ decreases more rapidly than the other $\|\mu_j\|$ values. As a result, $\mu_{i_0}$ becomes very close to the ground truth $\mu^*$, while the other $\mu_j$'s remain far from $\mu^*$. 

In the second stage, we prove that once $\mu_{i_0}$ is sufficiently close to $\mu^*$ and the others are far, the minimum distance $\min_{j \ne i_0} \|\mu_j - \mu^*\|$ increases over time. Also, $\mu_{i_0}$ remains close to $\mu^*$.

%\simon{add high-level intuitions about the proof strategy first.}
Here we discuss how to prove the above theorems in more detail.
Similar as the analysis in the previous sections, we can modify the setting such that the ground truth is $\mu^*=0$, and all $\mu_i$ are initialized near $M_0e_1$ (i.e., $\mu=M_0e_1$).
We can prove that in the initialization, with high probability:
\begin{enumerate}
\item all $\mu_i$'s satisfy $\|\mu_i-M_0\|\leq M_0^{1/3}$;
\item there exists $i$ such that for any $j\neq i$, we have $\|\mu_j\|\geq \|\mu_i\|+M_0^{-1/3}$.
\end{enumerate}
Without loss of generality, we assume that the $i$ in the second condition is 1.

We analyze the training dynamics in two stages. 
At the end of the first stage, we have $\|\mu_1\|\leq \frac{1}{8M_0^3}$ and for each $i\geq 2$, $\|\mu_i\|\geq M_0/2$.
In the second stage, in each iteration we let $M=\min_{i\geq 2}\|\mu_i\|$. We prove that we always have that $\|\mu_1\|\leq 1/M^3$, and $M$ always increases.

\textbf{Stage 1. }
At this stage, we need to show that $\|\mu_1\|$ decreases rapidly, while each $\mu_i$ with $i \geq 2$ changes slowly.

We first prove that in this stage, $\nabla_{\mu_1}\calL$ is near $\mu_1$ and $\nabla_{\mu_i}\calL$ for $i\geq 2$ are exponentially small.
Based on this fact, we can prove that in $O(\log M_0/\eta)$ iterations, $\|\mu_1\|$ becomes less than$\frac{1}{8M_0^3}$, while the decrease of $\|\mu_i\|$ for $i\geq 2$ are very small. Therefore, after $O(\log M_0/\eta)$ iterations, we have $\|\mu_1\|\leq \frac{1}{8M_0^3}$ and $\|\mu_i\|\geq \frac{M_0}{2}$ for other $i$.

\textbf{Stage 2. }
At this stage, we consider the update in one iteration. Let $M=\min_{i\geq 2}\|\mu_i\|$. In the first iteration of this stage, we have $M\geq \frac{M_0}{2}$. Therefore, we have $\|\mu_1\|\leq 1/M^3$ at first. We prove that the property maintains and $M$ increases by induction. Thus, we always have $M\geq 10^8\sqrt{d}\cdot n^{10}$, and $\mu_i$'s diverge for $i\geq 2$.
Moreover, by a careful analysis on the speed $M$ increases, we can prove the loss convergence rate.

\section{Experiments}
\label{sec:experiments}
\begin{figure}[t]
    \centering
    \begin{subfigure}{0.45\textwidth}
        \includegraphics[width=\linewidth]{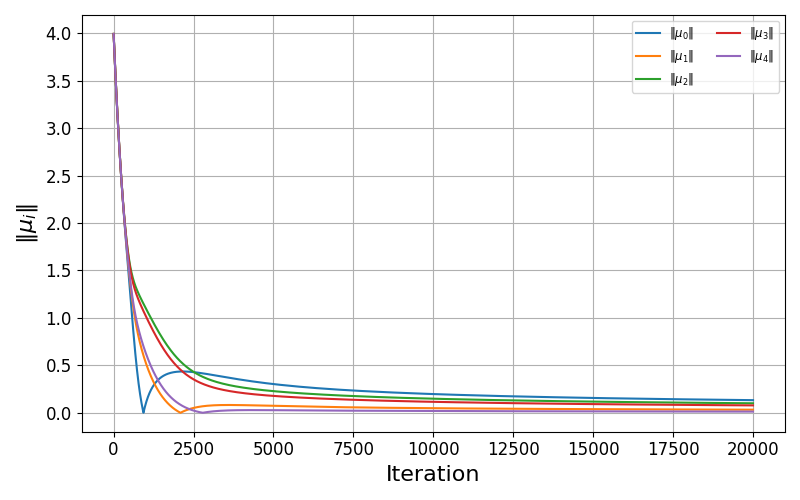}
        \caption{Norms $\|\mu_i\|$ vs. iteration for $M = 4$}
    \end{subfigure}
    \begin{subfigure}{0.45\textwidth}
        \includegraphics[width=\linewidth]{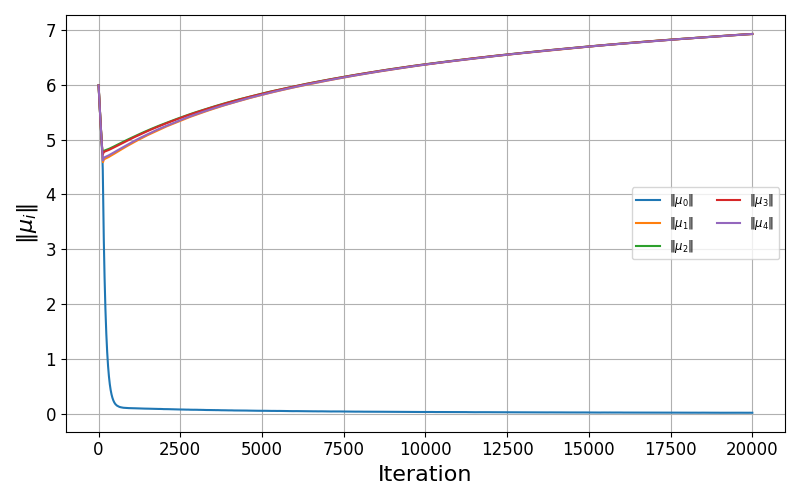}
        \caption{Norms $\|\mu_i\|$ vs. iteration for $M = 6$}
    \end{subfigure}

    \vspace{0.5em}

    \begin{subfigure}{0.45\textwidth}
        \includegraphics[width=\linewidth]{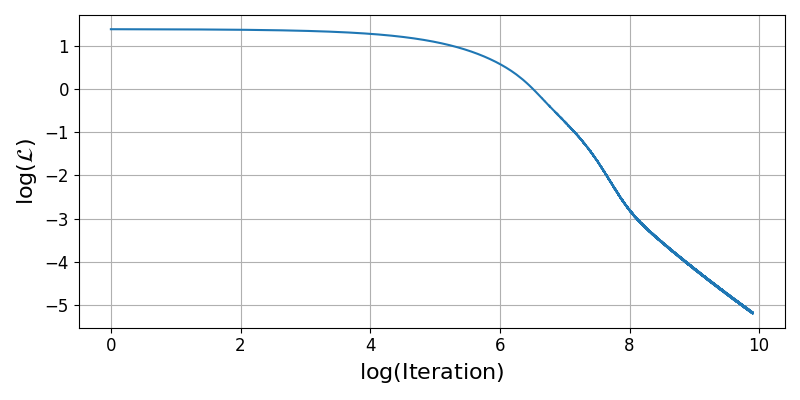}
        \caption{Log-log plot of loss vs. iteration for $M = 4$}
    \end{subfigure}
    \begin{subfigure}{0.45\textwidth}
        \includegraphics[width=\linewidth]{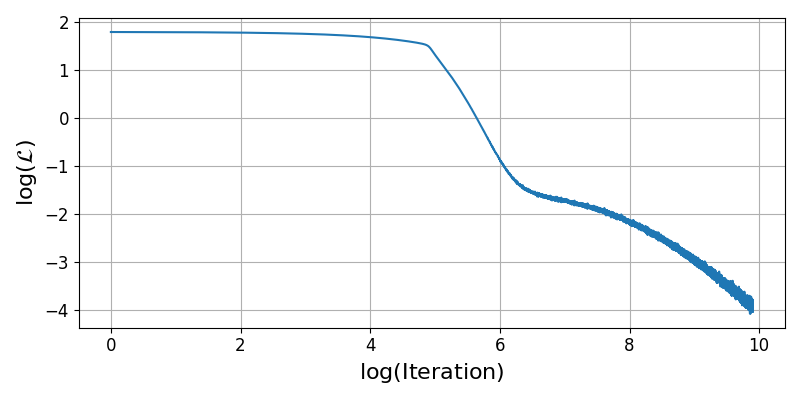}
        \caption{Log-log plot of loss vs. iteration for $M = 6$}
    \end{subfigure}

    \caption{Experimental results illustrating two convergence behaviors. (a, c) For $M=4$, all parameters converge to the ground truth, consistent with \cref{sec:all_converge}. (b, d) For $M=6$, only one parameter converges while the others diverge, consistent with \cref{sec:random_init}. The loss curve in (d) exhibits a convergence rate close to $O(1/\tau)$ in the final phase.}
    \label{fig:four_figs}
\end{figure}

%\maryam{in the figures, make the font of axes labels, titles, and legends a bit larger}
%\yiran{modified}
In this section, we conduct numerical experiments to illustrate the two convergence behaviors predicted by our theory. We set $n = 5$, $d = 3$, and take the ground-truth mean to be $\mu^* = (0, 0, 0)$. Each parameter $\mu_i$ is initialized independently as $(M, 0, 0) + 10^{-7} \cdot z_i$, where $z_i\sim \calN(\boldsymbol{0},\Id)$. The values of $M$ are indicated in the corresponding captions. 
%The exponentially small perturbation $10^{-7}$ ensures the initialization falls within the regime considered in \cref{sec:all_converge} when $M$ is small. 
The results are shown in Figure~\ref{fig:four_figs}.

Training is performed using gradient descent with learning rate $0.01$, for $20{,}000$ iterations, and a batch size of $20{,}000$. Figures (a) and (b) show the evolution of $\|\mu_i\|$, while Figures (c) and (d) display the log-log plot of the loss over iterations. Panels (a) and (c) correspond to the setting of \cref{sec:all_converge}, where all parameters converge to the ground truth. Panels (b) and (d) illustrate the behavior under the regime of \cref{sec:random_init}, where only one parameter converges and the others diverge. Although the perturbation is fixed at $10^{-7}$, it exceeds the critical threshold $\exp(-\mathrm{poly}(M))$ when $M = 6$, resulting in a qualitatively different outcome. Notably, in panel (d), the log-log loss curve approaches a linear slope of approximately $-1$ after a warm-up stage, consistent with a convergence rate close to $O(1/\tau)$.

\section{Conclusion}
\label{sec:discuss}
In this paper, we study the problem of learning a single Gaussian distribution using an over-parameterized student model trained with the score matching objective. We show that when the noise scale is large, gradient descent provably converges to the ground truth. In contrast, when the noise scale is small, the loss may still converge, but the training dynamics become unstable and highly sensitive to initialization. This highlights the importance of large noise in ensuring stable and predictable training dynamics in score-based generative models.

Although we focus exclusively on the case where the ground truth is a single Gaussian component, our analysis reveals that even this seemingly simple case exhibits rich and subtle dynamics under over-parameterization. Extending the theory to more complex ground-truth distributions, such as multi-component Gaussian mixtures, remains a challenging and important direction for future work.

There are also several additional directions suggested by our analysis. First, in practical diffusion models, training is performed using a time-averaged score matching loss that aggregates gradients from multiple noise levels. Our results provide a characterization of the gradient dynamics at a fixed time, and understanding how these fixed-time components interact under a full $t$-averaging scheme is an interesting and nontrivial next step. Second, while our theoretical study focuses on gradient descent, score-based models are trained in practice using stochastic or adaptive gradient methods. Developing a theoretical framework that captures the behavior of these optimizers in the over-parameterized regime is an important open problem.

\section{Acknowledgements}
W. Xu, M. Zhou, and M. Fazel were partially supported by the award NSF TRIPODS II DMS-2023166.
M. Fazel's work was supported in part by awards CCF 2212261, CCF 2312775, and the Moorthy Family professorship. 
S. Du's work was supported in part by NSF CCF 2212261, NSF IIS 2143493,
NSF IIS 2229881, Alfred P. Sloan Research Fellowship, and Schmidt Sciences AI 2050 Fellowship.

% \section{Reproducibility Statement}
% All theoretical settings and assumptions are stated in \cref{sec:prelim}. The experiment details are in \cref{sec:experiments}.
% We also provide our code in the supplementary materials.

\bibliography{iclr2026_conference}
\bibliographystyle{iclr2026_conference}

\appendix

\newpage
\tableofcontents
\newpage
\section{Additional Preliminaries}
The following Gaussian tail bound is a direct consequence of Lemma 1 in \citet{laurent2000adaptive}.

\begin{lemma}[Gaussian Tail Bound]
\label{lm:gausstail}
Let $d$ be a positive integer. For any positive real number $t>\sqrt{d}$, we have
\[
\Pr_{x\sim\calN(\boldsymbol{0},\Id)}[\Vert x\Vert_2\geq t]\leq \exp\left(-(t-\sqrt{d})^2/2\right).
\]
\end{lemma}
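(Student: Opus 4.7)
The plan is to reduce the claim to the one-sided $\chi^2$ tail bound from Laurent--Massart (their Lemma 1) and then choose the right deviation parameter. Concretely, since $x\sim\calN(\boldsymbol{0},\Id)$, the squared norm $\|x\|_2^2$ is $\chi^2_d$-distributed. Laurent--Massart's Lemma 1 states that for every $u>0$,
\[
\Pr\bigl[\|x\|_2^2 \geq d + 2\sqrt{d\,u} + 2u\bigr] \leq \exp(-u).
\]
So it suffices to pick a $u$ for which $d+2\sqrt{du}+2u \leq t^{2}$, and then the event $\{\|x\|_2\geq t\}=\{\|x\|_2^2\geq t^2\}$ is contained in the Laurent--Massart event and inherits its bound.

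The natural choice is $u=(t-\sqrt{d})^2/2$, which is positive since the hypothesis $t>\sqrt{d}$. To verify $d+2\sqrt{du}+2u\leq t^2$, I would just compute
\[
\bigl(\sqrt{d}+\sqrt{2u}\bigr)^2 \;=\; d + 2\sqrt{2du} + 2u \;\geq\; d + 2\sqrt{du} + 2u,
\]
where the inequality uses $\sqrt{2}\geq 1$. With $\sqrt{2u}=t-\sqrt{d}$ by construction, the left-hand side equals $t^2$, giving $d+2\sqrt{du}+2u \leq t^2$ as needed. Plugging this $u$ into Laurent--Massart yields
\[
\Pr\bigl[\|x\|_2 \geq t\bigr]\;\leq\;\Pr\bigl[\|x\|_2^2\geq d+2\sqrt{du}+2u\bigr]\;\leq\;\exp(-u)\;=\;\exp\!\bigl(-(t-\sqrt{d})^2/2\bigr),
\]
which is exactly the claim.

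There is no real obstacle here: the argument is a two-line calibration of the Laurent--Massart bound, and the only thing to be careful about is making sure the elementary inequality $(\sqrt{d}+\sqrt{2u})^2 \geq d+2\sqrt{du}+2u$ goes in the direction we need (it does, via $\sqrt{2}\geq 1$), and that the condition $t>\sqrt{d}$ is used precisely to guarantee $u>0$ so that Laurent--Massart applies. If one wanted to avoid citing Laurent--Massart, an alternative would be a direct Chernoff/MGF computation using $\mathbb{E}[\exp(\lambda\|x\|_2^2)]=(1-2\lambda)^{-d/2}$ for $\lambda<1/2$, followed by optimizing $\lambda$; the same constant $(t-\sqrt{d})^2/2$ falls out after a short calculation. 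I would prefer the former route since the paper already cites Laurent--Massart.
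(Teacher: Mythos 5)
Your proposal is correct and matches the paper's approach: the paper simply states this lemma as a direct consequence of Lemma 1 in Laurent--Massart, and your choice $u=(t-\sqrt{d})^2/2$ together with the inequality $d+2\sqrt{du}+2u\leq(\sqrt{d}+\sqrt{2u})^2=t^2$ is exactly the calibration that makes that citation work. No gaps.
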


We also need the following Mill's inequality, which can be found in \citet{wasserman2013all}.
\begin{lemma}[Mill's Inequality]
\label{lm:mill}
For any $t>0$, we have
\[
\Pr_{x\sim\calN(0,1)}[x\geq t]\leq \frac{1}{t}\exp(-t^2/2).
\]
\end{lemma}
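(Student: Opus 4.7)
We work throughout in the Stage~2 regime established in the proof of \cref{thm:one_converge}: under the WLOG reduction $\mu^*=\boldsymbol{0}$, the distinguished parameter (WLOG $\mu_1$) satisfies $\|\mu_1^{(\tau)}\|\leq 1/(M^{(\tau)})^3$, and $M^{(\tau)}:=\min_{j\geq 2}\|\mu_j^{(\tau)}\|$ increases monotonically to infinity. Abbreviate $M=M^{(\tau)}$ and $\calL(\tau):=\calL_0(s_0(\tau))$. The strategy is to pair a sharp lower bound on $\calL$ with a matching upper bound on $\|\nabla\calL\|$, both of order $e^{-\alpha M^2}$ with the \emph{same} constant $\alpha>0$, and then integrate the resulting discrete differential inequality for $\calL$.

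\textbf{Step 1 (matching exponential bounds in $M$).} We establish the existence of $\alpha>0$ (concretely $\alpha=\tfrac{1}{8}$ suffices) and polynomials $p_1,p_2$ in $M$ with coefficients depending on $n,d$, such that once $M$ is large, $\calL(\tau)\geq p_1(M)\,e^{-\alpha M^2}$ and $\|\nabla_{\mu_j}\calL(\tau)\|\leq p_2(M)\,e^{-\alpha M^2}$ for every $j$. Taking $i^*\in\argmin_{j\geq 2}\|\mu_j\|$ and restricting the expectation defining $\calL$ to the tail event $E:=\{x:\langle\mu_{i^*}/\|\mu_{i^*}\|,x\rangle\geq\|\mu_{i^*}\|/2\}$ gives the lower bound: on $E$ we have $w_{i^*}(x)\geq w_1(x)\geq 1/n$ (since $\mu_1\approx\boldsymbol{0}$), and $\|\boldv(x)\|\gtrsim M/n$ because the $\mu_j$'s for $j\geq 2$ remain clustered throughout training (an invariant inherited from initialization condition~(1) of \cref{thm:rate_bound}); Mill's inequality (\cref{lm:mill}) gives $\Pr[E]\gtrsim e^{-M^2/8}/M$. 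For the gradient, every summand in \cref{lm:calculate} carries a factor $w_j(x)$, and the pointwise bound $w_j(x)\leq\min\bigl(1,\,e^{\mu_j^\top x-\|\mu_j\|^2/2+\|\mu_1\|^2/2}\bigr)$ combined with a Mill's-inequality-type integral (the mass of $w_j$ concentrates on the same tail event $\{\mu_j^\top x\geq\|\mu_j\|^2/2\}$) yields $\E[w_j(x)P(x)]\leq\mathrm{poly}(M)\,e^{-M^2/8}$ for any polynomially bounded integrand $P$; applied to all five gradient terms this gives the claim. The $\mu_1$-gradient is smaller still since $\|\mu_1\|\leq 1/M^3$.

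\textbf{Step 2 (ratio bound and discrete integration).} For any fixed $\delta\in(0,1)$, Step~1 yields $\|\nabla\calL(\tau)\|^2/\calL(\tau)^{1+\delta}\leq (n\,p_2(M)^2/p_1(M)^{1+\delta})\,e^{-(1-\delta)\alpha M^2}\to 0$ as $M\to\infty$, so there exist a threshold $\tau_0$ and a constant $K=K(\delta,n,d)$ with $\|\nabla\calL(\tau)\|^2\leq K\calL(\tau)^{1+\delta}$ for all $\tau\geq\tau_0$. A standard descent estimate (the local smoothness of $\calL$ at the iterate is polynomial in $M$, hence polylogarithmic in $\tau$, so $\eta L=o(1)$) gives $\calL(\tau+1)\geq \calL(\tau)-2\eta\|\nabla\calL(\tau)\|^2\geq \calL(\tau)\bigl(1-2\eta K\calL(\tau)^\delta\bigr)$. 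Setting $u_\tau:=\calL(\tau)^{-\delta}$ and using $(1-x)^{-\delta}\leq 1+2\delta x$ for small $x$, we obtain $u_{\tau+1}\leq u_\tau+4\delta\eta K$; telescoping gives $\calL(\tau)\geq C_\delta\,\tau^{-1/\delta}$ for all large $\tau$. Finally, given $\epsilon>0$, set $\delta:=1/(1+\epsilon/2)\in(0,1)$ so $1/\delta=1+\epsilon/2$; then $\calL(\tau)\geq C_\epsilon\,\tau^{-(1+\epsilon/2)}$, and for any prescribed $c>0$ we get $\calL(\tau)\geq c/\tau^{1+\epsilon}$ as soon as $\tau^{\epsilon/2}\geq c/C_\epsilon$, completing the proof.

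\textbf{Main obstacle.} The heart of the argument is obtaining the matching $e^{-\alpha M^2}$ factors on both sides in Step~1. The five cubic terms in \cref{lm:calculate}, bounded na\"ively via $w_j\leq 1$, would give only $\|\nabla\|\lesssim M^3$, which is useless here; the exponential improvement relies crucially on every term carrying a $w_j(x)$ factor together with a careful tail computation that is structurally identical to the Mill's-inequality estimate underlying the lower bound on $\calL$, which is exactly why the two exponents $\alpha$ coincide (if instead one used the weaker $\calL$-lower bound from restricting to a unit ball around $\mu_{i^*}$, the two exponents would be $1/2$ versus $1/8$ and the ratio bound would fail). A secondary technical point in the integration is controlling the local smoothness of $\calL$ within Stage~2, which follows from the slow growth $M\lesssim\sqrt{\log\tau}$ implicit in the proof of \cref{thm:one_converge}.
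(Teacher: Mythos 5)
There is a fundamental mismatch here: the statement you were asked to prove is \cref{lm:mill}, Mill's inequality, i.e.\ the elementary Gaussian tail bound $\Pr_{x\sim\calN(0,1)}[x\geq t]\leq \frac{1}{t}\exp(-t^2/2)$ for $t>0$. Your proposal never addresses this statement. Instead it sketches an argument for an entirely different result — the $\Omega(1/\tau^{1+\epsilon})$ lower bound on the loss convergence rate in the random-initialization regime (the last claim of \cref{thm:global_negative} / \cref{thm:rate_bound}), invoking the Stage~2 dynamics, the quantity $M^{(\tau)}$, and gradient estimates that are downstream \emph{consumers} of Mill's inequality, not a derivation of it. Using the lemma's conclusion (tail mass of order $e^{-M^2/8}/M$) inside an argument that is supposed to establish the lemma would in any case be circular; but more simply, nothing in your text bounds $\Pr[x\geq t]$ for a standard normal, so the statement is left unproved.

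For comparison, the paper does not prove \cref{lm:mill} at all — it cites a standard reference — and the proof is a one-line comparison that you should supply if a proof is wanted: for $x\geq t>0$ one has $1\leq x/t$, hence
\begin{align*}
\Pr_{x\sim\calN(0,1)}[x\geq t]
=\int_t^\infty \frac{1}{\sqrt{2\pi}}e^{-x^2/2}\,dx
\leq \frac{1}{t}\int_t^\infty \frac{x}{\sqrt{2\pi}}e^{-x^2/2}\,dx
=\frac{1}{t\sqrt{2\pi}}e^{-t^2/2}
\leq \frac{1}{t}e^{-t^2/2}.
\end{align*}
If your intent was to prove the rate lower bound, that is a different statement with its own proof in Appendix~\ref{sec:prepare_lemma} (via the evolution of $M^{(\tau)}\exp(-(M^{(\tau)})^2/8)$ together with \cref{lm:loss_bound}), and your sketch would need to be evaluated against that, not against \cref{lm:mill}.
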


We need the following Stein's lemma:
\begin{lemma}[Stein's Lemma, see \citet{stein1981estimation}]
\label{lm:stein}
For $x\sim\calN(\mu,\sigma^2I_d)$ and differentiable function $g:\R^d\rightarrow \R$, we have
\[
\E[g(x)(x-\mu)]=\sigma^2\E[\nabla_x g(x)],
\]
if the two expectations in the above identity exist.
\end{lemma}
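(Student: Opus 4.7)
The plan is to prove Stein's Lemma by reducing it to the classical one-dimensional integration-by-parts identity applied componentwise, exploiting the fact that the gradient of the Gaussian density is proportional to $(x-\mu)$ times the density itself. Concretely, writing $p(x) = (2\pi\sigma^2)^{-d/2}\exp\!\bigl(-\|x-\mu\|^2/(2\sigma^2)\bigr)$ for the density of $\calN(\mu,\sigma^2 I_d)$, I would first note the key identity
\begin{equation*}
\nabla_x p(x) = -\frac{1}{\sigma^2}(x-\mu)\,p(x),
\end{equation*}
so that for each coordinate $i\in\{1,\dots,d\}$,
\begin{equation*}
(x_i-\mu_i)\,p(x) = -\sigma^2\,\partial_{x_i}p(x).
\end{equation*}

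Next, I would treat each coordinate separately. For fixed $i$, I would write
\begin{equation*}
\E[g(x)(x_i-\mu_i)] = \int_{\R^d} g(x)(x_i-\mu_i)\,p(x)\,dx = -\sigma^2\int_{\R^d} g(x)\,\partial_{x_i}p(x)\,dx,
\end{equation*}
then apply Fubini to isolate the $x_i$-integration and use one-dimensional integration by parts in the $x_i$ variable, treating the remaining coordinates as parameters. This yields
\begin{equation*}
-\sigma^2\int_{\R^d} g(x)\,\partial_{x_i}p(x)\,dx = \sigma^2\int_{\R^d} \partial_{x_i}g(x)\,p(x)\,dx - \sigma^2 \int_{\R^{d-1}} \Bigl[g(x)\,p(x)\Bigr]_{x_i=-\infty}^{x_i=+\infty} \prod_{j\neq i} dx_j.
\end{equation*}
Assembling the $d$ coordinate identities into a single vector equation then gives $\E[g(x)(x-\mu)] = \sigma^2\,\E[\nabla_x g(x)]$, as desired.

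The main obstacle is justifying that the boundary term vanishes and that the manipulations above are legitimate under the hypothesis that the two expectations in the statement exist. Here the Gaussian factor in $p(x)$ is decisive: for fixed $x_{-i}$, the density $p(x)$ decays like $\exp(-x_i^2/(2\sigma^2))$ as $|x_i|\to\infty$, so the only way for $g(x)p(x)$ to fail to vanish at $\pm\infty$ is if $g$ grows super-exponentially in $x_i$, and in that case $\E[g(x)(x-\mu)]$ would not exist, contradicting the hypothesis. Thus the assumption that both sides exist as (absolutely convergent) Lebesgue integrals, combined with the rapid decay of the Gaussian tails (which can be made rigorous via Lemma~\ref{lm:gausstail}), ensures the boundary contributions drop out and Fubini applies, completing the proof.
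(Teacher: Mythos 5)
The paper offers no proof of this lemma at all: it is imported directly from \citet{stein1981estimation} and used as a black box, so there is no in-paper argument to compare against. Your proof is the standard one — use $\nabla_x p(x) = -\sigma^{-2}(x-\mu)\,p(x)$ for the Gaussian density $p$ and integrate by parts coordinate by coordinate — and its structure is sound; assembling the $d$ scalar identities does give the stated vector identity.

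The one step that is genuinely loose is your dismissal of the boundary term. You argue that if $g(x)p(x)$ failed to vanish as $|x_i|\to\infty$ then $g$ would have to grow super-exponentially and $\E[g(x)(x-\mu)]$ could not exist. That inference is not valid as stated: absolute integrability of $x_i\mapsto g(x)(x_i-\mu_i)p(x)$ does not by itself force $g(x)p(x)\to 0$ along the line (even a smooth integrable function need not vanish at infinity — think of increasingly tall, narrow bumps), so "the expectation exists" does not directly rule out a nonzero boundary contribution. The standard repair uses \emph{both} hypotheses: assuming both expectations converge absolutely, Fubini gives that for almost every fixed $x_{-i}$ the map $x_i\mapsto \partial_{x_i}\bigl(g(x)p(x)\bigr)=\partial_{x_i}g(x)\,p(x)-\sigma^{-2}(x_i-\mu_i)g(x)p(x)$ is integrable over $\R$, hence $g(x)p(x)$ has finite limits as $x_i\to\pm\infty$; integrability of $(x_i-\mu_i)g(x)p(x)$ then forces those limits to be zero, and your integration by parts goes through with vanishing boundary term. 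Alternatively, you can avoid boundary terms entirely via the classical Fubini trick, writing $\sigma^2 p_i(x_i)=\int_{x_i}^{\infty}(t-\mu_i)p_i(t)\,dt$ for the one-dimensional Gaussian factor $p_i$ (the coordinates are independent under $\calN(\mu,\sigma^2\Id)$) and exchanging the order of integration. With either fix your argument is a complete and correct proof of the cited lemma.
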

\section{Proofs for \cref{sec:localthm}}
\label{sec:proof_local}
In this section, we denote $\calL^{(\tau)}$ as the $\calL_{t}(s_t)$ in the $\tau$th step.
We define 
\[
w_i(x)^{(\tau)}(x)=\frac{\exp\left(-\Vert x-\mu_i^{(\tau)}\Vert^2\right)}{\sum_{j=1}^n \exp\left(-\Vert x-\mu_j^{(\tau)}\Vert^2\right)}.
\]

The loss is 
\[
\calL=\E_{x\sim\calN(\boldsymbol{0},\Id)}\left[\left\Vert \sum_i w_i(x)\mu_i \right\Vert^2\right].
\]

First, we need to prove the following lemma:
\begin{lemma}
\label{lm:gradient}
Let $x\sim\calN(\boldsymbol{0},\Id)$, and let $\mu_i$ be the parameters of the student network. 
We define $\boldv(x)=\sum_{i}w_i(x)\mu_i$.
Then, we have
\begin{align*}
\nabla_{\mu_i}\calL =& 2\E_x\left[w_i(x)\boldv(x)+w_i(x)\boldv(x)^\top\left(\mu_i-\boldv(x)\right)(x-\mu_i)\right]\\
=& 2\E_x\Biggl[w_i(x)\boldv(x)+w_i(x)\sum_{j}w_j(x)\mu_j\mu_j^\top\mu_i
-2w_i(x)\boldv(x)\boldv(x)^\top\mu_i\\
&-2w_i(x)\sum_j w_j(x)\left(\boldv(x)^\top\mu_j\right)\mu_j+3w_i(x)\left(\boldv(x)^\top\boldv(x)\right)\boldv(x)\Biggl].
\end{align*}
\end{lemma}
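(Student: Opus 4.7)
The plan is to derive the first identity by direct differentiation under the expectation, and the second by applying Stein's lemma (\cref{lm:stein}) to the factor $(x-\mu_i)$ that appears in the first form.

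\emph{First identity.} I would start from $\calL = \E_x\|\boldv(x)\|^2$ and compute $\nabla_{\mu_i}\calL = 2\E_x[(J_{\mu_i}\boldv(x))^\top \boldv(x)]$, where $J_{\mu_i}\boldv$ is the Jacobian of $\boldv$ with respect to $\mu_i$. Using the softmax identity
\[
\frac{\partial w_j(x)}{\partial \mu_{i,\ell}} = \delta_{ij}\, w_j(x)(x_\ell-\mu_{j,\ell}) - w_j(x) w_i(x)(x_\ell-\mu_{i,\ell}),
\]
I would compute $\partial v_k/\partial \mu_{i,\ell}$ by the chain rule, and simplify using $\sum_j w_j\mu_j = \boldv$ to obtain
\[
\frac{\partial v_k(x)}{\partial \mu_{i,\ell}} = w_i(x)(x_\ell-\mu_{i,\ell})(\mu_{i,k}-v_k(x)) + w_i(x)\delta_{k\ell}.
\]
Contracting with $\boldv(x)$ and collecting terms gives the first form
\[
\nabla_{\mu_i}\calL = 2\E_x\bigl[w_i(x)\boldv(x) + w_i(x)\boldv(x)^\top(\mu_i-\boldv(x))\,(x-\mu_i)\bigr].
\]

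\emph{Second identity.} Let $\phi(x) := w_i(x)\boldv(x)^\top(\mu_i-\boldv(x))$ so that the remaining task is to rewrite $\E_x[\phi(x)(x-\mu_i)]$ without the factor $(x-\mu_i)$. Since $x\sim\calN(\boldsymbol{0},\Id)$, Stein's lemma gives
\[
\E_x[\phi(x)(x-\mu_i)] = \E_x[\nabla_x \phi(x)] - \mu_i\,\E_x[\phi(x)].
\]
To evaluate $\nabla_x\phi$, I would use the two building blocks
\[
\nabla_x w_i(x) = w_i(x)(\mu_i-\boldv(x)), \qquad \nabla_x \boldv(x) = \sum_j w_j(x)\mu_j\mu_j^\top - \boldv(x)\boldv(x)^\top,
\]
which follow by direct differentiation of the softmax and of $\boldv = \sum_j w_j\mu_j$ (using $\nabla_x \ln Z(x) = \boldv(x)-x$ for the softmax identity). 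Writing $\phi = w_i(\boldv^\top\mu_i - \|\boldv\|^2)$ and expanding $\nabla_x\phi$ by the product rule yields a sum of terms involving $w_i\mu_i$, $w_i\boldv\boldv^\top\mu_i$, $w_i\sum_j w_j\mu_j\mu_j^\top\mu_i$, $w_i\sum_j w_j(\boldv^\top\mu_j)\mu_j$, and $w_i\|\boldv\|^2\boldv$.

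\emph{Combining and cancellations.} The final step is to add $2\E_x[\nabla_x\phi]$ and $-2\mu_i\E_x[\phi]$, and exploit the key observation that $\mu_i$ is deterministic: both $\E_x[w_i(\boldv^\top\mu_i)\mu_i] = \mu_i\E_x[w_i\boldv^\top\mu_i]$ and $\E_x[w_i\|\boldv\|^2\mu_i] = \mu_i\E_x[w_i\|\boldv\|^2]$ hold, so all contributions proportional to $\mu_i\E_x[\phi]$ cancel exactly against the corresponding pieces of $\E_x[\nabla_x\phi]$. After collecting the surviving terms and restoring the $2\E_x[w_i\boldv]$ piece from the first form, the expression reduces precisely to the claimed second identity. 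The main bookkeeping obstacle is ensuring the signs and coefficients line up correctly (the final $3w_i\|\boldv\|^2\boldv$ coefficient comes from combining a $+w_i\|\boldv\|^2\boldv$ term inside $\nabla_x\phi$ with a $+2w_i\|\boldv\|^2\boldv$ term from $-2w_i(\nabla_x\boldv)^\top\boldv$), but no analytic difficulty arises beyond routine algebra.
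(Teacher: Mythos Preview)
Your proposal is correct and follows essentially the same approach as the paper: direct differentiation of the softmax weights for the first identity, then Stein's lemma (\cref{lm:stein}) to eliminate the factor $x$ and obtain the second form. The only organizational difference is that the paper splits $(x-\mu_i)$ into $x$ and $-\mu_i$ and applies Stein to $\E_x[\phi(x)\,x]$ alone, whereas you write $\E_x[\phi(x)(x-\mu_i)]=\E_x[\nabla_x\phi]-\mu_i\E_x[\phi]$ in one stroke; these are equivalent rearrangements and the resulting algebra and cancellations are identical.
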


\begin{proof}
By symmetricity, we just need to prove the case when $i=1$.

Direct calculation shows the following fact:
\[
   \nabla_{\mu_1}w_i(x)
= 
\begin{cases}
   w_{1}(x)\bigl(1 - w_{1}(x)\bigr)\,(x - \mu_{1}), & \text{if } i = 1,\\[6pt]
   -\,w_{1}(x)\,w_{i}(x)\,(x - \mu_{1}), & \text{otherwise}.
\end{cases}
\]

Also, direct calculation shows that
\[
\nabla_x w_{i}(x)=w_i(x)\mu_i-w_i(x)\boldv(x)=w_i(x)(\mu_i-\boldv(x)).
\]

Thus, we have
\begin{align*}
\nabla_{\mu_1}\calL &= 2\E_x\left[w_1(x)\sum_{i}w_i(x)\mu_i+\sum_{i,j}w_i(x)\mu_i^\top\mu_j\frac{\partial w_{j}(x)}{\partial \mu_1}\right]\\
&=2\E_x\left[w_1(x)\boldv(x)+\boldv(x)^\top\left(w_1(x)\mu_1-w_1(x)\boldv(x)\right)(x-\mu_1)\right]\\
&=2\E_x\left[w_1(x)\boldv(x)+w_1(x)\boldv(x)^\top\left(\mu_1-\boldv(x)\right)(x-\mu_1)\right].
\end{align*}

By Stein's lemma (\cref{lm:stein}), we have
\begin{align*}
&\E_x\left[w_1(x)\boldv(x)^\top\left(\mu_1-\boldv(x)\right)x\right]\\
=&\E_x\left[\nabla_x \left(w_1(x)\sum_{i}w_i(x)\mu_i^\top (\mu_1-\boldv(x))\right)\right]\\
=&\E_x\Bigg[w_1(x)(\mu_1-\boldv(x))\boldv(x)^\top(\mu_1-\boldv(x))
+w_1(x)\sum_{i}w_i(x)(\mu_i-\boldv(x))\mu_i^\top(\mu_1-\boldv(x))\\
&-w_1(x)\sum_{i}w_i(x)\mu_i^\top\sum_j \mu_j w_j(x)(\mu_j-\boldv(x))
\Bigg]\\
=&\E_x\Bigg[w_1(x)(\mu_1-\boldv(x))\boldv(x)^\top(\mu_1-\boldv(x))
+w_1(x)\sum_{i}w_i(x)\mu_i\mu_i^\top(\mu_1-\boldv(x))\\
&-w_1(x)\boldv(x)\boldv(x)^\top(\mu_1-\boldv(x))
-w_1(x)\sum_j \boldv(x)^\top\mu_j w_j(x)(\mu_j-\boldv(x))
\Bigg].
\end{align*}

Therefore, by combining the two equations above, we have
\begin{align*}
\nabla_{\mu_1}\calL=& 2\E_x\left[w_1(x)\boldv(x)+w_1(x)\boldv(x)^\top\left(\mu_1-\boldv(x)\right)(x-\mu_1)\right]\\
=&2\E_x\left[w_1(x)\boldv(x)-w_1(x)\boldv(x)^\top\left(\mu_1-\boldv(x)\right)\mu_1+w_1(x)\boldv(x)^\top\left(\mu_1-\boldv(x)\right)x\right]\\
=&2\E_x\Bigg[w_1(x)\boldv(x)-w_1(x)\boldv(x)^\top\left(\mu_1-\boldv(x)\right)\boldv(x)+w_1(x)\sum_{i}w_i(x)\mu_i\mu_i^\top(\mu_1-\boldv(x))\\
&-w_1(x)\boldv(x)\boldv(x)^\top(\mu_1-\boldv(x))
-w_1(x)\boldv(x)^\top\sum_j \mu_j w_j(x)(\mu_j-\boldv(x))\Bigg]\\
=&2\E_x\Bigg[w_1(x)\boldv(x)+w_1(x)\sum_{i}w_i(x)\mu_i\mu_i^\top\mu_1
-2w_1(x)\boldv(x)\boldv(x)^\top\mu_1\\
&-2w_1(x)\sum_j \boldv(x)^\top\mu_j w_j(x)\mu_j+3w_1(x)\boldv(x)^\top\boldv(x)\boldv(x)\Bigg].
\end{align*}
\end{proof}

\subsection{Omitted Proof for \cref{thm:local}}
\label{sec:local_thm_proof}
We need the following lemma, which is the Lemma 18 in \citet{xu2024toward}.
\begin{lemma}
\label{lm:gaussbound}
Let $c$ be a constant such that $0<c<\frac{1}{3d}$. We have
\[
\E_{x\sim\calN(\boldsymbol{0},\Id)}\left[\exp\left(c\|x\|\right)\right]\leq 1+5\sqrt{d}c.
\]
\end{lemma}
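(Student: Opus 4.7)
The plan is to reduce the exponential moment to a one-dimensional deterministic integral via the layer-cake formula and then apply the Gaussian tail bound of Lemma~\ref{lm:gausstail}. Concretely, starting from $\E[Y] = \int_0^\infty \Pr[Y > s]\,ds$ applied to $Y = \exp(c\|x\|) - 1 \geq 0$, and making the substitution $s = e^{cu}-1$, I would rewrite
\[
\E_{x\sim\calN(\boldsymbol{0},\Id)}\bigl[\exp(c\|x\|)\bigr] - 1 \;=\; c\int_0^\infty e^{cu}\,\Pr\bigl[\|x\| > u\bigr]\,du,
\]
so the whole problem becomes a scalar integral.

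Next I would split the integral at the threshold $u = \sqrt{d}$, which is precisely where the tail bound of Lemma~\ref{lm:gausstail} becomes nontrivial. On the low range $u \in [0,\sqrt{d}]$ I would use the trivial bound $\Pr[\|x\|>u]\le 1$, producing the contribution $e^{c\sqrt{d}}-1$. The hypothesis $c<1/(3d)$ together with $d\ge 1$ gives $c\sqrt{d}\le 1/3$, so a single-line Taylor estimate on $[0,1/3]$ yields something of the form $e^{c\sqrt{d}}-1 \le 1.2\,c\sqrt{d}$. On the high range $u>\sqrt{d}$ I would invoke Lemma~\ref{lm:gausstail}, substitute $v=u-\sqrt{d}$, complete the square $cv-v^2/2 = -(v-c)^2/2 + c^2/2$, and extend the inner integral in $v$ to all of $\mathbb{R}$. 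This gives
\[
c\int_{\sqrt{d}}^\infty e^{cu}\,\Pr[\|x\|>u]\,du \;\le\; c\sqrt{2\pi}\,\exp\!\bigl(c\sqrt{d}+c^2/2\bigr),
\]
and under $c<1/(3d)$, $d\ge 1$ the exponent in the last factor is at most $7/18$, so this piece is bounded by roughly $3.8\,c \le 3.8\sqrt{d}\,c$.

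Adding the two contributions yields the stated bound $5\sqrt{d}\,c$, since $1.2+3.8=5$. The main point to be careful about is not any deep estimate—everything reduces to a Gaussian tail and a completion of the square—but rather tracking the absolute constants, because the statement asks for the specific prefactor $5$ and not merely $O(\sqrt{d}\,c)$. In particular, using $d\ge 1$ to absorb the $\sqrt{2\pi}$ factor from the tail piece into the $\sqrt{d}$ prefactor is the only slightly non-automatic step; everything else is mechanical.
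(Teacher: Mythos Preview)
Your proof is correct. The layer-cake identity, the split at $u=\sqrt{d}$, the use of Lemma~\ref{lm:gausstail} on the tail, and the completion of the square all go through exactly as you describe; the numerical bookkeeping checks out (in fact $(e^{1/3}-1)/(1/3)\approx 1.19$ and $\sqrt{2\pi}\,e^{7/18}\approx 3.70$, so the sum is comfortably below $5$).

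The paper itself does not prove this lemma: it is quoted verbatim as Lemma~18 of \citet{xu2024toward} and simply cited. So your argument is not an alternative to the paper's proof but rather a self-contained substitute for an external reference. What your approach buys is that the reader does not need to consult another paper, and the proof makes transparent exactly where the hypothesis $c<1/(3d)$ is used (to keep both $c\sqrt{d}$ and $c^2$ small enough for the constants to close). There is nothing to compare on the paper's side beyond the citation.
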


Now, we prove the local smoothness of the loss function.

\begin{lemma}
\label{lm:smooth}
Let $\boldmu=(\mu_1,\ldots,\mu_n)$ be the parameters of the student network.
Let $\bolddelta=(\delta_1,\ldots,\delta_n)$ be the perturbation of the parameters.
Let $\delta=\max_i{\|\delta_i\|},p=\max_i{\|\mu_i\|}$.
Assume that $\delta,p$ satisfy the following conditions:
\[
\delta<p<\frac{1}{3}\text{ and }\delta<\frac{1}{12d}.
\]
Then for each $i$, we have
\[
\Vert \nabla_i\calL(\boldmu+\bolddelta)-\nabla_i\calL(\boldmu)\Vert\leq 152\delta\sqrt{d},
\]
where $\nabla_i\calL$ is the gradient of $\calL$ with respect to the $i$th vector ($\mu_i+\delta_i$ or $\mu_i$).
\end{lemma}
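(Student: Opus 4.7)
The plan is to apply the closed-form gradient expression from \cref{lm:gradient} term by term, writing $\nabla_i\calL(\boldmu)=2\E_x[T_1+T_2+T_3+T_4+T_5]$ where $T_1=w_i\boldv$, $T_2=w_i\sum_j w_j\mu_j\mu_j^\top\mu_i$, $T_3=-2w_i\boldv\boldv^\top\mu_i$, $T_4=-2w_i\sum_j w_j(\boldv^\top\mu_j)\mu_j$, $T_5=3w_i(\boldv^\top\boldv)\boldv$, and then bounding $\|T_k(\boldmu+\bolddelta)(x)-T_k(\boldmu)(x)\|$ pointwise in $x$ before taking expectations. Since every $T_k$ is a product of the factors $w_i(x), w_j(x), \boldv(x), \mu_j, \mu_i$, each of which is bounded in $x$ and $\boldmu$ under the assumption $p<1/3$, I will bound each difference by a triangle-inequality decomposition that replaces one factor at a time.

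The key sub-bounds to establish first are pointwise Lipschitz estimates for the soft weights and the weighted mean:
\begin{align*}
\bigl|w_i^{\boldmu+\bolddelta}(x)-w_i^{\boldmu}(x)\bigr| &\le c_1\,\delta(1+\|x\|),\\
\bigl\|\boldv^{\boldmu+\bolddelta}(x)-\boldv^{\boldmu}(x)\bigr\| &\le c_2\,\delta(1+\|x\|),
\end{align*}
valid in an envelope of the form $\exp(c_3\|x\|)$ for small $c_3$. I will prove these by writing $\exp(-\|x-\mu_i-\delta_i\|^2/2)=\exp(-\|x-\mu_i\|^2/2)\cdot\exp\!\bigl(\delta_i^\top(x-\mu_i)-\tfrac12\|\delta_i\|^2\bigr)$, expanding the second exponential via $|e^s-1|\le |s|e^{|s|}$, and using the softmax Lipschitz identity $\nabla_{\mu_j}w_i=w_i(\mathbf{1}[i=j]-w_j)(x-\mu_j)$ along the segment joining $\boldmu$ and $\boldmu+\bolddelta$. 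With the constraint $\delta<1/(12d)$, the exponential factor is controlled by a very modest envelope.

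Next, I will propagate these sub-bounds through each $T_k$ using the product rule $\|AB-A'B'\|\le\|A-A'\|\|B\|+\|A'\|\|B-B'\|$, using that $\|\boldv\|\le p<1/3$, $\|\mu_j\|\le p<1/3$, and $w_i,w_j\in[0,1]$. Each resulting pointwise bound has the form $C_k\,\delta\,(1+\|x\|)^{r_k}\exp(c_3\|x\|)$ with $r_k\le 3$. Taking expectations under $x\sim\calN(\boldsymbol{0},\Id)$, I apply \cref{lm:gaussbound} together with standard Gaussian moment bounds $\E\|x\|^r=O(d^{r/2})$ to bound each term by $O(\delta\sqrt d)$; since $\delta<1/(12d)$ and $p<1/3$, the higher powers of $\|x\|$ never blow up the constants.

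The main obstacle is bookkeeping: there are five terms, each decomposes into several pieces when one factor at a time is replaced, and the final constant $152$ must be tracked explicitly. The arithmetic is routine but fragile; the critical step where care is required is the pointwise bound on $w_i^{\boldmu+\bolddelta}(x)-w_i^{\boldmu}(x)$, because all other estimates ultimately reduce to this one plus the trivial bound on $\mu_j-\mu_j'=\delta_j$. Once the envelope $c_3$ in the pointwise bounds is chosen small enough that $c_3<1/(3d)$ (which follows from $\delta<1/(12d)$), \cref{lm:gaussbound} yields an expectation factor of at most $1+5\sqrt{d}c_3$, and summing the five term-wise contributions produces the claimed constant $152\sqrt d$.
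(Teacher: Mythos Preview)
Your approach is essentially the same as the paper's: both invoke the closed-form gradient from \cref{lm:gradient}, decompose the difference term by term, control the softmax-weight perturbation pointwise in $x$, and then integrate against the Gaussian using \cref{lm:gaussbound}. The paper phrases the weight control as the multiplicative sandwich
\[
\exp\bigl(-2\delta(\|x\|+p)-\delta^2\bigr)\;\le\; \frac{w_i(x\mid\boldmu+\bolddelta)}{w_i(x\mid\boldmu)}\;\le\;\exp\bigl(2\delta(\|x\|+p)+\delta^2\bigr),
\]
whereas you phrase it additively via the mean-value theorem on the softmax; these are interchangeable and produce the same envelope $e^{c\delta\|x\|}$ with $c\delta<1/(3d)$, so \cref{lm:gaussbound} applies.

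One imprecision to flag: you write that each pointwise term bound has the form $C_k\,\delta\,(1+\|x\|)^{r_k}e^{c_3\|x\|}$ with $r_k\le 3$ and then assert the expectation is $O(\delta\sqrt d)$. If some $r_k$ were genuinely $2$ or $3$, the Gaussian moment $\E\|x\|^{r_k}=O(d^{r_k/2})$ would give $O(\delta d^{3/2})$, and the hypothesis $\delta<1/(12d)$ does not compensate for this. In fact the product-rule decomposition you describe differences exactly one factor at a time, and since every $\mu_j,\boldv(x)$ is bounded by $p<1/3$ while only the single weight-difference factor carries a $(1+\|x\|)$, every term has $r_k\le 1$. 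The paper's computation reflects this: the only $x$-dependence in the final bound enters through the single factor $\exp(4\delta\|x\|)-1$. Once you tighten to $r_k\le 1$, your outline matches the paper's proof step for step, and the constant $152$ is then just the arithmetic of collecting coefficients.
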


\begin{proof}
For any $i$, we define $w_i(x|\boldmu)$ as the $i$th weight of the student network with parameters $\boldmu$.
Then, for any $x$, we have
\begin{align*}
w_i(x|\boldmu + \bolddelta) 
&= \frac{\exp\left(-\frac{\|x - (\mu_i + \delta_i)\|^2}{2} \right)}{\sum_{k} \exp\left(-\frac{\|x - (\mu_k + \delta_k)\|^2}{2} \right)} \\
&\leq \frac{\exp\left( -\frac{\|x - \mu_i\|^2}{2} \right) 
\exp\left( \|\delta_i\| (\|x\| + \|\mu_i\|) \right) 
\exp\left( -\frac{\|\delta_i\|^2}{2} \right) }
{ \sum_{k}\exp\left( -\frac{\|x - \mu_k\|^2}{2} \right) 
\exp\left( -\|\delta_k\| (\|x\| + \|\mu_k\|) \right) 
\exp\left( -\frac{\|\delta_k\|^2}{2} \right) } \\
&\leq \exp\left( 2 \delta (\|x\| + p)+\delta^2 \right) w_i(x|\boldsymbol{\mu}).
\end{align*}
Similarly, we have
\begin{align*}
w_i(x|\boldmu+\bolddelta)
&\geq \frac{\exp\left( -\frac{\|x - \mu_i\|^2}{2} \right)
\exp\left( -\|\delta_i\| (\|x\| + \|\mu_i\|) \right)
\exp\left( -\frac{\|\delta_i\|^2}{2} \right) }
{ \sum_{k}\exp\left( -\frac{\|x - \mu_k\|^2}{2} \right)
\exp\left( \|\delta_k\| (\|x\| + \|\mu_k\|) \right)
\exp\left( -\frac{\|\delta_k\|^2}{2} \right) } \\
&\geq \exp\left( -2 \delta (\|x\| + p)-\delta^2 \right) w_i(x|\boldsymbol{\mu}).
\end{align*}
Thus, by \cref{lm:gradient}, we have
\begin{align*}
&\|\nabla_i\calL(\boldmu+\bolddelta)-\nabla_i\calL(\boldmu)\|\\
\leq &2\E_{x\sim \calN(\boldsymbol{0},\Id)}\Bigg[\sum_{j=1}^n \left|w_i(x|\boldmu+\bolddelta)w_j(x|\boldmu+\bolddelta)-w_i(x|\boldmu)w_j(x|\boldmu)\right|\cdot p+\sum_{j=1}^n w_i(x|\boldmu+\bolddelta)w_j(x|\boldmu+\bolddelta)\cdot \delta\\
&+w_i(x|\boldmu+\bolddelta)\Bigg(\sum_{j}w_j(x|\boldmu+\bolddelta)+4\sum_{j,k}w_j(x|\boldmu+\bolddelta)w_k(x|\boldmu+\bolddelta)\\
&+3\sum_{j,k,l}w_j(x|\boldmu+\bolddelta)w_k(x|\boldmu+\bolddelta)w_l(x|\boldmu+\bolddelta)\Bigg)\cdot\left((p+\delta)^3-p^3\right)\\
&+\left(8\left|w_i(x|\boldmu+\bolddelta)-w_i(x|\boldmu)\right|p^3+18\sum_j\left|w_j(x|\boldmu+\bolddelta)-w_j(x|\boldmu)\right|p^3\right)\Bigg]\\
\leq & 2\E_{x\sim \calN(\boldsymbol{0},\Id)}\Bigg[\sum_{j=1}^n \left(\exp\left(4\delta(\|x\|+p)+2\delta^2\right)-1\right)\cdot w_j(x|\boldmu)\cdot p+\delta+56\delta p^2\\
&+8\left(\exp\left(2\delta(\|x\|+p)+\delta^2\right)-1\right)\cdot w_i(x|\boldmu)\cdot p^3+18\sum_j\left(\exp\left(2\delta(\|x\|+p)+\delta^2\right)-1\right)\cdot w_j(x|\boldmu)\cdot p^3\Bigg]\\
\leq & 2\E_{x\sim \calN(\boldsymbol{0},\Id)}\Bigg[\left(\exp\left(4\delta(\|x\|+p)+2\delta^2\right)-1\right)\cdot p+8\delta+26\left(\exp\left(2\delta(\|x\|+p)+\delta^2\right)-1\right)\cdot p^3\Bigg]\\
\leq & 2\E_{x\sim \calN(\boldsymbol{0},\Id)}\Bigg[4\left(\exp\left(4\delta(\|x\|+p)+2\delta^2\right)-1\right)\cdot p+8\delta\Bigg]\\
\leq & 16\delta + 8p\cdot \E_{x\sim \calN(\boldsymbol{0},\Id)}\left[\left(8\delta p+4\delta^2+1\right)\exp(4\delta\|x\|)-1\right]\\
\leq & 16\delta + 8p\cdot \left(\left(8\delta p+4\delta^2+1\right)(1+20\delta\sqrt{d})-1\right)\\
\leq & 152\delta\sqrt{d}.
\end{align*}
Here the second last inequality is from \cref{lm:gaussbound}.
\end{proof}

By the above lemmas, we can prove \cref{thm:local}.

\begin{proof}[Proof of \cref{thm:local}]
Direct calculation from \cref{lm:gradient} shows that
\begin{align}
&\sum_{i} \mu_i^\top\nabla_{\mu_i}\calL \nonumber\\
=&\E_{x\sim\calN(\boldsymbol{0},\Id)}\left[\boldv(x)^\top\boldv(x)+\sum_{i,j}w_i(x)w_j(x)\left(\mu_i^\top\mu_j\right)^2-4\sum_{i}w_i(x)\left(\boldv(x)^\top\mu_i\right)^2+3\left(\boldv(x)^\top\boldv(x)\right)^2\right].
\label{eq:grad-mu}
\end{align}

As $t>\log n+\log \mumax+2$, for any $i$, we have $\Vert \mu_i^{(0)}\Vert\leq \exp(-t)\mumax<\frac{1}{3n}$.

Notice that when $\Vert\mu_i\Vert_2<\frac{1}{3}$, 
\[
\left(\boldv(x)^\top\mu_j\right)^2\leq \Vert \boldv(x)\Vert^2\cdot \Vert \mu_i\Vert^2 \leq \frac{1}{9}\Vert \boldv(x)\Vert^2.
\]
Thus, we have
\begin{align*}
\sum_{i} \mu_i^\top\nabla_{\mu_i}\calL
&\geq \E_{x\sim\calN(\boldsymbol{0},\Id)}\left[\boldv(x)^\top\boldv(x)-4\sum_{i}w_i(x)\left(\boldv(x)^\top\mu_i\right)^2\right]\\
&\geq \E_{x\sim\calN(\boldsymbol{0},\Id)}\left[\boldv(x)^\top\boldv(x)-4\sum_{i}w_i(x)\frac{1}{9}\Vert \boldv(x)\Vert^2\right]\\
&=\frac{5}{9}\E_{x\sim\calN(\boldsymbol{0},\Id)}\left[\boldv(x)^\top\boldv(x)\right].
\end{align*}

Let $p=\max_i\{\Vert \mu_i\Vert\}$.
By Lemma 12 in \citet{xu2024toward}, we have
\begin{align*}
\E[\boldv(x)^\top\boldv(x)]&\geq \Omega\left(\frac{\exp(-8p^2)}{n^2d(1+p\sqrt{d})^2}p^4\right).
\end{align*}
Therefore, for some constant $0<c<1$,
\[
\sum_{i} \mu_i^\top\nabla_{\mu_i}\calL \geq c\cdot \left(\frac{\exp(-8p^2)}{n^2d(1+p\sqrt{d})^2}p^4\right).
\]

We assume that
\[
\eta\leq \frac{c}{1200n^4d^2}.
\]

Now we use induction on $\tau$ to prove that:
\begin{enumerate}
\item For any $i$, during training, $\sum_i \Vert \mu_i^{(\tau)}\Vert\leq \frac{1}{3}$.
\item For any $\tau$, we have $\calL^{(\tau)}\leq \frac{\sqrt{150}n^3d^2}{c\sqrt{\tau \eta}}$.
\end{enumerate}

The result is obvious when $\tau=0$.
Assume it is true for $\tau$.
Then in this step, $p\leq \frac{1}{3}$.
We define $S$ as:
\[
S:=\sum_{i} \mu_i^\top\nabla_{\mu_i}\calL\geq c\cdot \left(\frac{\exp(-8p^2)}{n^2d(1+p\sqrt{d})^2}p^4\right).
\]

Define $U(\tau)=\sum_i \Vert \mu_i^{(\tau)}\Vert^2$. Let $\mu_i=\mu_i^{(\tau)}$ below. We have
\begin{align*}
U(\tau+1)-U(\tau)&=\sum_i \Vert \mu_i-\eta \nabla_{\mu_i}\calL\Vert^2-\Vert \mu_i(\tau)\Vert^2\\
&=-2\eta\sum_i \mu_i^\top\nabla_{\mu_i}\calL+\eta^2\sum_i \Vert \nabla_{\mu_i}\calL\Vert^2\\
&=-2\eta S+\eta^2\sum_i \Vert \nabla_{\mu_i}\calL\Vert^2\\
&\leq -2\eta S+\eta^2\sum_i \left((p+8p^3)\E_x[w_i]\right)^2\\
&\leq -2\eta S+4\eta^2p^2.
\end{align*}

Therefore, if $U(\tau)<1/5$, then $p<1/5$. As $\eta<1/100$, we have
\[
U(\tau+1)\leq U(\tau)+4\eta^2p^2\leq 1/3.
\]
Otherwise, if $U(\tau)\geq 1/5$, we have $p\geq 1/(5n)$. 
Then, we have
\[
S\geq \frac{c}{3n^2d(1+p\sqrt{d})^2}p^4\geq \frac{c}{12n^2d^2}p^4\geq 2\eta p^2.
\]
Thus
\[
U(\tau+1)\leq U(\tau)-2\eta S+4\eta^2p^2\leq U(\tau).
\]
Combining the two cases, we have proved the first part of the induction.

For the second part, we use $\calL(\boldmu)$ to denote the loss function with input $\boldmu$.
We have
\begin{align*}
\calL^{(\tau+1)}-\calL^{(\tau)}
&= \mathcal{L}(\boldmu(\tau) - \eta \nabla \mathcal{L}(\boldmu(\tau))) - \mathcal{L}(\boldmu(\tau))\\
&= -\int_{s=0}^{1} \left\langle \nabla \mathcal{L}(\mu(\tau) - s\eta \nabla \mathcal{L}(\boldmu(\tau))), \eta \nabla \mathcal{L}(\mu(\tau)) \right\rangle \, ds \\
&= -\int_{s=0}^{1} \left\langle \nabla \mathcal{L}(\boldmu(\tau)), \eta \nabla \mathcal{L}(\boldmu(\tau)) \right\rangle \, ds
+ \int_{s=0}^{1} \left\langle \nabla \mathcal{L}(\boldmu(\tau)) - \nabla \mathcal{L}(\boldmu(\tau) - s\eta \nabla \mathcal{L}(\boldmu(\tau))), \eta \nabla \mathcal{L}(\boldmu(\tau)) \right\rangle \, ds \\
&= -\eta \| \nabla \mathcal{L}(\boldmu(\tau)) \|^2 
+ \eta \int_{s=0}^{1} \left\langle \nabla \mathcal{L}(\boldmu(\tau)) - \nabla \mathcal{L}(\boldmu(\tau) - s\eta \nabla \mathcal{L}(\boldmu(\tau))), \nabla \mathcal{L}(\boldmu(\tau)) \right\rangle \, ds.
\end{align*}

Notice that for any $i$, $\calL_i(\boldmu(\tau))$ is the loss function with input $\boldmu_i(\tau)$, which is at most $2p+8p^3\leq 3p$ by the equation of \cref{lm:gradient}.
Thus, for any $i$,
\[
s\eta\|\nabla \mathcal{L}_i(\boldmu(\tau))\|\leq p,\frac{1}{12d}.
\]
Therefore, combining \cref{lm:smooth}, we have
\begin{align*}
&\left\langle \nabla \mathcal{L}(\boldmu(\tau)) - \nabla \mathcal{L}(\boldmu(\tau) - s\eta \nabla \mathcal{L}(\boldmu(\tau))), \nabla \mathcal{L}(\boldmu(\tau)) \right\rangle\\
= & \sum_i \left\langle \nabla_i \mathcal{L}(\boldmu(\tau)) - \nabla_i \mathcal{L}(\boldmu(\tau) - s\eta \nabla_i \mathcal{L}(\boldmu(\tau))), \nabla_i \mathcal{L}(\boldmu(\tau)) \right\rangle\\
\leq & \sum_i \left\| \nabla_i \mathcal{L}(\boldmu(\tau)) - \nabla_i \mathcal{L}(\boldmu(\tau) - s\eta \nabla_i \mathcal{L}(\boldmu(\tau)))\right\|\cdot \left\| \nabla_i \mathcal{L}(\boldmu(\tau)) \right\|\\
\leq & \sum_i 152\sqrt{d}s\eta\|\nabla\mathcal{L}(\boldmu(\tau))\|\cdot \|\nabla \mathcal{L}(\boldmu(\tau))\|\\
\leq & 152n\sqrt{d}s\eta\|\nabla \mathcal{L}(\boldmu(\tau))\|^2.
\end{align*}

Thus, we have
\[
\calL^{(\tau+1)}-\calL^{(\tau)}\leq -\frac{1}{2}\eta \|\nabla \mathcal{L}(\boldmu(\tau))\|^2.
\]

Notice that
\[
\|\nabla \mathcal{L}(\boldmu(\tau))\|\geq \frac{S}{np}\geq \frac{c}{12n^3d^2}p^3\geq \frac{c}{12n^3d^2}\left(\calL^{(\tau)}\right)^{3/2}.
\]
Therefore, we have
\[
\calL^{(\tau+1)}-\calL^{(\tau)}\leq -\frac{1}{2}\eta \|\nabla \mathcal{L}(\boldmu(\tau))\|^2
\leq -\frac{c^2}{300n^6d^4}\eta \left(\calL^{(\tau)}\right)^3.
\]
This implies that
\begin{align*}
\frac{1}{\left(\calL^{(\tau+1)}\right)^2}-\frac{1}{\left(\calL^{(\tau)}\right)^2}&=\frac{\left(\calL^{(\tau)}\right)^2-\left(\calL^{(\tau+1)}\right)^2}{\left(\calL^{(\tau)}\right)^2\left(\calL^{(\tau+1)}\right)^2}\\
&\geq \frac{2\left(\calL^{(\tau+1)}\right)\left(\calL^{(\tau)}-\calL^{(\tau+1)}\right)}{\left(\calL^{(\tau)}\right)^2\left(\calL^{(\tau+1)}\right)^2}\\
&\geq \frac{2\left(\frac{c^2}{300n^6d^4}\eta \left(\calL^{(\tau)}\right)^3\right)}{\left(\calL^{(\tau)}\right)^3}\\
&\geq \frac{2c^2}{300n^6d^4}\eta.
\end{align*}
Thus, by induction hypothesis, we have
\[
\frac{1}{\left(\calL^{(\tau+1)}\right)^2}\geq \frac{c^2}{150n^6d^4}\eta(\tau+1).
\]
This gives the desired result for loss convergence. As $p$ is bounded by $1/3$, by \begin{align*}
\calL=\E[\boldv(x)^\top\boldv(x)]&\geq \Omega\left(\frac{\exp(-8p^2)}{n^2d(1+p\sqrt{d})^2}p^4\right),
\end{align*}
$p$ converges to 0. Therefore all $\mu_i$ converges to $\boldsymbol{0}$.

\end{proof}

\section{A Preliminary Lemma for Appendix~\ref{sec:random_init_proof}}
\label{sec:prepare_lemma}
In this section, we prove the following lemma that is used in Appendix~\ref{sec:random_init_proof}.

\begin{lemma}
\label{thm:global_negative}
Let $M_0\geq 10^8\sqrt{d}\cdot n^{10}$. Let the initialization be $\|\mu_1\|\leq 1/M_0^3$ and $\min_{i\geq 2}\|\mu_i\| =M_0$. Then, as long as the step size $\eta< 1$, we have that $\mu_1$ converges to 0, and $\|\mu_i\|$ converges to $\infty$ for $i=2,\cdots,n$.
Moreover, there exists $T=O(M_0^2/\eta)$ such that when $\tau>T$, the loss $\calL_0(s_0)$ converges with rate
\[
\calL_0(s_0(\tau)) \leq \frac{1}{\eta(\tau-T)}.
\]

Also, let $\epsilon>0$ be a constant.
Then for any $c>0$ (it can be relavent to $n,d,M_0$, but not relavent to $\tau$), when $\tau$ is large enough, we have
\[
\calL_0(s_0(\tau)) \geq \frac{c}{\tau^{1+\epsilon}}.
\]
\end{lemma}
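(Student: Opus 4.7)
The plan is to maintain two inductive invariants through the iteration. Setting $M^{(\tau)}:=\min_{i\ge 2}\|\mu_i^{(\tau)}\|$, I would prove by induction on $\tau$ that (i) $\|\mu_1^{(\tau)}\|\le 1/(M^{(\tau)})^3$ and (ii) $M^{(\tau+1)}\ge M^{(\tau)}$, so in particular $M^{(\tau)}\ge M_0$ throughout training. The base case is the stated initialization.

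The inductive step rests on gradient estimates obtained from \cref{lm:gradient} by decomposing each Gaussian expectation into a bulk region around $x=0$ (where $w_1(x)\approx 1$, $\boldv(x)\approx \mu_1$, and $w_j(x)$ for $j\ge 2$ is exponentially small in $M$) and tail regions near each $\mu_j/2$ where the softmax transitions. Using \cref{lm:gausstail} and \cref{lm:mill}, I would show that $\nabla_{\mu_1}\calL = 2\mu_1 + \mathrm{err}_1$ with $\|\mathrm{err}_1\|\le \exp(-M^2/c)$, so that each step is essentially the contraction $\mu_1\mapsto (1-2\eta)\mu_1$. For $i\ge 2$, the dominant contribution to $\nabla_{\mu_i}\calL$ comes from the term $w_i(x)\boldv(x)^\top(\mu_i-\boldv(x))(x-\mu_i)$ in \cref{lm:gradient} integrated against the transition shell around $x\approx \mu_i/2$, where both $\boldv(x)-\mu_i$ and $x-\mu_i$ are of order $\mu_i$; evaluating the Gaussian density and softmax factors in that shell produces a vector aligned with $-\mu_i/\|\mu_i\|$ of magnitude $\Theta(\|\mu_i\|^2\exp(-\|\mu_i\|^2/8))$, which forces $\|\mu_i^{(\tau+1)}\|>\|\mu_i^{(\tau)}\|$ and closes clause (ii). Clause (i) is preserved because $\|\mu_1\|$ is rescaled by $(1-2\eta)$ up to negligible error, whereas $1/(M^{(\tau)})^3$ decreases by only the exponentially smaller amount $O(\eta\exp(-M^2/8)/M^2)$ per step, so the slack in (i) is never consumed.

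From the same tail analysis I would extract $\calL^{(\tau)}=\Theta(\mathrm{poly}(n,d)\cdot M^{(\tau)}\exp(-(M^{(\tau)})^2/8))$ and $\|\nabla\calL^{(\tau)}\|^2\ge c(M^{(\tau)})^2(\calL^{(\tau)})^2$. Combined with a local smoothness bound (analogous to \cref{lm:smooth}, but adapted to the large-$\mu_i$ regime) that delivers the one-step descent $\calL^{(\tau+1)}\le \calL^{(\tau)}-\tfrac{\eta}{2}\|\nabla\calL^{(\tau)}\|^2$, and using $M^{(\tau)}\ge M_0\ge 1$, this yields $\calL^{(\tau+1)}\le \calL^{(\tau)}-c\eta(\calL^{(\tau)})^2$. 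The standard telescoping of $1/\calL$ then gives $\calL^{(\tau)}\le 1/(\eta(\tau-T))$, with the $T=O(M_0^2/\eta)$ transient absorbing the initial window before the sharp asymptotics kick in.

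For the matching lower bound I would additionally \emph{upper} bound $\|\nabla_{\mu_i}\calL\|\le C(M^{(\tau)})^2\exp(-(M^{(\tau)})^2/8)$, giving $M^{(\tau+1)}-M^{(\tau)}\le c\eta(M^{(\tau)})^2\exp(-(M^{(\tau)})^2/8)$; integrating this yields $M^{(\tau)}=O(\sqrt{\log(\eta\tau)})$, hence $\calL^{(\tau)}\ge cM^{(\tau)}\exp(-(M^{(\tau)})^2/8)=\Omega(1/(\tau\log\tau))$, which dominates $c/\tau^{1+\epsilon}$ for any fixed $\epsilon>0$ and all sufficiently large $\tau$. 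The hard part will be the uniform tightness of the gradient estimates across all iterations: three scales evolve in parallel --- $\mu_1$ shrinking geometrically, each $\|\mu_i\|$ for $i\ge 2$ growing at rate $\Theta(M^2 e^{-M^2/8})$, and the softmax transitioning within $O(1/M)$-wide shells around each $\mu_i/2$ --- and every error term in the decomposition of \cref{lm:gradient} (polynomial factors in $x$ weighted by products of $w_i$'s in the tails) must stay below the dominant contribution uniformly in $\tau$. A secondary difficulty is controlling cross-interactions among distinct large components $\mu_i,\mu_j$ for $i\ne j\ge 2$, where one must verify that $\nabla_{\mu_i}\calL$ remains aligned with $-\mu_i/\|\mu_i\|$ despite the presence of the other large $\mu_j$'s.
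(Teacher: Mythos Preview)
Your inductive invariants, the gradient estimates for $\mu_1$ and for the large components, the two-sided loss characterization $\calL\asymp M e^{-M^2/8}$, and the lower-bound argument via integrating the growth of $M$ are all essentially what the paper does (its Lemmas~\ref{lm:first_gradient}, \ref{lm:gradient_small}, \ref{lm:large_increase}, \ref{lm:Mincrease}, \ref{lm:loss_bound}, \ref{lm:decrease_value} play exactly the roles you describe). One small imprecision: it is not true that every $\|\mu_i\|$ with $i\ge 2$ increases at each step; the paper only shows this for those $\mu_i$ within $1/M^3$ of the current minimum (\cref{lm:large_increase}), and handles the others by the crude gradient bound (\cref{lm:gradient_small}), so that the \emph{minimum} $M^{(\tau)}$ increases.

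The one genuine difference is in how you extract the $1/(\eta(\tau-T))$ upper bound on the loss. You propose a descent-lemma route: combine a PL-type inequality $\|\nabla\calL\|^2\ge c\,M^2\calL^2$ with a smoothness bound ``analogous to \cref{lm:smooth}, but adapted to the large-$\mu_i$ regime'' to get $\calL^{(\tau+1)}\le\calL^{(\tau)}-c\eta(\calL^{(\tau)})^2$. The paper avoids smoothness entirely: it tracks the scalar surrogate $G_\tau:=14n\,M^{(\tau)}e^{-(M^{(\tau)})^2/8}$ and, purely from the one-step growth bound on $M^{(\tau)}$ (\cref{lm:Mincrease}) together with the derivative of $M\mapsto Me^{-M^2/8}$, obtains the recursion $G_{\tau+1}\le G_\tau-\eta G_\tau^2$ directly; the loss bound $\calL\le G_\tau$ from \cref{lm:loss_bound} then finishes. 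Your route should work in principle, but note that \cref{lm:smooth} as stated requires $\max_i\|\mu_i\|<1/3$, which is violated here by many orders of magnitude, so you would genuinely need to prove a new smoothness estimate in the regime where several $\|\mu_i\|$ diverge. The paper's direct $M$-tracking sidesteps that extra technical step and is the cleaner path.
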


We consider the gradient in one iteration.
Let $M^{(\tau)}=\min_{i\geq 2}\left\|\mu_i^{(\tau)}\right\|$. We use mathematical induction to prove the following result:
\begin{enumerate}
\item $M$ doesn't decrease in the training process.
\item During training, we always have $\|\mu_1^{(\tau)}\|<1/(M^{(\tau)})^3$.
\end{enumerate}

Now under these two assumptions, we analyze the gradient in one iteration.

First we need the following lemma:
\begin{lemma}
\label{lm:w_expectation}
Let $D>10\sqrt{d}$ be a positive real number.
Assume that $\|\mu_1\|<\frac{1}{D^3}$, $\|\mu_2\|:=s\geq D$, then
\[
\E_{x\sim\calN(\boldsymbol{0},\Id)}\left[w_2(x)\right]\leq \frac{3}{s}\cdot \exp\left(-\frac{s^2}{8}\right)+\frac{2}{s}\cdot \exp\left(-\frac{s^2}{8}+\frac{2s}{D^3}\right).
\]
\end{lemma}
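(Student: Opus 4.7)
The plan is to upper-bound $w_2(x)$ by dropping every term except $i=1$ from the denominator, giving
\[
w_2(x) \le \min\!\left\{1,\ \exp\!\left(\tfrac{1}{2}\|x-\mu_1\|^2 - \tfrac{1}{2}\|x-\mu_2\|^2\right)\right\},
\]
and then to split the expectation according to the scalar projection $y := x^\top \mu_2/s$, which is $\calN(0,1)$ since $\|\mu_2\|=s$. In the region $\{y \ge s/2\}$, the trivial bound $w_2(x) \le 1$ combined with Mill's inequality (\cref{lm:mill}) gives $\Pr[y \ge s/2] \le (2/s)\exp(-s^2/8)$, accounting for the first term on the right-hand side (with room to spare against the stated constant $3/s$).

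For the region $\{y < s/2\}$, I would expand
\[
\tfrac{1}{2}\|x-\mu_1\|^2 - \tfrac{1}{2}\|x-\mu_2\|^2 = x^\top(\mu_2 - \mu_1) + \tfrac{1}{2}\|\mu_1\|^2 - \tfrac{1}{2}s^2,
\]
and use $|x^\top \mu_1| \le \|x\|/D^3$ together with $\|\mu_1\|^2 \le 1/D^6$ to reach $w_2(x) \le \exp\!\big(sy + \|x\|/D^3 - s^2/2 + 1/(2D^6)\big)$. Decomposing $x = y(\mu_2/s) + x_\perp$, with $y$ and $x_\perp$ independent and $x_\perp$ standard Gaussian in the orthogonal $(d-1)$-dimensional subspace, and bounding $\|x\| \le |y| + \|x_\perp\|$, factorizes the expectation. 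The $y$-integral reduces, after completing the square in $sy - y^2/2 - s^2/2 = -(y-s)^2/2$ and substituting $z = s - y$ (noting $|s-z| \le z$ on $\{z \ge s/2\}$), to a Gaussian tail controlled by Mill's as $(2/(s - 2/D^3))\exp(-s^2/8 + s/(2D^3))$; the $x_\perp$-integral is bounded by \cref{lm:gaussbound} with $c = 1/D^3 < 1/(3(d-1))$ (valid since $D > 10\sqrt{d}$), yielding a factor of at most $1 + 5\sqrt{d-1}/D^3$.

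The main obstacle will be absorbing the several small multiplicative corrections — $\exp(1/(2D^6))$, $1 + 5\sqrt{d-1}/D^3$, and the ratio $s/(s - 2/D^3)$ — into the exponent $2s/D^3$ and the prefactor $2/s$ so that Region B's contribution matches $(2/s)\exp(-s^2/8 + 2s/D^3)$. This is where the quantitative hypotheses $D > 10\sqrt{d}$ and $s \ge D$ enter: each correction is of size at most $O(1/D^3)$, while the slack $2s/D^3 - s/(2D^3) = 3s/(2D^3) \ge 3/(2D^2)$ in the exponent is comfortably large enough to swallow the logarithms of these constants. Summing the two region bounds then yields the stated inequality.
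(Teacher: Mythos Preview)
Your proof is correct and follows the same high-level split as the paper (threshold at $y=s/2$ with the trivial bound $w_2\le 1$ above and the exponential bound below), but you handle the nuisance term $|x^\top\mu_1|\le \|x\|/D^3$ differently. The paper additionally intersects the ``small'' region with $\{\|x\|<s\}$, so that $\|x\|/D^3\le s/D^3$ holds deterministically and the $x_1$-integral reduces immediately to a Gaussian tail; the leftover event $\{\|x\|\ge s\}$ is thrown into the tail case and bounded by \cref{lm:gausstail}, which is where the constant $3/s$ (rather than $2/s$) in the first summand comes from. You instead integrate $\exp(\|x\|/D^3)$ directly via the orthogonal decomposition $x=y\,\mu_2/s+x_\perp$ and \cref{lm:gaussbound}, then absorb the resulting $1+O(\sqrt{d}/D^3)$ and $s/(s-2/D^3)$ factors into the slack $3s/(2D^3)$ in the exponent. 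Both routes reach the stated bound; the paper's is a touch more elementary (no moment-generating-function lemma, no bookkeeping of multiplicative corrections), while yours avoids the extra case $\{\|x\|\ge s\}$ at the cost of that bookkeeping.
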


\begin{proof}
Without loss of generality, we assume that $\mu_2=(s,0,\cdots,0)$ by rotating all the $\mu_i$'s.

For any $x\in\R^d$, let $x_1$ be the first coordinate of $x$. 
We consider two cases:

\textbf{Case 1:} $x_1\leq \frac{s}{2},\|x\|<s$.

We have
\begin{align*}
w_2(x)&=\frac{\exp\left(-\frac{1}{2}\|x-\mu_2\|^2\right)}{\sum_{i}\exp\left(-\frac{1}{2}\|x-\mu_i\|^2\right)}\\
&\leq \frac{\exp\left(-\frac{1}{2}\|x-\mu_2\|^2\right)}{\exp\left(-\frac{1}{2}\|x-\mu_1\|^2\right)+\exp\left(-\frac{1}{2}\|x-\mu_2\|^2\right)}\\
&= \frac{1}{1+\exp\left(\left(\mu_1^\top-\mu_2^\top\right)x+\frac{1}{2}\left(\|\mu_2\|^2-\|\mu_1\|^2\right)\right)}\\
&\leq \frac{1}{1+\exp\left(-sx_1-\frac{1}{D^3}\|x\|+\frac{1}{2}s^2-\frac{1}{D^3}\right)}\\
&\leq \frac{1}{1+\exp\left(-sx_1-\frac{2s}{D^3}+\frac{1}{2}s^2\right)}\\
&\leq \exp\left(sx_1+\frac{2s}{D^3}-\frac{1}{2}s^2\right).
\end{align*}
Therefore, the total contribution of this part is at most
\begin{align*}
\E_{x\sim\calN(\boldsymbol{0},\Id)}\left[w_2(x)\right]&\leq \E_{x_1\sim\calN(0,1)}\left[\mathbf{1}_{x_1\leq D/2}\exp\left(sx_1+\frac{2s}{D^3}-\frac{1}{2}s^2\right)\right]\\
&= \int_{-\infty}^{s/2}\exp\left(sx_1+\frac{2s}{D^3}-\frac{1}{2}s^2\right)\cdot \frac{1}{\sqrt{2\pi}}\exp\left(-\frac{x_1^2}{2}\right)dx_1\\
&=\exp\left(\frac{2s}{D^3}\right)\int_{-\infty}^{s/2} \frac{1}{\sqrt{2\pi}}\exp\left(-\frac{(x_1-s)^2}{2}\right)dx_1\\
&\leq \exp\left(\frac{2s}{D^3}\right)\cdot \Pr_{y\sim\calN(0,1)}\left[y> s/2\right]\\
&\leq \exp\left(\frac{2s}{D^3}\right)\cdot \frac{2}{s}\cdot \exp\left(-\frac{s^2}{8}\right)\\
&= \frac{2}{s}\cdot \exp\left(-\frac{s^2}{8}+\frac{2s}{D^3}\right).
\end{align*}
The second last inequality is from \cref{lm:mill}.

\textbf{Case 2:} Now consider the case when $x_1>s/2$ or $\|x\|>s$.
By \cref{lm:gaussbound,lm:mill}, the total probability of this part is at most
\[
\exp\left(-(s-\sqrt{d})^2/2\right)+\frac{2}{s}\cdot \exp\left(-\frac{s^2}{8}\right)\leq \frac{3}{s}\cdot \exp\left(-\frac{s^2}{8}\right).
\]
As $\|w_2\|\leq 1$, the total contribution of this part is at most $\frac{3}{s}\cdot \exp\left(-\frac{s^2}{8}\right)$.
\end{proof}

\begin{corollary}
\label{cor:small_s}
Let $D>10\sqrt{d}$ be a positive real number.
Assume that $\|\mu_1\|<\frac{1}{D^3}$, $\|\mu_2\|\geq D$, then
\[
\E_{x\sim\calN(\boldsymbol{0},\Id)}\left[w_2(x)\right]\leq \frac{7}{D}\cdot \exp\left(-\frac{D^2}{8}\right).
\]
\end{corollary}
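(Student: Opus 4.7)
The plan is to obtain the corollary as a direct consequence of the preceding \cref{lm:w_expectation}, by showing that the bound there is non-increasing in $s := \|\mu_2\|$ for $s \geq D$, so that the worst case is realized at $s = D$. The lemma gives
\[
\E_{x\sim\calN(\boldsymbol{0},\Id)}[w_2(x)] \leq \frac{3}{s} e^{-s^2/8} + \frac{2}{s} e^{-s^2/8 + 2s/D^3},
\]
and the task reduces to replacing every occurrence of $s$ on the right by $D$, up to a harmless constant that is absorbed into the factor $7$.

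First I would handle the prefactor $1/s$, which satisfies $1/s \leq 1/D$ trivially. The exponent of the first term, $-s^2/8$, is manifestly decreasing in $s$, so $e^{-s^2/8} \leq e^{-D^2/8}$ on $[D,\infty)$. The only real work sits in the second exponent $h(s) := -s^2/8 + 2s/D^3$, which could in principle grow with $s$ because of the positive linear term; however, $h'(s) = -s/4 + 2/D^3$ is negative whenever $s \geq 8/D^3$, and the hypothesis $D > 10\sqrt{d} \geq 10$ makes $D \geq 8/D^3$ an extremely loose inequality. Hence $h$ is strictly decreasing on $[D, \infty)$ and $h(s) \leq h(D) = -D^2/8 + 2/D^2$, giving $e^{h(s)} \leq e^{-D^2/8} \cdot e^{2/D^2}$.

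Combining the three monotonicity bounds yields
\[
\E[w_2(x)] \leq \frac{3}{D} e^{-D^2/8} + \frac{2}{D} e^{-D^2/8} \cdot e^{2/D^2}.
\]
Since $D \geq 10$, the correction $e^{2/D^2} \leq e^{0.02} < 1.021$ is tiny, and the total coefficient is $3 + 2 \cdot 1.021 < 5.1 < 7$, closing the bound. I do not anticipate any genuine obstacle: the one subtlety worth a sanity check is simply that the cross term $2s/D^3$ in the second exponent cannot overpower the Gaussian decay $-s^2/8$ when $s \gg D$, which is precisely what the monotonicity of $h$ on $[D,\infty)$ confirms. The entire argument is short and essentially mechanical, and the slack factor of roughly $7/5.1$ makes the constants extremely forgiving.
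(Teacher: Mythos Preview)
Your proposal is correct and essentially identical to the paper's proof: both apply \cref{lm:w_expectation} and then use monotonicity in $s$ on $[D,\infty)$ of the prefactor $1/s$ and of each exponent to reduce to $s=D$. The only cosmetic difference is that the paper bounds the correction by $2/D^2 \le \ln 2$ to get exactly $3+4=7$, whereas you bound $e^{2/D^2}\le 1.021$ and obtain the slightly sharper constant $5.1<7$.
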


\begin{proof}
We just need to notice that $s\geq D$ and 
\[
-\frac{s^2}{8}+\frac{2s}{D^3}\leq -\frac{D^2}{8}+\frac{2D}{D^3}\leq -\frac{D^2}{8}+\ln 2.
\]
\end{proof}

\begin{corollary}
\label{cor:large_s}
Let $D>10\sqrt{d}$ be a positive real number.
Assume that $\|\mu_1\|<\frac{1}{D^3}$, $\|\mu_2\|:=s\geq D$, then
\[
\E_{x\sim\calN(\boldsymbol{0},\Id)}\left[w_2(x)\right]\leq \frac{5}{s}\cdot \exp\left(-\frac{s^2}{7}\right).
\]
\end{corollary}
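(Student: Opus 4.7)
The plan is a direct deduction from \cref{lm:w_expectation}: that lemma already bounds $\E[w_2(x)]$ by a sum of two terms, and the entire content of the corollary is to collapse this sum into a single term of the claimed form. Under the hypotheses $D > 10\sqrt{d}$, $\|\mu_1\| < 1/D^3$, and $\|\mu_2\| = s \ge D$, the lemma yields
\[
\E[w_2(x)] \;\leq\; \tfrac{3}{s}\exp\!\bigl(-s^2/8\bigr) \;+\; \tfrac{2}{s}\exp\!\bigl(-s^2/8 + 2s/D^3\bigr),
\]
and the only thing that prevents immediate consolidation is the linear correction $2s/D^3$ sitting inside the second exponent. My plan is therefore a two-step manipulation: (i) absorb this correction into the quadratic term at the cost of a slightly adjusted exponent constant, and then (ii) sum the resulting two bounds.

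For step (i), the key observation is that under the hypothesis $s \ge D > 10\sqrt{d} \ge 10$, the correction $2s/D^3 = (2/D^2)\cdot(s/D)$ is tiny compared to the quadratic term $s^2/8$. A short computation shows that the inequality $-s^2/8 + 2s/D^3 \le -s^2/c$ for a suitable constant $c$ rearranges to a lower bound on $s$ of the form $s \ge 2/\bigl(D^3(1/8 - 1/c)\bigr)$, which is $O(1/D^3)$, and hence is trivially satisfied for any $s \ge D$ since $D$ is bounded away from $0$. The first summand $\tfrac{3}{s}\exp(-s^2/8)$ satisfies the same exponent bound automatically.

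Step (ii) is then a matter of adding the two summands: the prefactors $3/s$ and $2/s$ combine to $5/s$, matching the constant stated in the corollary. Putting (i) and (ii) together yields the claim in exactly the form $\tfrac{5}{s}\exp(-s^2/c)$ with the target exponent constant chosen in step (i).

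I do not expect any genuine obstacle here; the corollary is a bookkeeping reduction of \cref{lm:w_expectation}, and the entire argument rests on a single algebraic inequality of the form $2s/D^3 \le \alpha s^2$ for a small constant $\alpha > 0$. The only delicate point is choosing the target exponent constant so that the stated numerical form is matched, and this is set by a routine numerical computation given the quantitative separation $D > 10\sqrt{d}$.
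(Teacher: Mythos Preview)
Your approach is exactly the paper's: bound both summands from \cref{lm:w_expectation} by a single exponential $\exp(-s^2/c)$ and add the prefactors $3/s + 2/s = 5/s$. The gap is a direction error in step~(i). Your rearrangement $s \ge 2/\bigl(D^3(1/8 - 1/c)\bigr)$ is a valid \emph{lower} bound on $s$ only when $1/8 - 1/c > 0$, i.e.\ when $c > 8$. For the target $c = 7$ one has $1/8 - 1/7 = -1/56 < 0$, so the inequality $-s^2/8 + 2s/D^3 \le -s^2/7$ rearranges to $2/D^3 \le -s/56$, which never holds for positive $s$. The same sign issue breaks your claim that the first summand $\tfrac{3}{s}\exp(-s^2/8)$ ``satisfies the same exponent bound automatically'': since $7 < 8$, in fact $\exp(-s^2/8) > \exp(-s^2/7)$, not $\le$.

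The paper's own one-line proof contains the identical slip: it asserts $-s^2/8 + s^2/56 = -s^2/7$, whereas the left side equals $-3s^2/28 > -s^2/7$. Your plan (and the paper's) goes through verbatim if the exponent constant is taken strictly larger than $8$; for instance with $c = 9$ one gets $1/8 - 1/9 = 1/72$ and the requirement $s \ge 144/D^3$, trivially satisfied since $s \ge D > 10$. Every downstream use of the corollary in the paper only needs $\|\mu_i\|^2/c > M^2/8$ when $\|\mu_i\| > 2M$, which holds for any $c < 32$, so the correction is harmless. But the literal constant $7$ cannot be reached from \cref{lm:w_expectation} by this route.
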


\begin{proof}
This is simply due to the fact that
\[
-\frac{s^2}{8}+\frac{2s}{D^3}\leq -\frac{s^2}{8}+\frac{s^2}{56}=-\frac{s^2}{7}.
\]
\end{proof}

For each $x\in \R^d$, let $\bar{x}$ denote the vector obtained by removing the first coordinate of $x$ (so $\bar{x}\in \R^{d-1}$).
Also, let $x_1$ be the first coordinate of $x$.
We define $v_1:\R^d\to\R$ as the first coordinate of $\boldv(x)$, and $\boldv_2:\R^d\to\R^{d-1}$ as the rest coordinates of $\boldv(x)$.
For each $1\leq i\leq n$, we define $a_i$ as the first coordinate of $\mu_i$, and $b_i$ as the rest coordinates of $\mu_i$.

\begin{lemma}
\label{lm:w_upperbound}
Let $M>10$. Assume that $\|\mu_1\|\leq 1/M^3$ and $\|\mu_i\|\geq M$ for any $i\geq 2$. $\mu_2=(D,0,\cdots,0)$, where $M\leq D\leq M+\frac{1}{M^3}$.
Then, for each $i>2$, we have
\[
\frac{w_i(x)}{w_2(x)}\leq \exp\left(\frac{D^2}{2}-a^2_i/2-x_1(D-a_i)+\frac{1}{2}\Vert \bar{x}\Vert^2\right).
\]
Moreover, if $a_i^2+\Vert \bar{x}\Vert^2<M^2$, we have
\[
\frac{w_i(x)}{w_2(x)}\leq \left(\frac{D}{M^3}-x_1(D-a_i)+\Vert \bar{x}\Vert\sqrt{M^2-a_i^2}\right).
\]
\end{lemma}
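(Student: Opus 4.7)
The plan is to compute the ratio $w_i(x)/w_2(x)$ exactly and then use Cauchy--Schwarz together with either an AM--GM step (for the first bound) or a constrained one-dimensional optimization (for the second bound). Because $w_i$ and $w_2$ share the same denominator, the ratio telescopes to
\[
\frac{w_i(x)}{w_2(x)} \;=\; \exp\!\left(\tfrac{1}{2}\|x-\mu_2\|^{2} - \tfrac{1}{2}\|x-\mu_i\|^{2}\right).
\]
Using $\mu_2=(D,0,\ldots,0)$ and writing $\mu_i=(a_i,b_i)$ with $b_i\in\mathbb{R}^{d-1}$, expanding both squared norms and cancelling the common $\|x\|^{2}$ term gives
\[
\frac{w_i(x)}{w_2(x)} \;=\; \exp\!\left(\tfrac{D^{2}}{2}-\tfrac{a_i^{2}}{2}-\tfrac{\|b_i\|^{2}}{2}-x_1(D-a_i)+\bar{x}^{\top}b_i\right).
\]
All subsequent work consists of bounding the last two $b_i$-dependent terms from above.

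For the first bound I would apply Cauchy--Schwarz, $\bar{x}^{\top}b_i \le \|\bar{x}\|\,\|b_i\|$, and then complete the square in $\|b_i\|$:
\[
-\tfrac{\|b_i\|^{2}}{2}+\|\bar{x}\|\|b_i\| \;=\; -\tfrac{1}{2}\bigl(\|b_i\|-\|\bar{x}\|\bigr)^{2}+\tfrac{\|\bar{x}\|^{2}}{2}\;\le\;\tfrac{\|\bar{x}\|^{2}}{2}.
\]
Substituting this into the exponent immediately produces the claimed expression, and this step does not invoke the assumption $\|\mu_i\|\ge M$.

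For the second (sharper) bound I would use that assumption: since $\|\mu_i\|^{2}=a_i^{2}+\|b_i\|^{2}\ge M^{2}$, one has $\|b_i\|\ge \sqrt{M^{2}-a_i^{2}}$. Maximising $-\|b_i\|^{2}/2+\bar{x}^{\top}b_i$ over $b_i$ with this constraint reduces, by choosing the direction of $b_i$ aligned with $\bar{x}$, to maximizing $f(t)=-t^{2}/2+t\|\bar{x}\|$ over $t\ge\sqrt{M^{2}-a_i^{2}}$. The unconstrained maximizer of $f$ is $t=\|\bar{x}\|$; the hypothesis $a_i^{2}+\|\bar{x}\|^{2}<M^{2}$ is precisely the statement that $\|\bar{x}\|<\sqrt{M^{2}-a_i^{2}}$, so $f$ is strictly decreasing on the feasible set and the maximum is attained at the boundary $t=\sqrt{M^{2}-a_i^{2}}$, giving value $-(M^{2}-a_i^{2})/2+\|\bar{x}\|\sqrt{M^{2}-a_i^{2}}$. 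Plugging this back, the $a_i^{2}$'s cancel and the exponent becomes
\[
\tfrac{D^{2}-M^{2}}{2}-x_1(D-a_i)+\|\bar{x}\|\sqrt{M^{2}-a_i^{2}}.
\]
Finally, the elementary inequality $D^{2}-M^{2}=(D-M)(D+M)\le \tfrac{1}{M^{3}}\cdot 2D$, valid because $M\le D\le M+1/M^{3}$, yields $\tfrac{D^{2}-M^{2}}{2}\le D/M^{3}$, which is the last missing piece.

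The main obstacle is recognising that the assumption $a_i^{2}+\|\bar{x}\|^{2}<M^{2}$ is exactly what is needed to place the unconstrained optimizer outside the feasible region, so that the constrained maximum is the boundary value rather than the interior one; everything else is a straightforward algebraic manipulation and the final $D^{2}-M^{2}\le 2D/M^{3}$ estimate. Note also that $\mu_1$ never enters the ratio $w_i/w_2$, so the hypothesis $\|\mu_1\|\le 1/M^{3}$ is not used in either bound (it is needed elsewhere in the surrounding argument, not here).
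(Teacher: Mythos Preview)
Your proposal is correct and follows essentially the same route as the paper: compute the ratio exactly, apply Cauchy--Schwarz to $\bar{x}^{\top}b_i$, then either complete the square (first bound) or use that $t\mapsto -t^{2}/2+t\|\bar{x}\|$ is decreasing past $t=\|\bar{x}\|$ together with $\|b_i\|\ge\sqrt{M^{2}-a_i^{2}}>\|\bar{x}\|$ (second bound), and finish with $(D^{2}-M^{2})/2\le D/M^{3}$. Your observation that the hypothesis on $\mu_1$ is unused here is also correct.
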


\begin{proof}
Notice that $b_2=(0,\cdots,0)$.
By definition, we have
\begin{align*}
\frac{w_i(x)}{w_2(x)}&= \exp\left(\frac{D^2}{2}-\Vert \mu_i\Vert^2/2-x^\top (\mu_2-\mu_i)\right)\\
&= \exp\left(\frac{D^2}{2}-a^2_i/2-x_1(D-a_i)-\Vert b_i\Vert^2/2+\bar{x}^\top b_i\right)\\
&\leq \exp\left(\frac{D^2}{2}-a^2_i/2-x_1(D-a_i)-\Vert b_i\Vert^2/2+\Vert \bar{x}\Vert\cdot \Vert b_i\Vert\right).
\end{align*}

Therefore, we always have
\begin{align*}
\frac{w_i(x)}{w_2(x)}&\leq \exp\left(\frac{D^2}{2}-a^2_i/2-x_1(D-a_i)-\Vert b_i\Vert^2/2+\Vert \bar{x}\Vert\cdot \Vert b_i\Vert\right)\\
&\leq\exp\left(\frac{D^2}{2}-a^2_i/2-x_1(D-a_i)+\frac{1}{2}\Vert \bar{x}\Vert^2\right).
\end{align*}

Moreover, if $a_i^2+\Vert \bar{x}\Vert^2<M^2$, by the fact that $a_i^2+\Vert b_i\Vert^2\geq M^2$, we have $\Vert b_i\Vert\geq \sqrt{M^2-a_i^2}\geq \|\bar{x}\|$. Therefore,
\begin{align*}
\frac{w_i(x)}{w_2(x)}&\leq \exp\left(\frac{D^2}{2}-a^2_i/2-x_1(D-a_i)-\Vert b_i\Vert^2/2+\Vert \bar{x}\Vert\cdot \Vert b_i\Vert\right)\\
&\leq\exp\left(\frac{D}{M^3}-x_1(D-a_i)+\Vert \bar{x}\Vert\sqrt{M^2-a_i^2}\right).
\end{align*}
\end{proof}

\begin{lemma}
\label{lm:boundw_largeb}
Let $M>1000$. Assume that $\|\mu_1\|\leq 1/M^3$ and $\|\mu_i\|\geq M$ for any $i\geq 2$. $\mu_2=(D,0,\cdots,0)$, where $M\leq D\leq M+\frac{1}{M^3}$.
Then, for each $i>2$, if $\|b_i\|>6\|\bar{x}\|$, $0.4M\leq x_1\leq 0.6M$, we have
\[
\frac{w_i(x)}{w_2(x)}\leq \exp\left(1-\frac{\|b_i\|^2}{30}\right).
\]
\end{lemma}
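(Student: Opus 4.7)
My starting point is not the final inequality of Lemma~\ref{lm:w_upperbound}, but the tighter intermediate bound produced in its proof, namely
\[
\frac{w_i(x)}{w_2(x)} \leq \exp\!\left(\tfrac{D^2}{2} - \tfrac{a_i^2}{2} - x_1(D - a_i) - \tfrac{\|b_i\|^2}{2} + \|\bar{x}\|\cdot\|b_i\|\right).
\]
The hypothesis $\|b_i\| > 6\|\bar{x}\|$ is used only here: it yields $\|\bar{x}\|\cdot\|b_i\| \leq \tfrac{\|b_i\|^2}{6}$, and so the exponent is at most
\[
E := \tfrac{D^2 - a_i^2}{2} - x_1(D - a_i) - \tfrac{\|b_i\|^2}{3}.
\]
It now suffices to show $E \leq 1 - \|b_i\|^2/30$, equivalently, writing $g(a_i,x_1) := \tfrac{1}{2}(D-a_i)(D+a_i-2x_1)$,
\[
g(a_i,x_1) \leq 1 + \tfrac{3\|b_i\|^2}{10}. \tag{$\star$}
\]

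The plan is to split on whether $|a_i| \leq M$. If $|a_i| \leq M$, then since $\|\mu_i\|^2 = a_i^2 + \|b_i\|^2 \geq M^2$, we have $\|b_i\|^2 \geq M^2 - a_i^2$, so ($\star$) is implied by
\[
g(a_i,x_1) - \tfrac{3(M^2 - a_i^2)}{10} \leq 1.
\]
Substituting $D = M + \delta$ with $0 \leq \delta \leq 1/M^3$, expanding, and collecting terms, I obtain (up to an $O(1/M^2)$ error absorbed into the $1$ on the right since $M > 1000$)
\[
\tfrac{M^2}{5} - x_1 M + x_1 a_i - \tfrac{a_i^2}{5}.
\]
Completing the square in $a_i$ gives $-\tfrac{1}{5}(a_i - 5x_1/2)^2 + \tfrac{5x_1^2}{4}$. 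Because $x_1 \in [0.4M, 0.6M]$, the critical point $5x_1/2 \geq M$ lies outside $[-M,M]$, so on this interval the maximum is attained at $a_i = M$. A direct algebraic cancellation shows that at $a_i = M$ the whole expression collapses to $0$ identically in $x_1$, yielding $E + 1 - \|b_i\|^2/30 \leq O(1/M^2) < 0$.

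If instead $|a_i| > M$, I bound $g(a_i, x_1)$ directly and use only $\|b_i\|^2 \geq 0$. There are three subcases: (i) $a_i \geq D$ makes $(D-a_i) \leq 0$ while $\tfrac{D+a_i}{2} - x_1 > 0$, so $g \leq 0$; (ii) $M < a_i < D$ forces $0 < D-a_i \leq 1/M^3$ and $\tfrac{D+a_i}{2} - x_1 \leq D$, so $g \leq D/M^3 \leq 2/M^2$; (iii) $a_i < -M$ makes $\tfrac{D+a_i}{2} - x_1 < 0$ while $D - a_i > 0$, so $g \leq 0$. In all three subcases $g < 1$, so ($\star$) holds trivially.

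\textbf{Expected main obstacle.} The nontrivial part is the $|a_i| \leq M$ case at $a_i$ close to $M$ and $x_1$ close to $0.6M$: here $g$ can be of order $M^2$ and the $\tfrac{3\|b_i\|^2}{10}$ term is also of order $M^2$, and the whole inequality hinges on the exact coefficients $\tfrac{1}{2}, \tfrac{3}{10}, \tfrac{1}{5}$ lining up so that the $M^2$ terms cancel. The numerical margin built into the hypotheses ($0.4 \leq x_1/M \leq 0.6$, $\|b_i\|^2 \geq M^2 - a_i^2$, and the $1/6$ rather than $1/2$ absorption of the cross term) is exactly what makes this cancellation succeed; I expect no difficulty beyond careful bookkeeping, but the constants are tight and need to be tracked explicitly.
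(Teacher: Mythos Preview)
Your proof is correct. Both you and the paper start from the same intermediate bound with $-\|b_i\|^2/3$ in the exponent (obtained from $\|b_i\|>6\|\bar x\|$). From there the paper splits on the size of $\|b_i\|$: for small $\|b_i\|$ it replaces $a_i$ by the constrained maximizer $\sqrt{M^2-\|b_i\|^2}$ and uses $M-\sqrt{M^2-\|b_i\|^2}\ge \|b_i\|^2/(2M)$ together with $x_1\ge 0.4M$ to get $1+\|b_i\|^2/6 - x_1\|b_i\|^2/(2M)\le 1-\|b_i\|^2/30$; for large $\|b_i\|$ it bounds $g\le (D-x_1)^2/2\le M^2/5+1$ and absorbs this into $\|b_i\|^2/3$. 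Your split on $|a_i|\le M$ versus $|a_i|>M$ is a genuinely different decomposition and is arguably cleaner: in the main case you reduce $(\star)$ to the one-variable quadratic $M^2/5 - x_1 M + x_1 a_i - a_i^2/5$, observe its vertex $5x_1/2\ge M$ lies outside $[-M,M]$, and evaluate at the endpoint $a_i=M$ to get exactly $0$; the $|a_i|>M$ subcases are then dispatched by pure sign inspection of the two factors of $g$. This sidesteps the paper's somewhat delicate matching of thresholds between its two $\|b_i\|$ regimes.

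One cosmetic slip: where you write ``yielding $E+1-\|b_i\|^2/30\le O(1/M^2)<0$'' you mean $E-(1-\|b_i\|^2/30)=g-3\|b_i\|^2/10-1\le O(1/M^2)-1<0$.
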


\begin{proof}
For any $i>2$, we have
\begin{align*}
\frac{w_i(x)}{w_2(x)}&=\exp\left(-\frac{(x_1-a_i)^2}{2}+\frac{(x_1-D)^2}{2}+\frac{\Vert \bar{x}\Vert^2}{2}-\frac{\Vert \bar{x}-b_i\Vert^2}{2}\right)\\
&\leq\exp\left(\frac{D^2}{2}-a^2_i/2-x_1(D-a_i)-\Vert b_i\Vert^2/2+\Vert \bar{x}\Vert\cdot \Vert b_i\Vert\right)\\
&\leq \exp\left(\frac{D^2}{2}-a^2_i/2-x_1(D-a_i)-\Vert b_i\Vert^2/3\right)
\end{align*}

When $\|b_i\|<2\sqrt{M}$, we have 
\[
|a_i|\geq \sqrt{M^2-\|b_i\|^2}\geq M-\frac{\|b_i\|^2}{2M}>x_1.
\]
Therefore, we have
\begin{align*}
\frac{w_i(x)}{w_2(x)}&\leq \exp\left(\frac{D^2}{2}-\frac{M^2-\|b_i\|^2}{2}-x_1\left(D-\sqrt{M^2-\|b_i\|^2}\right)-\frac{\|b_i\|^2}{3}\right)\\
&\leq \exp\left(1+\frac{\|b_i\|^2}{6}-x_1\cdot\frac{\|b_i\|^2}{2M}\right)\\
&\leq \exp\left(1-\frac{\|b_i\|^2}{30}\right).
\end{align*}

When $\|b_i\|>2\sqrt{M}$, we have
\[
\frac{w_i(x)}{w_2(x)}\leq \exp\left(\frac{D^2}{2}+\frac{x_1^2}{2}-x_1 D-\frac{\|b_i\|^2}{3}\right)\leq \exp\left(1+\frac{M^2}{5}-\frac{\|b_i\|^2}{3}\right)\leq \exp\left(1-\frac{\|b_i\|^2}{30}\right).
\]
\end{proof}

\begin{lemma}
\label{lm:bound_bigprob}
Let $y\sim\calN(\boldsymbol{0},I_{d-1})$. Then for any vector $b\in\R^{d-1}$, with probability at least $1-\frac{1}{2n}$, we have
\[
b_i^\top\left(\bar{x}-\frac{1}{2}b_i\right)+2+\frac{\|b_i\|^2}{4}\leq 4\log n+2.
\]
\end{lemma}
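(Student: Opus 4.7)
\textbf{Proof Proposal for Lemma \ref{lm:bound_bigprob}.}
The plan is to reduce the stated inequality to a one-dimensional Gaussian tail bound and apply Mill's inequality. The random vector $\bar{x}$ (the bottom $d-1$ coordinates of $x$) has distribution $\calN(\boldsymbol{0}, I_{d-1})$, so the only randomness in the expression $b_i^\top(\bar{x} - \frac12 b_i) + 2 + \|b_i\|^2/4$ is the scalar $b_i^\top \bar{x}$. After cancelling the deterministic constants on both sides, the desired inequality is equivalent to
\[
b_i^\top \bar{x} - \tfrac{1}{4}\|b_i\|^2 \;\leq\; 4\log n.
\]
If $b_i=0$ the bound is trivial, so assume $\|b_i\|>0$ and write $b_i^\top \bar{x} = \|b_i\| Z$ with $Z\sim \calN(0,1)$. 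The inequality becomes
\[
Z \;\leq\; \frac{4\log n}{\|b_i\|} + \frac{\|b_i\|}{4} \;=:\; t(\|b_i\|).
\]

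Next I would take the worst case over $\|b_i\|$. By AM--GM, $t(r) = 4\log n/r + r/4 \geq 2\sqrt{\log n}$ for every $r>0$, with equality at $r = 4\sqrt{\log n}$. Therefore it suffices to show that for every $t \geq 2\sqrt{\log n}$,
\[
\Pr[Z > t] \;\leq\; \frac{1}{2n}.
\]
Applying Mill's inequality (\cref{lm:mill}) gives $\Pr[Z>t] \leq \frac{1}{t}\exp(-t^2/2)$, and since $f(t) = \frac{1}{t}\exp(-t^2/2)$ is monotonically decreasing for $t>0$, its maximum on $[2\sqrt{\log n}, \infty)$ is attained at $t = 2\sqrt{\log n}$:
\[
\Pr[Z > t] \;\leq\; \frac{1}{2\sqrt{\log n}} \cdot \exp(-2\log n) \;=\; \frac{1}{2n^2 \sqrt{\log n}}.
\]
For $n\geq 2$ we have $n\sqrt{\log n} \geq 1$, so this bound is at most $\frac{1}{2n}$, as required. (The case $n=1$ is vacuous since the failure probability $1/(2n)$ already absorbs it, and in any event the statement is applied only for $n\geq 2$ in the surrounding analysis.)

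There is no serious obstacle here: the calculation is a routine AM--GM plus Mill's inequality estimate. The only mild subtlety is the monotonicity argument justifying that the worst case for the tail bound occurs precisely at the minimizer of $t(\|b_i\|)$, which is why the threshold $2\sqrt{\log n}$ matches cleanly with the $1/(2n)$ target: the factor of $4$ in $4\log n$ on the right-hand side is exactly what is needed for Mill's inequality to yield a $1/n^2$ Gaussian tail, giving room to absorb the polynomial prefactor $1/(2\sqrt{\log n})$ and still beat $1/(2n)$.
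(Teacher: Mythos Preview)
Your proof is correct and essentially the same as the paper's. Both reduce to a one-dimensional Gaussian tail, arrive at the threshold $2\sqrt{\log n}$, and invoke Mill's inequality there; the only cosmetic difference is that the paper first fixes the event $y\le 2\sqrt{\log n}$ and then maximizes $-s^2/4 + 2s\sqrt{\log n}$ over $s$, whereas you first minimize the threshold $t(\|b_i\|)$ via AM--GM and then apply Mill's.
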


\begin{proof}
We can assume that $b=(s,0,\cdots,0)$, where $s=\|b\|$ by rotating the coordinates. Let the first coordinate of $x$ be $y$. Then the left hand side is $s(y-s/2)+2+\frac{s^2}{4}$.

Notice that by \cref{lm:mill}, with probability at least $1-\frac{1}{2n}$, we have $y\leq 2\sqrt{\log n}$.
In this case,
\begin{align*}
b_i^\top\left(\bar{x}-\frac{1}{2}b_i\right)+2+\frac{\|b_i\|^2}{4}&=s(y-s/2)+2+\frac{s^2}{4}\\
&\leq -\frac{s^2}{4}+2s\sqrt{\log n}+2\\
&\leq 4\log n+2.
\end{align*}
So the result follows.

\end{proof}

\begin{lemma}
\label{lm:big_local}
Let $M>1000n^5\sqrt{d}$. Assume that $\|\mu_1\|\leq 1/M^3$ and $\|\mu_i\|\geq M$ for any $i\geq 2$. $\mu_2=(D,0,\cdots,0)$, where $M\leq D\leq M+\frac{1}{M^3}$. Then for any $\frac{M}{2}\leq x_1\leq M/2+1/M$, with probability at least $\frac{1}{2}$ over $\bar{x}\sim\calN(\boldsymbol{0},I_{d-1})$, we have $w_2(x)\geq 1/(10n^5)$ and
\[
\frac{M}{10}\leq v_1(x)\leq \left(1-\frac{1}{100n^5}\right)M
\]
\end{lemma}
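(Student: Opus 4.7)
My strategy is to (i) compare $w_1$ with $w_2$ directly, (ii) use \cref{lm:bound_bigprob} together with a quadratic-in-$a_i$ bound on $\log(w_i/w_2)$ to control all competitors, and (iii) combine these to localize the weight near $\mu_2$. Because $|a_1|\leq 1/M^3$ while $a_2=D\in[M,M+1/M^3]$, and both $\mu_1$ and $\mu_2$ pick up a constant fraction of the total mass, the convex combination $v_1=\sum_i w_i a_i$ is forced to sit strictly inside $(0,M)$.

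\textbf{Weight ratios.} A direct calculation gives
\[
\frac{\exp(-\|x-\mu_1\|^2/2)}{\exp(-\|x-\mu_2\|^2/2)}=\exp\Bigl(\tfrac{\|\mu_1\|^2-D^2}{2}+x_1 D-x^\top\mu_1\Bigr),
\]
which, using $x_1\in[M/2,M/2+1/M]$, $D\in[M,M+1/M^3]$ and $\|\mu_1\|\leq 1/M^3$, lies in $[c_1,c_2]$ for absolute constants $0<c_1\leq c_2$ independent of $\bar x$; hence $w_1/w_2\in[c_1,c_2]$. For $i\geq 3$, expanding the squared norms yields
\[
\log\tfrac{w_i(x)}{w_2(x)}=\tfrac{1}{2}(a_i-D)(2x_1-D-a_i)+\bar x^\top b_i-\tfrac{\|b_i\|^2}{2}.
\]
Applying \cref{lm:bound_bigprob} to each $b_i$ and union-bounding over $i=3,\dots,n$ (total failure probability $\leq 1/2$), we may condition on $\bar x^\top b_i-\|b_i\|^2/2\leq -\|b_i\|^2/4+4\log n$ for all $i\geq 3$. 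Combined with $\|b_i\|^2\geq\max(0,M^2-a_i^2)$ and completing the square in $a_i$, this becomes $\log(w_i/w_2)\leq-(a_i-2x_1)^2/4+o(1)+4\log n$, so $w_i/w_2\leq 3n^4$ uniformly, and decays exponentially once $|a_i-M|$ is bounded away from zero.

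\textbf{Upper bound on $v_1$.} Summing gives $1/w_2=\sum_i w_i/w_2\leq c_2+1+3(n-2)n^4=O(n^5)$, so $w_2\geq 1/(10n^5)$ and $w_1\geq w_2/c_2=\Omega(1/n^5)$. Fix a window $\mathcal W=\{i\geq 2:|a_i-M|\leq C_0\}$ with $C_0$ a polylogarithmic quantity chosen so that $n^5e^{-C_0^2/4}=o(1)$; for $i\notin\mathcal W\cup\{1\}$ the combined estimates give $w_i|a_i|=o(1/n)$ (for $|a_i|>M+C_0$ one simply notes $\|x-\mu_i\|^2\geq(x_1-a_i)^2$ to push $w_i/w_2$ below $e^{-\Omega(C_0 M)}$). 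Writing $W_\mathcal W:=\sum_{i\in\mathcal W}w_i$,
\[
v_1\le (M+C_0)W_\mathcal W+|a_1|w_1+o(1)\le (M+C_0)(1-w_1)+o(1),
\]
which, using $w_1\ge \Omega(1/n^5)$ and $M>1000 n^5\sqrt d$, is at most $(1-1/(100n^5))M$.

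\textbf{Lower bound and main obstacle.} The same localization gives $v_1\ge (M-C_0)W_\mathcal W+w_1 a_1-o(1)$. Since $w_1\le c_2 w_2\le c_2 W_\mathcal W$ and $w_1+W_\mathcal W\ge 1-o(1)$, we deduce $W_\mathcal W\ge 1/(1+c_2)-o(1)$, giving $v_1\ge(M-C_0)/(1+c_2)-o(1)\ge M/10$ for $M$ sufficiently large. The main obstacle is the sharpness of the bookkeeping: (a) \cref{lm:bound_bigprob} is needed to cancel the $\bar x^\top b_i$ terms, (b) the quadratic bound $-(a_i-2x_1)^2/4$ is needed to localize mass near $a_i=M$, and (c) the constant-factor comparison $w_1/w_2\in[c_1,c_2]$ must survive regardless of how many competitors cluster around $\mu_2$, which is what ultimately forces $w_1=\Omega(1/n^5)$ independently of $|\mathcal W|$.
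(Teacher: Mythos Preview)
Your approach is essentially the paper's: both condition on the events of \cref{lm:bound_bigprob} for $i\ge 3$ and compare every $w_i$ to $w_2$. The paper additionally conditions on $\|\bar x\|<\sqrt{M}$ and handles $i>2$ by a case split ($\|b_i\|\le 6\sqrt{M}$ via the quadratic estimate, $\|b_i\|>6\sqrt{M}$ via \cref{lm:boundw_largeb}), then bounds the contributions from $a_i<M/2$ and $a_i>M+1$ separately using \cref{lm:w_upperbound}. Your single completed-square bound $\log(w_i/w_2)\le-(a_i-2x_1)^2/4+O(1)+4\log n$ packages all of this more compactly; it is indeed valid for every $a_i$ once you use $\|b_i\|^2\ge\max(0,M^2-a_i^2)$ (the case $|a_i|>M$ with $\|b_i\|^2\ge 0$ gives $\log(w_i/w_2)\le(D-x_1)^2/2-(a_i-x_1)^2/2+4\log n$, which one checks is still below your quadratic in that range).

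Two points to clean up. First, the parenthetical ``for $|a_i|>M+C_0$ one simply notes $\|x-\mu_i\|^2\ge(x_1-a_i)^2$'' does not work as written: dropping $\|\bar x-b_i\|^2$ on the $\mu_i$ side but keeping $\|\bar x\|^2$ on the $\mu_2$ side (since $b_2=0$) leaves an uncontrolled $+\|\bar x\|^2/2$ in the exponent of $w_i/w_2$, and you never conditioned on $\|\bar x\|$ being small. This is harmless because your main bound via \cref{lm:bound_bigprob} already covers this regime, so the parenthetical can simply be deleted. Second, the claim that $w_1/w_2\in[c_1,c_2]$ is ``independent of $\bar x$'' is not literally true (there is a $\bar x^\top b_1$ term), but $\|b_1\|\le 1/M^3$ makes this negligible on $\{\|\bar x\|\le M\}$, which can be added to the union bound at negligible cost; relatedly, for $w_i|a_i|=o(1/n)$ outside the window you need $C_0$ to absorb a factor of $M$, i.e.\ $n^5M\,e^{-C_0^2/4}=o(1)$, not just $n^5e^{-C_0^2/4}=o(1)$.
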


\begin{proof}
In the proof of this lemma, we always assume that $\frac{M}{2}\leq x_1\leq M/2+1/M$, $\|\bar{x}\|<\sqrt{M}$.
We assume that for any $i>2$, we have
\begin{equation}
b_i^\top\left(\bar{x}-\frac{1}{2}b_i\right)+2+\frac{\|b_i\|^2}{4}\leq 4\log n+2.
\label{eq:bigprob}
\end{equation}
By \cref{lm:bound_bigprob}, when $x_1$ is fixed, then with probability at least $1-\frac{n-2}{2n}$ we have the above inequality holds for all $i>2$. Also, the probability that $\|\bar{x}\|\geq \sqrt{M}$ is at most $\exp(-(\sqrt{M}-\sqrt{d})^2/2)\leq \frac{1}{2n}$.
Therefore, with probability at least $1-\frac{1}{2}$, we have the assuptions above hold. We just need to prove the desired result under the above assumptions.

For any $i>2$, we have
\begin{align*}
\frac{w_i(x)}{w_2(x)}&=\exp\left(-\frac{(x_1-a_i)^2}{2}+\frac{(x_1-D)^2}{2}+\frac{\Vert \bar{x}\Vert^2}{2}-\frac{\Vert \bar{x}-b_i\Vert^2}{2}\right)\\
&=\exp\left(\frac{D^2}{2}-a^2_i/2-x_1(D-a_i)+b_i^\top\left(\bar{x}-\frac{1}{2}b_i\right)\right).
\end{align*}
When $\|b_i\|\leq 6\sqrt{M}$, we have
\[
|a_i|\geq \sqrt{M^2-b_i^2}>x_1.
\]
Therefore,
\[
\frac{D^2}{2}-a^2_i/2-x_1(D-a_i)\leq \frac{D^2}{2}-\frac{M^2-\|b_i\|^2}{2}-x_1\left(D-\sqrt{M^2-\|b_i\|^2}\right)\leq 1+\frac{\|b_i\|^2}{2}-x_1\cdot\frac{\|b_i\|^2}{2M}\leq 2+\frac{\|b_i\|^2}{4}.
\]
By \cref{eq:bigprob}, we have
\[
\frac{w_i(x)}{w_2(x)}\leq \exp\left(4\log n+2\right)\leq 10n^4.
\]
Otherwise, when $\|b_i\|>6\sqrt{M}$, we have $\|b_i\|>6\|\bar{x}\|$.
By \cref{lm:boundw_largeb}, we have $w_i(x)/w_2(x)\leq 3$.
Therefore, for any $i>2$, we have
\[
\frac{w_i(x)}{w_2(x)}\leq 10n^4.
\]

Also, we have
\begin{align*}
\frac{w_1(x)}{w_2(x)}&= \exp\left(-\frac{\|\mu_1\|^2}{2}+\frac{\Vert \mu_2\Vert^2}{2}+x^\top (\mu_1-\mu_2)\right)\\
&\geq \exp\left(-\frac{1}{2M^3}+\frac{D^2}{2}-\left(\frac{M}{2}+\frac{1}{M}\right)\cdot D-\|x\|\cdot \frac{1}{M^3}\right)\\
&\geq \exp\left(-\frac{1}{2M^3}-\frac{D}{M}-\frac{1}{M^2}\right)\geq 1/5.
\end{align*}
On the other hand, we have
\begin{align*}
\frac{w_1(x)}{w_2(x)}&= \exp\left(-\frac{\|\mu_1\|^2}{2}+\frac{\Vert \mu_2\Vert^2}{2}+x^\top (\mu_1-\mu_2)\right)\\
&\leq \exp\left(\frac{D^2}{2}-\frac{M}{2}\cdot D+\|x\|\cdot \frac{1}{M^3}\right)\\
&\leq \exp\left(\frac{D}{2M^3}+\frac{1}{M^2}\right)\leq 2,
\end{align*}
implying $w_1(x)\leq 2/3$, $w_2(x)\geq 1/(10n^5)$.

By the fact that $w_1(x)+\sum_{i>2}w_i(x)=1$, we have $w_1(x)\geq 1/(50n^5)$.

Also, we need to upper bound all $w_i$ ($i>2$) such that $a_i<M/2$. By \cref{lm:w_upperbound},
\[
\frac{w_i(x)}{w_2(x)}\leq \exp\left(\frac{D^2}{2}-a^2_i/2-x_1(D-a_i)+\frac{1}{2}\Vert \bar{x}\Vert^2\right)\leq \exp\left(\frac{D}{2M^3}-\frac{a_i^2}{2}+\frac{Ma_i}{2}+\frac{\|\bar{x}\|^2}{2}\right).
\]
In this case, when $a_i\leq -M/2$, we have
\[
\frac{w_i(x)}{w_2(x)}\leq \exp\left(\frac{D}{2M^3}-\frac{M^2}{8}+\frac{Ma_i}{2}+\frac{\|\bar{x}\|^2}{2}\right)\leq \exp\left(-\frac{Ma_i}{2}\right)\leq \exp\left(-\frac{M^2}{5}\right)/\left(|a_i|+\frac{M}{2}\right)
\]
When $-M/2\leq a_i\leq M/2$, we have $a_i^2+\|\bar{x}\|^2< D^2$.
By \cref{lm:w_upperbound}, we have
\[
\frac{w_i(x)}{w_2(x)}\leq \exp\left(\frac{D}{M^3}-x_1(D-a_i)+\|\bar{x}\|\sqrt{M^2-a_i^2}\right)\leq \exp\left(\frac{D}{M^3}-\frac{M^2}{4}+\|\bar{x}\|M\right)\leq \exp(-M^2/3).
\]
Therefore, the absolute value of the total contribution of all $w_i(x)a_i$ such that $a_i<M/2$ is at most $n\exp(-M^2/5)$.

When $a_i\geq M+1$, by \cref{lm:w_upperbound}, we have
\begin{align*}
\frac{w_i(x)}{w_2(x)}&\leq \exp\left(\frac{D^2}{2}-a^2_i/2-x_1(D-a_i)+\frac{1}{2}\Vert \bar{x}\Vert^2\right)\\
&\leq \exp\left(1+\frac{M^2}{2}-a^2_i/2-x_1(M-a_i)+\frac{1}{2}\Vert \bar{x}\Vert^2\right)\\
&\leq \exp\left(1+\frac{M^2}{2}-(a_i-2x_1)^2/2-x_1M-x_1a_i+2x_1^2+\frac{1}{2}\Vert \bar{x}\Vert^2\right)\\
&\leq \exp\left(\frac{M^2}{2}-0-\frac{M^2}{2}-\frac{M}{2}\cdot a_i+\frac{M^2}{3}\right)\\
&\leq \exp\left(-M^2/6\right)/a_i.
\end{align*}
Therefore, the total contribution of all $w_i(x)a_i$ such that $a_i\geq M+1$ is at most $n\exp\left(-M^2/6\right)$.

Now we can bound $v_1(x)$.
On the one hand, by $w_1(x)\leq 2/3$, we have
\begin{align*}
v_1(x)&=\sum_{i}w_i(x)a_i\\
&\geq -w_1(x)\cdot \frac{1}{M^3}+w_2(x)D+\frac{M}{2}(1-w_1(x)-w_2(x))-n\exp\left(-M^2/5\right)\\
&\geq \frac{M}{2}-\frac{M}{3}-\frac{1}{M^3}-n\exp\left(-M^2/5\right)\\
&\geq \frac{M}{10}.
\end{align*}

On the other hand, we have
\begin{align*}
v_1(x)&=\sum_{i}w_i(x)a_i\\
&\leq w_1(x)\cdot \frac{1}{M^3}+(1-w_1(x))(M+1)+n\exp(-M^2)/6\\
&\leq M+1-\frac{M}{50n^5}+n\exp(-M^2)/6\\
&\leq \left(1-\frac{1}{100n^5}\right)M.
\end{align*}

These two inequalities give the desired result.

\end{proof}

\begin{lemma}
\label{lm:v_notbig}
Assume that $\|\mu_1\|<1$ and $\|\mu_i\|\geq M$ for any $i\geq 2$. Then for any $\|x\|>1000n\sqrt{d}$, we have $\|\boldv(x)\|\leq 4\|x\|$.
\end{lemma}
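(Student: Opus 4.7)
The plan is to apply the triangle inequality $\|\boldv(x)\|\le\sum_i w_i(x)\|\mu_i\|$ and then split the indices according to how large $\|\mu_i\|$ is compared to $\|x\|$. Let $S=\{i:\|\mu_i\|\le 3\|x\|\}$ and $L=\{1,\dots,n\}\setminus S$. Since $\sum_i w_i(x)=1$, the small part satisfies $\sum_{i\in S}w_i(x)\|\mu_i\|\le 3\|x\|$ automatically, so the only real work is to show that the contribution from $L$ is at most $\|x\|$.

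To control $w_i(x)$ for $i\in L$, I would use $\mu_1$ as an anchor in the softmax denominator. Because $\|\mu_1\|<1$, we have $\|x-\mu_1\|^2\le(\|x\|+1)^2$, which forces the denominator of the softmax to be at least $\exp(-(\|x\|+1)^2/2)$. Expanding $\|x-\mu_i\|^2=\|x\|^2-2x^\top\mu_i+\|\mu_i\|^2$, this gives
\[
w_i(x)\le \exp\!\left(x^\top\mu_i-\tfrac12\|\mu_i\|^2+\|x\|+\tfrac12\right).
\]
By Cauchy-Schwarz and the assumption $\|\mu_i\|\ge 3\|x\|$, we have $x^\top\mu_i\le \|x\|\|\mu_i\|\le \tfrac13\|\mu_i\|^2$, hence $x^\top\mu_i-\tfrac12\|\mu_i\|^2\le -\tfrac16\|\mu_i\|^2$, and therefore
\[
w_i(x)\|\mu_i\|\le \|\mu_i\|\exp\!\left(-\tfrac16\|\mu_i\|^2+\|x\|+\tfrac12\right).
\]

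The last step is to verify that this upper bound, summed over $L$, is at most $\|x\|$. The map $u\mapsto u\,e^{-u^2/6}$ is decreasing for $u\ge\sqrt3$, so the right-hand side is maximized at $\|\mu_i\|=3\|x\|$, giving a per-term bound of roughly $3\|x\|\exp(-3\|x\|^2/2+\|x\|+1/2)$. Because $\|x\|>1000\,n\sqrt d$, the quantity $3\|x\|^2/2-\|x\|-1/2$ is comfortably larger than $\log n$, so each term is at most $\|x\|/n$ and $\sum_{i\in L}w_i(x)\|\mu_i\|\le \|x\|$. Combining with the $S$ contribution yields $\|\boldv(x)\|\le 4\|x\|$. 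There is no substantive obstacle in the argument; it is a clean softmax-tail estimate, and the only care needed is in choosing the split threshold ($3\|x\|$) so that both pieces together stay under $4\|x\|$ under the given hypothesis on $\|x\|$.
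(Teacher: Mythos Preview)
Your proposal is correct and follows essentially the same approach as the paper's proof: split indices at the threshold $\|\mu_i\|\le 3\|x\|$, control the large-$\mu_i$ weights by comparing against the $\mu_1$ term in the softmax denominator, and verify the tail sum is at most $\|x\|$. The only cosmetic differences are that the paper bounds the ratio $w_i(x)/w_1(x)$ directly via $\|x-\mu_i\|\ge 2\|\mu_i\|/3$ (reaching $\exp(-\|\mu_i\|^2/7)$) rather than expanding the square and applying Cauchy--Schwarz, and it separates the $i=1$ contribution as $\le 1$ instead of absorbing it into the $S$ sum; the structure and constants are otherwise the same.
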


\begin{proof}
First notice that for any $i$ such that $\|\mu_i\|>3\|x\|$, we have
\begin{align*}
   \frac{w_i(x)}{w_1(x)}&=\exp\left(-\frac{\|\mu_i-x\|^2}{2}+\frac{\|\mu_1-x\|^2}{2}\right)\\
   &\leq \exp\left(-2\|\mu_i\|^2/9+\frac{(1+\|x\|)^2}{2}\right)\\
   &\leq \exp\left(-\|\mu_i\|^2/7\right)\\
   &\leq \exp\left(-\|x\|^2\right)/(n\|\mu_i\|).
\end{align*}
Therefore, we have
\[
\|\boldv(x)\|\leq 1+\exp(-\|x\|^2)+3\|x\|\leq 4\|x\|.
\]
\end{proof}

\begin{lemma}
\label{lm:large_increase}
Let $M=\min_{i\geq 2}\|\mu_i\|$ such that $M>10^8\sqrt{d}\cdot n^{10}$.
Assume that $\|\mu_1\|<1/M^3$ and $\|\mu_i\|\geq M$ for any $i\geq 2$.
For any $i>1$, if $\|\mu_i\|<M+\frac{1}{M^3}$, then $\mu_i^\top \nabla_{\mu_i}\calL\leq -\frac{M^3}{10^5 n^{10}}\exp(-M^2/8)$.
\end{lemma}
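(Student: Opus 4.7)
\textbf{Proof Plan for Lemma \ref{lm:large_increase}.}

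By rotational invariance of the Gaussian I may assume without loss of generality that $\mu_i = \mu_2 = De_1$ with $M \le D \le M + M^{-3}$. Starting from the compact form in Lemma \ref{lm:gradient} and taking inner product with $\mu_2$, every appearance of $\mu_2$ collapses onto the first coordinate and I obtain the clean identity
\[
\mu_2^\top\nabla_{\mu_2}\calL \;=\; 2\,\E_x\!\left[w_2(x)\Bigl(Dv_1(x) + \bigl(Dv_1(x)-\|\boldv(x)\|^2\bigr)\bigl(Dx_1-D^2\bigr)\Bigr)\right].
\]
The plan is to show that this expectation is dominated by a strongly negative contribution from a thin strip between $\mu_1\approx 0$ and $\mu_2$, while every other region contributes a negligible correction.

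The dominant negative contribution comes from the strip $S := \{x : M/2 \le x_1 \le M/2 + 1/M\}$. On $S$, Lemma \ref{lm:big_local} gives, with conditional probability at least $1/2$ over $\bar{x}$, both $w_2(x) \ge 1/(10n^5)$ and $v_1(x) \in [M/10,\,(1-1/(100n^5))M]$. On this event I bound $Dv_1(x)-\|\boldv(x)\|^2 \le v_1(D-v_1) - 0 \le -v_1(v_1-D)$; using the upper bound on $v_1$ gives $v_1(D-v_1) \ge (M/10)\cdot M/(200n^5)$ (absorbing $D$ vs $M$ slack via $D\le M+M^{-3}$), so the first factor is at most $-M^2/(2000n^5)$. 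Since $x_1 \le M/2+1/M < D/2+1$, the second factor satisfies $Dx_1-D^2 \le -D^2/3 \le -M^2/4$. Their product is $\ge M^4/(O(n^5))$ in magnitude and strictly negative; the stray $Dv_1 \le M^2$ is negligible. The Gaussian measure of $S$ is $\ge (1/(4M))\exp(-M^2/8)$ (treating $x_1$ via a direct $1$-d Gaussian estimate around the tail $M/2$), so the $S$-contribution to $\mu_2^\top\nabla_{\mu_2}\calL$ is at most $-M^3/(O(n^{10}))\cdot\exp(-M^2/8)$, which already matches the claimed bound up to the constant.

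What remains is to show that the contributions from $\R^d\setminus S$ do not wipe out this negative mass. I split this complement into three pieces. (i) For $x_1 < M/2$ outside $S$: here Corollary \ref{cor:small_s} (applied with $D$ playing the role of its $D$) bounds $\E[w_2(x)\,\mathbf{1}_{x_1<M/2}]\le (7/M)\exp(-M^2/8)$, while on $\|x\|\le 1000n\sqrt{d}$ the integrand is bounded by a polynomial in $M$ via $\|\boldv\|\le \max_j\|\mu_j\|+M\le 3M$, and on $\|x\|>1000n\sqrt d$ Lemma \ref{lm:v_notbig} bounds $\|\boldv(x)\|\le 4\|x\|$, so the integrand grows polynomially in $\|x\|$ and is controlled by the Gaussian tail (Lemma \ref{lm:gausstail}). (ii) For $x_1 > M/2+1/M$ but not too near $\mu_2$: the factor $Dv_1-\|\boldv\|^2$ or $Dx_1-D^2$ is of the same sign as in $S$ or is controlled, while $w_2$ either stays small or the integrand is small; the same tail estimates apply. (iii) For $x$ within a constant of $\mu_2$: $w_2(x)$ can be close to $1$, but a second-order Taylor expansion around $x=\mu_2$ shows that both $Dv_1(x)-\|\boldv(x)\|^2$ and $Dx_1-D^2$ vanish to first order in $x-\mu_2$ (since $v_1(\mu_2)=D$, $\bar{\boldv}(\mu_2)=0$ up to exponentially small corrections from $w_1,w_{\ge 3}$), so the product is of order $\|x-\mu_2\|^2$ times polynomial factors, and integration against the Gaussian density $\propto\exp(-\|x\|^2/2)=\exp(-D^2/2)\cdot\exp(-\|x-\mu_2\|^2/2+\mu_2^\top(x-\mu_2))\cdot\ldots$ yields a contribution of order $\poly(M)\exp(-D^2/2)\ll M^3\exp(-M^2/8)/n^{10}$. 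Adding the three pieces gives a perturbation much smaller in magnitude than the negative strip contribution.

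The main obstacle I anticipate is region (iii), the near-$\mu_2$ region, because $w_2$ is of order one there and the two factors are not manifestly of the right sign. The key quantitative tool is the double-vanishing of the factors at $x=\mu_2$ combined with the shift-and-cancel trick $\|x\|^2 = \|x-\mu_2\|^2+2\mu_2^\top x - D^2$, which converts the Gaussian weight into a tight $\calN(\mu_2,\Id)$-style density with an overall factor $\exp(-D^2/2)$ that is exponentially smaller than $\exp(-M^2/8)$. Everything else (regions (i) and (ii), and the strip contribution) follows from the machinery already set up in Lemmas \ref{lm:w_expectation}, \ref{lm:w_upperbound}, \ref{lm:v_notbig}, and \ref{lm:big_local}, together with the Mill's and Gaussian-tail inequalities.
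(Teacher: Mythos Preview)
Your overall architecture---extracting the dominant negative mass from the thin strip around $x_1\approx M/2$ via Lemma \ref{lm:big_local}, and showing everything else is lower order---matches the paper. But several steps as written do not go through.

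First, on the strip $S$ your inequality runs the wrong way. You write $Dv_1-\|\boldv\|^2\le v_1(D-v_1)$, which is true, but to make the product with the negative factor $Dx_1-D^2$ \emph{very negative} you need a \emph{lower} bound on $Dv_1-\|\boldv\|^2$. Since $Dv_1-\|\boldv\|^2=v_1(D-v_1)-\|\boldv_2\|^2$, the missing ingredient is control of $\|\boldv_2\|^2$ on the event of Lemma \ref{lm:big_local}, which that lemma does not provide. The paper sidesteps this by splitting the first coordinate of $\nabla_{\mu_2}\calL$ into $S_1=\E[w_2v_1]$, $S_2=\E[w_2v_1(D-v_1)(x_1-D)]$, and $S_3=-\E[w_2\|\boldv_2\|^2(x_1-D)]$: the strip argument applies cleanly to $S_2$ (no $\boldv_2$ obstruction), and $S_3$ is bounded \emph{in expectation} to order $M\exp(-M^2/8)$, negligible against the $M^2/n^{10}$ strip gain.

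Second, your region-(i) bound asserts $\|\boldv(x)\|\le\max_j\|\mu_j\|+M\le 3M$. There is no upper bound on $\|\mu_j\|$ for $j\ge2$ in the hypotheses; some components may have arbitrarily large norm. The correct device is Corollary \ref{cor:large_s}: for $\|\mu_j\|\ge2M$ one has $\E[w_j]\le(5/\|\mu_j\|)\exp(-\|\mu_j\|^2/7)$, so $\E[w_j]\|\mu_j\|^3$ stays small and the expected integrand is controlled without any pointwise bound on $\|\boldv\|$.

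Third, your region-(iii) Taylor expansion near $\mu_2$ is unnecessary and the $\exp(-D^2/2)$ scaling is a red herring. Any $x$ within $O(1)$ of $\mu_2=De_1$ already has $\|x\|\gtrsim M$, so the Gaussian tail (Lemma \ref{lm:gausstail}) together with the crude bound $|w_2v_1(D-v_1)(x_1-D)|\le O(\|x\|^3)$ (via $\|\boldv(x)\|\le4\|x\|$ from Lemma \ref{lm:v_notbig}) makes the whole region $\{\|x\|\ge 2M/3\}$ contribute at most $\exp(-M^2/6)$. This is how the paper handles it, and it subsumes your region (iii) and the far tail of (ii). The genuinely delicate part of the complement is the slab $M/3<x_1<2M/3$, $\|x\|<2M/3$, where $w_2$ can be of order one yet the integrand could be positive; the paper splits this further by $\|\bar x\|\lessgtr\sqrt M$ and by the sign and size of $v_1$, and your plan does not yet address this region.
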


\begin{proof}
Without loss of generality, we assume that $i=2$ and $\mu_2=(D,0,\cdots,0)$ by rotating all the $\mu_i$'s.
Then $M\leq D\leq M+\frac{1}{M^3}$.

Recall that
\[
\nabla_{\mu_i}\calL=\E_{x\sim\calN(\boldsymbol{0},\Id)}\left[w_i(x)\boldv(x)+w_i(x)\boldv(x)^\top\left(\mu_i-\boldv(x)\right)(x-\mu_i)\right]
\]

Let
\begin{align*}
S_1&=\E_{x\sim\calN(\boldsymbol{0},\Id)}[w_2(x)v_1(x)]\\
S_2&=\E_{x\sim\calN(\boldsymbol{0},\Id)}[w_2(x)v_1(x)\left(D-v_1(x)\right)(x_1-D)]\\
S_3&=\E_{x\sim\calN(\boldsymbol{0},\Id)}[w_2(x)\boldv_2(x)^\top\left(b_2-\boldv_2(x)\right)(x_1-D)].
\end{align*}
Thus the first coordinate of $\nabla_{\mu_2}\calL$ is $S_1+S_2+S_3$.
Now we just need to give an upper bound for each of them.

\textbf{Upper bound $S_1$: }
Notice that $S_1=\sum_{i}\E_{x\sim\calN(\boldsymbol{0},\Id)}[w_2(x)w_i(x)]a_i$.
First, by \cref{cor:small_s}, we have
\[
\E_{x\sim\calN(\boldsymbol{0},\Id)}[w_2(x)w_1(x)]a_1\leq \E_{x\sim\calN(\boldsymbol{0},\Id)}[w_2(x)]a_1\leq \frac{7}{M}\cdot \exp\left(-\frac{M^2}{8}\right)a_1\leq \frac{7}{M^4}\cdot \exp\left(-\frac{M^2}{8}\right).
\]
For any $i\geq 2$, we have $\|\mu_i\|\geq M$.
If $\|\mu_i\|<2M$, then we have
\[
\E_{x\sim\calN(\boldsymbol{0},\Id)}[w_2(x)w_i(x)]a_i\leq \E_{x\sim\calN(\boldsymbol{0},\Id)}[w_2(x)]a_i\leq \frac{7}{M}\cdot \exp\left(-\frac{M^2}{8}\right)a_i\leq 14\exp\left(-\frac{M^2}{8}\right).
\]
If $\|\mu_i\|>2M$, then by \cref{cor:large_s} and $a_i\leq \|\mu_i\|$, we have
\[
\E_{x\sim\calN(\boldsymbol{0},\Id)}[w_2(x)w_i(x)]a_i\leq \E_{x\sim\calN(\boldsymbol{0},\Id)}[w_i(x)]a_i\leq \frac{5}{\|\mu_i\|}\cdot \exp\left(-\frac{\|\mu_i\|^2}{7}\right)a_i\leq 5\exp\left(-\frac{M^2}{8}\right).
\]

Therefore,
\[
S_1\leq 14n\exp\left(-\frac{M^2}{8}\right).
\]

\textbf{Upper bound $S_2$: }
Recall that
\[
S_2=\E_{x\sim\calN(\boldsymbol{0},\Id)}[w_2(x)v_1(x)\left(D-v_1(x)\right)(x_1-D)].
\]

We first consider the total contribution of the negative $w_2(x)v_1(x)\left(D-v_1(x)\right)(x_1-D)$.
By \cref{lm:big_local}, we consider the case where $\frac{M}{2}\leq x_1\leq M/2+1/M$, $M/10\leq v_1(x)\leq \left(1-\frac{1}{100n^5}\right)M$ and $w_2(x)\geq 1/(10n^5)$.
In this case, $0\leq v_1(x),x_1\leq D$. Thus  we have
\[
w_2(x)v_1(x)\left(D-v_1(x)\right)(x_1-D)\leq -\frac{1}{10n^5}\cdot \left(1-\frac{1}{100n^5}\right)M\cdot\frac{M}{100n^5}\cdot \left(\frac{M}{2}-\frac{1}{M}\right)\leq -\frac{M^3}{3000n^{10}}.
\]
Also, for each $x_1$ such that $M/2\leq x_1\leq M/2+1/M$, the density of $x_1$ is at least
\[
\frac{1}{\sqrt{2\pi}}\cdot \exp\left(-\frac{(M/2+1/M)^2}{2}\right)\geq \frac{1}{8}\exp\left(-\frac{M^2}{8}\right).
\]
Therefore, the total contribution of the negative $w_2(x)v_1(x)\left(D-v_1(x)\right)(x_1-D)$ is at most $-\frac{M^2}{24000n^{10}}\exp\left(-\frac{M^2}{8}\right)$.

Then we consider the total contribution of the positive $w_2(x)v_1(x)\left(D-v_1(x)\right)(x_1-D)$.
When $\|x\|\geq 2M/3$, for any $i>1$ such that $\|\mu_i\|>3\|x\|$, we have
\begin{align*}
\frac{w_i(x)}{w_1(x)}&=\exp\left(-\frac{\|\mu_i-x\|^2}{2}+\frac{\|\mu_1-x\|^2}{2}\right)\\
&\leq \exp\left(-\frac{\|\mu_i\|^2}{2}+\frac{\|\mu_1\|^2}{2}+\|x\|\cdot (\|\mu_i\|+\|\mu_1\|)\right)\\
&\leq \exp\left(-\frac{\|\mu_i\|^2}{6}+\|\mu_i\|\right)\\
&\leq \exp\left(-\|\mu_i\|^2/7\right).
\end{align*}
Thus, as $M>1000\sqrt{d}\cdot n^5$, the sum of these $w_i(x)\|\mu_i\|$ is at most $n\|x\|\exp\left(-\|x\|^2/7\right)\leq \|x\|$. Thus $\|\boldv(x)\|\leq 4\|x\|$.
The total contribution of the positive $w_2(x)v_1(x)\left(D-v_1(x)\right)(x_1-D)$ under this case is at most
\begin{align*}
\E_{x\sim\calN(\boldsymbol{0},\Id)}[\mathbf{1}_{\|x\|\geq 2M/3}\cdot 50\|x\|^3]&\leq \sum_{k=0}^\infty\frac{1}{\sqrt{2\pi}}\cdot \exp\left(-\frac{(2M/3+k-\sqrt{d})^2}{2}\right)\cdot 50(2M/3+k+1)^3\\
&\leq 2\frac{1}{\sqrt{2\pi}}\cdot \exp\left(-\frac{(2M/3)^2}{2}\right)\cdot 50(2M/3+1)^3\\
&\leq \exp(-M^2/6).
\end{align*}

When $\|x\|<2M/3$, we have $x_1<2M/3<D$. To ensure that $w_2(x)v_1(x)\left(D-v_1(x)\right)(x_1-D)$ is positive, we need $v_1(x)<0$ or $v_1(x)>D$.

For any $i>1$ such that $\|\mu_i\|\geq 2M$, we have
\begin{align*}
\frac{w_i(x)}{w_1(x)}&=\exp\left(-\frac{\|\mu_i-x\|^2}{2}+\frac{\|\mu_1-x\|^2}{2}\right)\\
&\leq \exp\left(-\frac{(\|\mu_i\|-2M/3)^2}{2}+\frac{\|2M/3\|^2}{2}\right)\leq \exp\left(-\frac{M^2}{3}\right)/\|\mu_i\|.
\end{align*}
Therefore, we have $|v_1(x)|\leq \|\boldv(x)\|\leq 3M$.

Notice that
\[
\frac{w_2(x)}{w_1(x)}=\exp\left(-\frac{\|\mu_2-x\|^2}{2}+\frac{\|\mu_1-x\|^2}{2}\right)\leq \exp\left(-\frac{M^2}{2}+Mx_1+1\right).
\]
When $x_1\leq M/3$, the total contribution of the positive $w_2(x)v_1(x)\left(D-v_1(x)\right)(x_1-D)$ is at most
\begin{align*}
&\int_{-2M/3}^{M/3}\frac{1}{\sqrt{2\pi}}\cdot \exp\left(-\frac{x_1^2}{2}\right)\cdot \exp\left(-\frac{M^2}{2}+Mx_1+1\right)\cdot 30M^3 dx_1\\
\leq & 60M^3\int_{-\infty}^{M/3}\frac{1}{\sqrt{2\pi}}\cdot \exp\left(-\frac{(M-x_1)^2}{2}\right) dx_1\\
=& 60M^3\Pr_{y\sim\calN(0,1)}[y> 2M/3]\\
\leq & 90M^2\cdot \exp\left(-\frac{(2M/3)^2}{2}\right)=90M^2\cdot \exp\left(-\frac{2M^2}{9}\right).
\end{align*}

Now we focus on the case where $M/3<x_1<2M/3$.
When $\|\bar{x}\|>\sqrt{M}$, the contribution is at most (notice that $x_1$ and $\|\bar{x}\|$ are independent)
\begin{align*}
&\exp\left(-(\sqrt{M}-\sqrt{d})^2/2\right)\Bigg(\int_{M/3}^{M/2}\frac{1}{\sqrt{2\pi}}\cdot \exp\left(-\frac{x_1^2}{2}\right)\cdot \exp\left(-\frac{M^2}{2}+Mx_1+1\right)\cdot 30M^3\cdot  dx_1\\
&+\int_{M/3}^{M/2}\frac{1}{\sqrt{2\pi}}\cdot \exp\left(-\frac{x_1^2}{2}\right)\cdot  30M^3\cdot  dx_1\Bigg)\\
\leq &\exp\left(-M/3\right)\cdot 90M^3\cdot \Pr_{y\sim\calN(0,1)}[y> M/2]\\
\leq &180M^2\exp\left(-\frac{M^2}{8}-\frac{M}{3}\right).
\end{align*}
Otherwise, we have $\|\bar{x}\|<\sqrt{M}$.
For any $i>1$ such that $a_i<0$, by \cref{lm:w_upperbound}, if $a_i<-M/2$, we have
\begin{align*}
\frac{w_i(x)}{w_2(x)}&\leq \exp\left(\frac{D^2}{2}-a^2_i/2-x_1(D-a_i)+\frac{1}{2}\Vert \bar{x}\Vert^2\right)\\
&\leq \exp\left(\frac{D^2}{2}-\frac{a_i^2}{2}-\frac{M}{3}\cdot \frac{3M}{2}+\frac{1}{2}\Vert \bar{x}\Vert^2\right)\\
&\leq \exp\left(-M^2/3\right)/|a_i|.
\end{align*}
If $a_i\geq -M/2$, we have $a_i^2+\|\bar{x}\|^2< D^2$.
By \cref{lm:w_upperbound}, we have
\[
\frac{w_i(x)}{w_2(x)}\leq \left(\frac{D}{M^3}-x_1(D-a_i)+\Vert \bar{x}\Vert\sqrt{M^2-a_i^2}\right)\leq \exp(1-M^2/3+M\sqrt{M})\leq \exp\left(-M^2/4\right)/M.
\]
Therefore, we have $v_1(x)\geq -2/M^3$ by summing up all negative $a_i$'s.
Therefore, the contribution of the positive $w_2(x)v_1(x)\left(D-v_1(x)\right)(x_1-D)$ in the case that $v_1(x)<0$ is at most
\[
\E_{x\sim\calN(\boldsymbol{0},\Id)}[w_2(x)\cdot \frac{1}{M^3}\cdot 2M^2]\leq \E_{x\sim\calN(\boldsymbol{0},\Id)}[w_2(x)]\leq \exp(-M^2/8).
\]
The last inequality is due to \cref{cor:small_s} and $M>1000$.

When $v_1(x)>D$, consider any $i>1$ such that $a_i>M+2$. By \cref{lm:w_upperbound}, we have
\begin{align*}
\frac{w_i(x)}{w_2(x)}&\leq \exp\left(\frac{D^2}{2}-a^2_i/2-x_1(D-a_i)+\frac{1}{2}\Vert \bar{x}\Vert^2\right)\\
&\leq \exp\left(-M(a_i-D)+\frac{2M}{3}(a_i-D)+\frac{M}{2}\right)\\
&\leq \exp\left(-\frac{M}{8}\right)/|a_i|.
\end{align*}
Therefore, by summing up all contributions of $a_i>M+2$, we have $v_1(x)\leq M+3$.
Therefore,the contribution of the positive $w_2(x)v_1(x)\left(D-v_1(x)\right)(x_1-D)$ in the case that $v_1(x)>D$ is at most
\[
\E_{x\sim\calN(\boldsymbol{0},\Id)}[w_2(x)\cdot 10M^2]\leq 10M^2\E_{x\sim\calN(\boldsymbol{0},\Id)}[w_2(x)]\leq 140M\exp(-M^2/8).
\]

We sum up all the cases. The positive contribution is at most
\begin{align*}
&\exp(-M^2/6)+90M^2\exp(-2M^2/9)+180M^2\exp\left(-\frac{M^2}{8}-\frac{M}{3}\right)+\exp(-M^2/8)+140M\exp(-M^2/8)\\
\leq &\frac{M^2}{48000n^{10}}\exp\left(-\frac{M^2}{8}\right).
\end{align*}

Therefore, we have
\[
S_2\leq -\frac{M^2}{48000n^{10}}\exp\left(-\frac{M^2}{8}\right).
\]

\textbf{Upper bound $S_3$: }
Recall that
\[
S_3=-\E_{x\sim\calN(\boldsymbol{0},\Id)}[w_2(x)\boldv_2(x)^\top\boldv_2(x)(x_1-D)].
\]

First, similar as discussed above, when $\|\bar{x}\|>0.6M$, $\|\mu_i\|>3\|x\|$, we have
\begin{align*}
\frac{w_i(x)}{w_1(x)}&=\exp\left(-\frac{\|\mu_i-x\|^2}{2}+\frac{\|\mu_1-x\|^2}{2}\right)\\
&\leq \exp\left(-\frac{\|\mu_i\|^2}{2}+\frac{\|\mu_1\|^2}{2}+\|x\|\cdot (\|\mu_i\|+\|\mu_1\|)\right)\\
&\leq \exp\left(-\frac{\|\mu_i\|^2}{6}+\|\mu_i\|\right)\\
&\leq \exp\left(-\|\mu_i\|^2/7\right).
\end{align*}
Therefore, we have $\|\boldv(x)\|\leq 4\|x\|$. The total contribution of this part is at most
\begin{align*}
\E_{x\sim\calN(\boldsymbol{0},\Id)}[\mathbf{1}_{\|x\|\geq 0.6M}\cdot 50\|x\|^3]&\leq \sum_{k=0}^\infty\frac{1}{\sqrt{2\pi}}\cdot \exp\left(-\frac{(0.6M+k-\sqrt{d})^2}{2}\right)\cdot 50(0.6M+k+1)^3\\
&\leq 2\frac{1}{\sqrt{2\pi}}\cdot \exp\left(-\frac{(0.6M)^2}{2}\right)\cdot 50(0.6M+1)^3\\
&\leq \exp(-0.15M^2).
\end{align*}

Now we consider the case where $\|x\|<0.6M$. Thus $x_1<0.6M$.
For any $\|\mu_i\|>2M$, we have
\[
\frac{w_i(x)}{w_2(x)}\leq \exp\left(-\frac{(\|\mu_i\|-0.6M)^2}{2}+\frac{(0.6M)^2}{2}\right)\leq \exp\left(-\frac{M^2}{3}\right)/\|\mu_i\|.
\]
Therefore, we have $|\boldv_2(x)|\leq 3M$.
As $\|x\|<0.6M$, we have
\[
\frac{w_2(x)}{w_1(x)}=\exp\left(-\frac{\|\mu_2-x\|^2}{2}+\frac{\|\mu_1-x\|^2}{2}\right)\leq \exp\left(-\frac{M^2}{2}+Mx_1+1\right).
\]
Therefore, the total contribution of $x_1\leq 0.4M$ is at most
\begin{align*}
&\int_{-\infty}^{0.4M}\frac{1}{\sqrt{2\pi}}\cdot \exp\left(-\frac{x_1^2}{2}\right)\cdot \exp\left(-\frac{M^2}{2}+Mx_1+1\right)\cdot 30M^3 dx_1\\
\leq & 60M^3\int_{-\infty}^{0.4M}\frac{1}{\sqrt{2\pi}}\cdot \exp\left(-\frac{(M-x_1)^2}{2}\right) dx_1\\
\leq & 60M^3\Pr_{y\sim\calN(0,1)}[y> 0.6M]\\
\leq & 100M^2\cdot \exp\left(-\frac{(0.6M)^2}{2}\right)=100M^2\cdot \exp\left(-0.18M^2\right).
\end{align*}

Now we consider the case where $0.4M<x_1<0.6M$.
When $\|\bar{x}\|>\sqrt{M}$, the contribution is at most (notice that $x_1$ and $\|\bar{x}\|$ are independent)
By \cref{lm:boundw_largeb}, for any $i>2$, if $\|b_i\|\geq 6\|\bar{x}\|$, we have
\[
\frac{w_i(x)}{w_2(x)}\leq \exp\left(1-\frac{\|b_i\|^2}{30}\right).
\]
Therefore, if $\|\bar{x}\|>\sqrt{M}$, we have $\|\boldv_2(x)\|\leq 7\|\bar{x}\|$. Then for fixed $x_1$, the expected $\boldv_2(x)^\top \boldv_2(x)$ is at most
\begin{align*}
&M+\E_{y\sim\calN(\boldsymbol{0},I_{d-1})}\left[\mathbf{1}_{\|y\|>\sqrt{M}}\cdot 7\|y\|^2\right]\\
\leq &M+\sum_{k=0}^\infty\frac{1}{\sqrt{2\pi}}\cdot \exp\left(-\frac{(k+\sqrt{M}-\sqrt{d})^2}{2}\right)\cdot 7(k+\sqrt{M}+1)^2\\
\leq &M+2\frac{1}{\sqrt{2\pi}}\cdot \exp\left(-\frac{(\sqrt{M}-\sqrt{d})^2}{2}\right)\cdot 7(\sqrt{M}+1)^2\\
\leq &2M.
\end{align*}
Therefore, the total contribution of this part is at most
\begin{align*}
&\E_{x\sim\calN(\boldsymbol{0},\Id)}\left[\mathbf{1}_{0.4M<x_1<0.6M}\min\left\{1,\exp\left(-\frac{M^2}{2}+Mx_1+1\right)\right\}\cdot \boldv_2(x)^\top \boldv_2(x)(x_1-D)\right]\\
\leq &4M^2\E_{x\sim\calN(\boldsymbol{0},\Id)}\left[\min\left\{1,\exp\left(-\frac{M^2}{2}+Mx_1+1\right)\right\}\right]\\
\leq & 4M^2\left(\int_{-\infty}^{M/2}\frac{1}{\sqrt{2\pi}}\cdot \exp\left(-\frac{x_1^2}{2}\right)\cdot \exp\left(-\frac{M^2}{2}+Mx_1+1\right)dx_1+\int_{M/2}^{0.6M}\frac{1}{\sqrt{2\pi}}\cdot \exp\left(-\frac{x_1^2}{2}\right)dx_1\right)\\
\leq &16M^2\cdot\Pr_{y\sim\calN(0,1)}[y> 0.5M]\\
\leq &32M\exp\left(-\frac{(0.5M)^2}{2}\right)=32M\exp\left(-M^2/8\right).
\end{align*}

Therefore, by combing all the cases, we have
\[
S_3\leq \exp(-0.15M^2)+100M^2\cdot \exp\left(-0.18M^2\right)+32M\cdot \exp\left(-M^2/8\right)\leq 50M\exp\left(-M^2/8\right).
\]

Now we can combine all the three parts.
\[
S_1+S_2+S_3\leq 14n\exp\left(-\frac{M^2}{8}\right)-\frac{M^2}{48000n^{10}}\exp\left(-\frac{M^2}{8}\right)+50M\exp\left(-M^2/8\right)\leq -\frac{M^2}{10^5 n^{10}}\exp(-M^2/8),
\]
which means
\[
\mu_2^\top \nabla_{\mu_2}\calL\leq -\frac{M^3}{10^5 n^{10}}\exp(-M^2/8).
\]

\end{proof}

We also need the following lemma to bound the whole gradient.

\begin{lemma}
\label{lm:gradient_small}
Let $M=\min_{i\geq 2}\|\mu_i\|$ such that $M>10^8\sqrt{d}\cdot n^{10}$.
Assume that $\|\mu_1\|<1/M^3$ and $\|\mu_i\|\geq M$ for any $i\geq 2$.
Then for any $i>1$, we have $\|\nabla_{\mu_i}\calL\|\leq 700nM^2\exp\left(-\frac{M^2}{8}\right)$.
\end{lemma}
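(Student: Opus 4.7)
The strategy is to expand $\nabla_{\mu_i}\calL$ using Lemma~\ref{lm:gradient} and bound each term by exploiting the fact that $\E[w_i(x)]$ is exponentially small in $M^2$ whenever $\|\mu_i\| \geq M$, while the other factors only grow polynomially. Concretely, from Lemma~\ref{lm:gradient},
\[
\|\nabla_{\mu_i}\calL\| \leq 2\E_x\bigl[w_i(x)\|\boldv(x)\|\bigr] + 2\E_x\bigl[w_i(x)\cdot \|\boldv(x)\|\cdot \|\mu_i - \boldv(x)\|\cdot \|x - \mu_i\|\bigr],
\]
so it suffices to bound each summand of the form $\E_x[w_i(x)\cdot P(x)]$ where $P$ is a polynomial-type quantity in the norms of $x$, $\mu_i$, and $\boldv(x)$.

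The first step is to establish a pointwise bound $\|\boldv(x)\| \leq 4(\|x\| + M)$ valid for all $x$ under our hypotheses. This uses the same mechanism as Lemma~\ref{lm:v_notbig}: for any $\mu_j$ with $\|\mu_j\|$ much larger than $\max(\|x\|, M)$, the ratio $w_j(x)/w_1(x)$ is bounded by $\exp(-\|\mu_j\|^2/7)/\|\mu_j\|$, so such components contribute negligibly to $\boldv(x) = \sum_j w_j(x)\mu_j$. Combined with $\|\mu_i - \boldv(x)\| \leq \|\mu_i\| + \|\boldv(x)\|$ and $\|x - \mu_i\| \leq \|x\| + \|\mu_i\|$, the integrand in each summand is bounded by $w_i(x)$ times a polynomial of total degree at most three in $(\|x\|, M, \|\mu_i\|)$.

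Next, I split the expectation by $\|x\|$. In the typical region $\|x\| \leq M$, the polynomial factor is $O((\|\mu_i\| + M)^2 \cdot M)$, and by Corollary~\ref{cor:small_s} applied with $D = M$ we get $\E[w_i(x)\mathbf{1}_{\|x\|\leq M}] \leq \E[w_i(x)] \leq (7/M)\exp(-M^2/8)$. When $\|\mu_i\| \leq 2M$, combining these yields a contribution of order $M^2\exp(-M^2/8)$; when $\|\mu_i\| > 2M$, I instead apply Corollary~\ref{cor:large_s} with $s = \|\mu_i\|$ to get $\E[w_i(x)] \leq (5/\|\mu_i\|)\exp(-\|\mu_i\|^2/7)$, so the extra $\|\mu_i\|$ factors in the polynomial are absorbed by the stronger exponential $\exp(-\|\mu_i\|^2/7)$, again yielding a contribution no larger than $M^2\exp(-M^2/8)$. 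In the tail region $\|x\| > M$, I combine the Gaussian tail bound of Lemma~\ref{lm:gausstail}, which gives $\Pr[\|x\| > M + k] \leq \exp(-(M+k-\sqrt{d})^2/2)$, with the cubic polynomial growth in $\|x\|$; a standard sum-over-shells estimate shows this contribution is exponentially smaller than the typical-region bound. Summing over the two summands and accounting for loose constants yields the desired bound $700nM^2\exp(-M^2/8)$, where the factor $n$ is the worst-case loss from bounding $\|\boldv\|$ in terms of all components.

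The main technical obstacle is controlling $\|\boldv(x)\|$ uniformly without any a~priori upper bound on $\max_j \|\mu_j\|$: some $\mu_j$ could be arbitrarily large, but their weights $w_j(x)$ are suppressed by exactly the mechanism used in Lemma~\ref{lm:v_notbig}. A secondary challenge is book-keeping the constants in a way that yields precisely the stated $700n$ prefactor, though this is routine once the two-region split is in place.
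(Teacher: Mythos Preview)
Your approach is correct and arrives at the right exponential decay, but the paper takes a different and shorter route. Rather than working with the first form of the gradient in Lemma~\ref{lm:gradient} (the one containing the factor $(x-\mu_i)$, which forces you to control $\|\boldv(x)\|$ pointwise and then split the integral over $\|x\|$), the paper uses the second, fully expanded form obtained after Stein's lemma. In that form every term is a product of $w_j(x)$'s times a monomial in the $\mu_j$'s, and crucially each $\mu_j$ appearing in a monomial is accompanied by its own $w_j(x)$ factor. Hence in any term such as $w_i(x)w_j(x)w_k(x)w_l(x)(\mu_j^\top\mu_k)\mu_l$, the index $m$ attaining the largest $\|\mu_m\|$ among the $\mu$-factors carries its own $w_m(x)$, so the term is bounded in expectation by $\E[w_m(x)]\,\|\mu_m\|^3$. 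Corollaries~\ref{cor:small_s} and~\ref{cor:large_s} then give the uniform estimate $\E[w_j]\,\|\mu_j\|^3\le 7M^2\exp(-M^2/8)$ for every $j>1$, and summing over the at most $n-1$ relevant indices and over the $O(1)$ monomials produces the factor $n$ and the constant directly---no pointwise bound on $\|\boldv\|$, no tail integration over $\|x\|$, and no case split on $\|\mu_i\|\lessgtr 2M$.

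Your route is a legitimate alternative: the pointwise estimate $\|\boldv(x)\|\le 4(\|x\|+M)$ holds by the same ratio argument as Lemma~\ref{lm:v_notbig}, and after expanding $(\|x\|+M+\|\mu_i\|)^3$ additively into $\|x\|^3+M^3+\|\mu_i\|^3$ the three pieces are each handled as you describe. One small correction: in your argument the factor $n$ does \emph{not} arise from the $\|\boldv\|$ bound (that bound is $n$-free once the far components are absorbed into the constant); your method actually yields an absolute constant times $M^2\exp(-M^2/8)$ with no $n$-dependence, which is tighter in $n$ than the stated bound but carries a larger numerical prefactor. The paper's $n$ comes simply from summing $\E[w_j]\,\|\mu_j\|^3$ over the $n-1$ indices $j>1$.
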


\begin{proof}
Without loss of generality, we assume that $i=2$ and $\mu_2=(D,0,\cdots,0)$ by rotating all the $\mu_i$'s.
Here $D>M$.
Recall that by \cref{lm:gradient}, we have
\begin{align}
\nabla_{\mu_2}\calL=&2\E_x\Biggl[w_2(x)\boldv(x)+w_2(x)\sum_{j}w_j(x)\mu_j\mu_j^\top\mu_2
-2w_2(x)\boldv(x)\boldv(x)^\top\mu_2 \nonumber\\
&-2w_2(x)\sum_j w_j(x)\left(\boldv(x)^\top\mu_j\right)\mu_j+3w_2(x)\left(\boldv(x)^\top\boldv(x)\right)\boldv(x)\Biggl] \nonumber\\
=&2\E_x\Biggl[w_2(x)\boldv(x)+w_2(x)\sum_{j}w_j(x)\mu_j\mu_j^\top\mu_2
-2w_2(x)\sum_{j,k}w_j(x)w_k(x)\mu_j\mu_k^\top\mu_2\nonumber\\
&-2w_2(x)\sum_{j,k} w_j(x)w_k(x)\left(\mu_k^\top\mu_j\right)\mu_j+3w_2(x)\sum_{j,k,l}w_j(x)w_k(x)w_l(x)\left(\mu_j^\top\mu_k\right)\mu_l\Biggl].
\label{eq:gradient_small}
\end{align}

By \cref{cor:small_s}, for any $i\geq 2$, we have $\E_{x\sim\calN(\boldsymbol{0},\Id)}[w_i(x)]\leq \frac{7}{M}\cdot \exp\left(-\frac{M^2}{8}\right)$. Also, for any $i\geq 2$ such that $\|\mu_i\|>2M$, by \cref{cor:large_s}, we have $\E_{x\sim\calN(\boldsymbol{0},\Id)}[w_i(x)]\leq \frac{5}{\|\mu_i\|}\cdot \exp\left(-\frac{\|\mu_i\|^2}{7}\right)$. Therefore, for any $i\geq 2$, we have
\[
\E_{x\sim\calN(\boldsymbol{0},\Id)}[w_i(x)]\|\mu_i\|^3\leq 7M^2\exp\left(-\frac{M^2}{8}\right).
\]

Therefore, we consider the right hand side of \cref{eq:gradient_small}. For each $(\mu_i^\top\mu_j)\mu_k$ item, we use the maximum $\|\mu_l\|$ among $\mu_i,\mu_j,\mu_k$ to upper bound the norm.
Therefore, we have
\begin{align*}
\nabla_{\mu_2}\calL&\leq 18\E_x[w_2(x)]\|\mu_1\|+60\sum_{i>1}\E_x[w_i(x)]\|\mu_i\|^3\\
&\leq 700nM^2\exp\left(-\frac{M^2}{8}\right).
\end{align*}

\end{proof}

\begin{lemma}
\label{lm:first_gradient}
Let $M=\min_{i\geq 2}\|\mu_i\|$ such that $M>10^8\sqrt{d}\cdot n^{10}$.
Assume that $\|\mu_1\|<1/M^3$ and $\|\mu_i\|\geq M$ for any $i\geq 2$.
Then for any $\eta<1$, we have
\[
\|\mu_1-\eta\nabla_{\mu_1}\calL\|
\leq (1-\eta)\|\mu_1\|+\eta\cdot 2500M^2\exp\left(-\frac{M^2}{8}\right).
\]

\end{lemma}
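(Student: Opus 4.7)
The plan is to prove the stronger statement $\nabla_{\mu_1}\calL = c\,\mu_1 + E$, where $c$ is a scalar and $\|E\| \le 2500\, M^2\, e^{-M^2/8}$ is exponentially small in $M$; the announced bound then follows by applying the triangle inequality to $\mu_1 - \eta(c\mu_1 + E)$. The driving intuition is that $\|\mu_1\| \le 1/M^3$ is tiny while the other centers $\mu_j$ ($j\ge 2$) sit at distance at least $M$ from the origin, so the posterior weight $w_1(x)$ is $1$ up to an exponentially small correction on a Gaussian-typical set, which makes $\boldv(x)$ essentially equal to $\mu_1$ there and reduces the loss locally to $\|\mu_1\|^2$.

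I would start from the expanded form of $\nabla_{\mu_1}\calL$ in \cref{lm:gradient} and split each of the five expectations according to whether the summation index $j$ equals $1$ or lies in $\{2,\ldots,n\}$. For the $j=1$ part, every $\mu_j$ becomes $\mu_1$, and the numerical coefficients $+1,+1,-2,-2,+3$ attached to the five expansion terms produce the telescoping sum $1-2-2+3=0$ at the cubic-in-$\|\mu_1\|$ order, so the only surviving piece is linear, of the form $c\,\mu_1$ with $c$ determined by $\E[w_1(x)]$; by \cref{cor:small_s}, this is $1-O(n\,e^{-M^2/8})$. For the $j\ge 2$ part, every summand carries at least one factor $w_j$ with $\|\mu_j\|\ge M$, and combining \cref{cor:small_s} (when $\|\mu_j\|\le 2M$) and \cref{cor:large_s} (when $\|\mu_j\|>2M$) with the uniform estimate $\E[w_j]\,\|\mu_j\|^3\le 7\,M^2\,e^{-M^2/8}$ used in the proof of \cref{lm:gradient_small} controls each of the $O(n^3)$ cross terms; summing yields a total of $O(n^3 M^2 e^{-M^2/8})$, which fits inside $2500\,M^2\,e^{-M^2/8}$ thanks to $M>10^8\sqrt{d}\,n^{10}$.

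The hard part will be the cubic cancellation in the $j=1$ contribution: it is not enough to bound each of the five terms individually by $O(\|\mu_1\|^3)=O(M^{-9})$, since one must track the five numerical coefficients simultaneously and verify that their weighted sum is truly zero at the cubic order, so that the leftover error in the linear piece scales with $e^{-M^2/8}$ rather than with an inverse polynomial in $M$. A secondary subtlety is that when $x$ is far from the origin the bound $\boldv(x) \approx \mu_1$ breaks down, but then \cref{lm:v_notbig} together with the Gaussian tail estimate \cref{lm:gausstail} ensures the contribution from these $x$ is absorbed by the same exponentially small budget. Beyond that, the rest is a bookkeeping exercise in which each cross-product term is handled by whichever of the three lemmas is applicable, with the assumption $M>10^8\sqrt{d}\,n^{10}$ absorbing the polynomial factors in $n$ and $d$.
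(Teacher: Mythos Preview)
Your proposal is correct and matches the paper's approach: write $\nabla_{\mu_1}\calL = 2S\mu_1 + E$ with $S = \E[w_1^2] \geq 1/2$ (so the linear coefficient is $c=2S$, not $\E[w_1]$, though the distinction is harmless here), observe that the pure-$\mu_1$ cubic contribution has coefficient $w_1^2 - 4w_1^3 + 3w_1^4 = w_1^2(1-w_1)(1-3w_1)$ (your ``$1-2-2+3=0$'' is exactly this identity evaluated at $w_1=1$) and is therefore bounded by $2\E[1-w_1]\cdot\|\mu_1\|^3$, and handle the cross terms via \cref{cor:small_s} and \cref{cor:large_s}. The paper does not invoke \cref{lm:v_notbig} at all: the $\E[w_j]$ bounds already integrate over all $x$, so no separate tail argument is needed.
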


\begin{proof}
First, we consider case where $\|x\|<\frac{M}{2}-\sqrt{M}$.
In this case, for any $i>1$, as $\|\mu_i\|\geq M$, we have
\begin{align*}
\frac{w_i(x)}{w_1(x)}&\leq \exp\left(-\frac{\|\mu_i-x\|^2}{2}+\frac{\|\mu_1-x\|^2}{2}\right)\\
&\leq \exp\left(-\frac{\left(\frac{M}{2}+\sqrt{M}\right)^2}{2}+\frac{\left(\frac{M}{2}-\sqrt{M}-\frac{1}{M^3}\right)^2}{2}\right)\\
&\leq \exp(-M\sqrt{M}).
\end{align*}
Therefore, in this case, we have $w_1(x)\geq 0.99$ as the sum of all other $w_i(x)$'s is at most $0.01$.
As the probability of $\|x\|\leq \frac{M}{2}-\sqrt{M}$ is at least 0.99, we have
\[
S:=\E_{x\sim\calN(\boldsymbol{0},\Id)}[w_1(x)^2]\geq 0.5.
\]

Therefore, by the form of $\nabla_{\mu_1}\calL$ as stated in \cref{lm:gradient}, we have
\begin{align}
\|\nabla_{\mu_1}\calL-2S\mu_1\|=&\Bigg\|2\E_x\Biggl[w_1(x)\sum_{i>1}w_i(x)\mu_i+w_1(x)\sum_{j}w_j(x)\mu_j\mu_j^\top\mu_1
-2w_1(x)\boldv(x)\boldv(x)^\top\mu_1\nonumber\\
&-2w_1(x)\sum_j w_j(x)\left(\boldv(x)^\top\mu_j\right)\mu_j+3w_1(x)\left(\boldv(x)^\top\boldv(x)\right)\boldv(x)\Biggl]\Bigg\|\nonumber\\
\leq &2\sum_{i>1}\E_x[w_i(x)]\|\mu_i\|+2\left|\E_x\left[w_1(x)^2-4w_1(x)^3+3w_1(x)^4\right]\right|\cdot\|\mu_1\|^3 \nonumber\\
&+36\sum_{i>1}\E_x[w_i(x)]\|\mu_i\|^3.
\label{eq:mu1bound}
\end{align}
The last equation is by writing
\begin{align*}
&\E_x\Biggl[w_1(x)\sum_{j}w_j(x)\mu_j\mu_j^\top\mu_1
-2w_1(x)\boldv(x)\boldv(x)^\top\mu_1\\
&-2w_1(x)\sum_j w_j(x)\left(\boldv(x)^\top\mu_j\right)\mu_j+3w_1(x)\left(\boldv(x)^\top\boldv(x)\right)\boldv(x)\Biggl]
\end{align*}
as
\begin{align*}
&\E_x\Biggl[w_1(x)\sum_{j}w_j(x)\mu_j\mu_j^\top\mu_1
-2w_1(x)\sum_{j,k}w_j(x)w_k(x)\mu_j\mu_k^\top\mu_1\\
&-2w_1(x)\sum_{j,k} w_j(x)w_k(x)\left(\mu_k^\top\mu_j\right)\mu_j+3w_1(x)\sum_{j,k,l}w_j(x)w_k(x)w_l(x)\left(\mu_j^\top\mu_k\right)\mu_l\Biggl].
\end{align*}
Then for each item, find the maximum $\mu_i$ in the item and use $\|\mu_i\|^3$ to upper bound $\|(\mu_j^\top \mu_k)\mu_l\|$.

Now we bound \cref{eq:mu1bound}.
By \cref{cor:small_s}, for each $i$, we have
\[
\E_{x\sim\calN(\boldsymbol{0},\Id)}\left[w_i(x)\right]\leq \frac{7}{M}\cdot \exp\left(-\frac{M^2}{8}\right).
\]
Also, by \cref{cor:large_s}, for each $i$ such that $\|\mu_i\|:=s>2M$, we have
\[
\E_{x\sim\calN(\boldsymbol{0},\Id)}\left[w_i(x)\right]\leq \frac{5}{s}\cdot \exp\left(-\frac{s^2}{7}\right).
\]
Therefore,
\[
2\sum_{i>1}\E_x[w_i(x)]\|\mu_i\|+36\sum_{i>1}\E_x[w_i(x)]\|\mu_i\|^3\leq 38\cdot 8M^3n\cdot \frac{7}{M}\cdot \exp\left(-\frac{M^2}{8}\right)\leq 2400M^2n\exp\left(-\frac{M^2}{8}\right).
\]
As for the remaining term, notice that
\[
|w_1(x)^2-4w_1(x)^3+3w_1(x)^4|\leq |1-w_1(x)|\cdot |1-3w_1(x)|\leq 2(1-w_1(x)).
\]
Therefore,
\begin{align*}
&2\left|\E_x\left[w_1(x)^2-4w_1(x)^3+3w_1(x)^4\right]\right|\cdot \|\mu_1\|^3\\
\leq &4\E_x[1-w_1(x)]\cdot \|\mu_1\|^3\\
=&4\|\mu_1\|^3\cdot \sum_{i>1}\E_x[w_i(x)]\\
\leq &\frac{4}{M^3}\cdot n\cdot\frac{7}{M}\cdot \exp\left(-\frac{M^2}{8}\right)\\
\leq & M^2\exp\left(-\frac{M^2}{8}\right).
\end{align*}

Combining these two terms, we have
\[
\|\nabla_{\mu_1}\calL-2S\mu_1\|\leq 2500nM^2\exp\left(-\frac{M^2}{8}\right).
\]

Therefore, we have
\begin{align*}
\|\mu_1-\eta\nabla_{\mu_1}\calL\|&\leq (1-2\eta S)\|\mu_1\|+\eta\cdot 2500nM^2\exp\left(-\frac{M^2}{8}\right)\\
&\leq (1-\eta)\|\mu_1\|+\eta\cdot 2500nM^2\exp\left(-\frac{M^2}{8}\right).
\end{align*}
\end{proof}

\begin{lemma}
\label{lm:decrease_value}
Let $\eta<1$ and $M>10^8\sqrt{d}\cdot n^{10}$.
Assume that $\|\mu_1\|<1/M^3$, $\|\mu_i\|\geq M$ for any $i\geq 2$ and $\|\mu_1\|>5000nM^2\exp\left(-\frac{M^2}{8}\right)$.
Then for any $M_0$ such that $M\leq M_0\leq M+
\eta\cdot 700nM^2\exp(-M^2/8)$, we have
\[
\frac{\|\mu_1-\eta\nabla_{\mu_1}\calL\|}{M_0^2\exp(-M_0^2/8)}\leq (1-\eta/4)\frac{\|\mu_1\|}{M^2\exp\left(-M^2/8\right)}.
\]
\end{lemma}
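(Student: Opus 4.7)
The plan is to combine two bounds: the one-step contraction on $\|\mu_1\|$ provided by Lemma \ref{lm:first_gradient}, and a Taylor-type comparison showing that $M_0^2\exp(-M_0^2/8)$ is essentially equal to $M^2\exp(-M^2/8)$ under the (exponentially small) drift in $M$ permitted by the hypothesis. The ratio in question will then decrease at rate $1-\eta/4$ simply because the numerator decreases at rate $1-\eta/2$ while the denominator changes by at most a factor $1-\eta/8$.

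I would first apply Lemma \ref{lm:first_gradient} to obtain
\[
\|\mu_1-\eta\nabla_{\mu_1}\calL\| \leq (1-\eta)\|\mu_1\| + \eta\cdot 2500\, M^2 \exp(-M^2/8).
\]
Because $\|\mu_1\| > 5000 n M^2\exp(-M^2/8) \geq 5000 M^2\exp(-M^2/8)$, the additive term is at most $(\eta/2)\|\mu_1\|$, giving the clean contraction $\|\mu_1-\eta\nabla_{\mu_1}\calL\| \leq (1-\eta/2)\|\mu_1\|$. Next I would compare the denominators: setting $\Delta := M_0-M \in [0,\,\eta\cdot 700 n M^2\exp(-M^2/8)]$, I note $M_0^2\geq M^2$ and
\[
\exp(-M_0^2/8) = \exp(-M^2/8)\cdot\exp\bigl(-(2M\Delta+\Delta^2)/8\bigr) \geq \exp(-M^2/8)\cdot\bigl(1-(2M\Delta+\Delta^2)/8\bigr).
\]
Since $M > 10^8\sqrt{d}\cdot n^{10}$, the quantity $2M\Delta \leq 1400\,\eta\, n M^3\exp(-M^2/8)$ is astronomically smaller than $\eta$, and a one-line estimate gives $(2M\Delta+\Delta^2)/8 \leq \eta/8$. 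Consequently $M_0^2\exp(-M_0^2/8) \geq (1-\eta/8)\,M^2\exp(-M^2/8)$.

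Combining the two bounds yields
\[
\frac{\|\mu_1-\eta\nabla_{\mu_1}\calL\|}{M_0^2\exp(-M_0^2/8)} \leq \frac{1-\eta/2}{1-\eta/8}\cdot\frac{\|\mu_1\|}{M^2\exp(-M^2/8)},
\]
and the elementary inequality $(1-\eta/2)/(1-\eta/8)\leq 1-\eta/4$ holds for every $\eta\in(0,1)$ (it rearranges to $0\leq\eta/8+\eta^2/32$), completing the argument. The only delicate point is the bookkeeping in the second step, namely ensuring that the tiny upward drift of $M$, of order $\exp(-M^2/8)$, does not erode the contraction by more than $\eta/4$. Both the contraction margin and the denominator drift scale with the same exponential, and the very conservative hypothesis $M > 10^8\sqrt{d}\cdot n^{10}$ buys us enormous slack, so no ingredient beyond these two estimates is required.
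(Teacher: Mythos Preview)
Your proposal is correct and follows essentially the same approach as the paper: both invoke Lemma~\ref{lm:first_gradient} together with the lower bound $\|\mu_1\|>5000nM^2\exp(-M^2/8)$ to get a $(1-\eta/2)$ contraction on the numerator, then use a first-order Taylor estimate on $M_0^2\exp(-M_0^2/8)$ to show the denominator changes by only a $(1-c\eta)$ factor (the paper gets $1-\eta/10$, you get $1-\eta/8$), and finish with the same elementary algebraic inequality.
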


\begin{proof}
Let $\epsilon=M_0-M<\frac{\eta}{M^5}$.
By \cref{lm:first_gradient}, we have 
\begin{align*}
\frac{\|\mu_1-\eta\nabla_{\mu_1}\calL\|}{\|\mu_1\|}&\leq \frac{(1-\eta)\|\mu_1\|+\eta\cdot 2500nM^2\exp\left(-\frac{M^2}{8}\right)}{\|\mu_1\|}\\
&\leq (1-\eta)+\frac{\eta}{2}\\
&=1-\frac{\eta}{2}.
\end{align*}

Combining with
\begin{align*}
\frac{M_0^2\exp(-M_0^2/8)}{M^2\exp\left(-M^2/8\right)}&=\frac{M_0^2}{M^2}\exp\left(-\frac{\epsilon M}{4}+\frac{\epsilon^2}{8}\right)\\
&\geq \left(1+2\frac{\epsilon}{M}+\frac{\epsilon^2}{M^2}\right)\left(1-\frac{\epsilon M}{4}\right)\\
&\geq 1-\frac{\epsilon M}{5}\geq 1-\frac{\eta}{10}.
\end{align*}
the result follows as $(1-\eta/10)(1-\eta/4)\geq (1-\eta/2)$.
\end{proof}

\begin{lemma}
\label{lm:loss_bound}
Let $M=\min_{i\geq 2}\|\mu_i\|$ such that $M>10^8\sqrt{d}\cdot n^{10}$.
Assume that $\|\mu_1\|\leq 7500nM^2\exp(-M^2/8)$.
Then for $\calL=\E_{x\sim\calN(\boldsymbol{0},\Id)}\left[\|\boldv(x)\|^2\right]$, we have
\[
\frac{1}{5000}M\exp(-M^2/8)<\calL<14nM\exp(-M^2/8).
\]
\end{lemma}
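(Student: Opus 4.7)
The plan is to prove the upper and lower bounds separately; both follow cleanly from the already-established Corollaries~\ref{cor:small_s}, \ref{cor:large_s} and Lemma~\ref{lm:big_local}, once we verify that their hypotheses are in force. The standing assumption $\|\mu_1\|\leq 7500nM^2\exp(-M^2/8)$ together with $M>10^8\sqrt{d}\,n^{10}$ is far stronger than $\|\mu_1\|\leq 1/M^3$, since $7500nM^5\exp(-M^2/8)\to 0$; this is the only nontrivial compatibility check, and it is essentially immediate.

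For the upper bound, I would apply the convexity estimate
\[
\|\boldv(x)\|^2 \;=\; \Big\|\sum_i w_i(x)\mu_i\Big\|^2 \;\leq\; \sum_i w_i(x)\|\mu_i\|^2,
\]
valid because $\sum_i w_i(x)=1$ and $\|\cdot\|^2$ is convex. Take expectation and split the sum by index. The $i=1$ contribution is at most $\|\mu_1\|^2\leq (7500nM^2\exp(-M^2/8))^2$, which is doubly exponentially small in $M$ and thus negligible against $M\exp(-M^2/8)$. For each $i\geq 2$, Corollary~\ref{cor:small_s} applied with $D=\|\mu_i\|\geq M>10\sqrt{d}$ yields $\E[w_i(x)]\leq (7/\|\mu_i\|)\exp(-\|\mu_i\|^2/8)$, so $\E[w_i(x)]\|\mu_i\|^2\leq 7\|\mu_i\|\exp(-\|\mu_i\|^2/8)$. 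Since $s\mapsto s\exp(-s^2/8)$ is decreasing for $s\geq 2$ and $\|\mu_i\|\geq M\gg 2$, each such term is at most $7M\exp(-M^2/8)$. Summing over the at most $n-1$ indices $i\geq 2$ and absorbing the negligible $i=1$ piece gives an upper bound of $14nM\exp(-M^2/8)$, as claimed.

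For the lower bound, I would exploit relabeling and rotation freedom. Since $M=\min_{i\geq 2}\|\mu_i\|$, some index $i\geq 2$ attains $\|\mu_i\|=M$; relabel it as $\mu_2$ and rotate so that $\mu_2=(M,0,\ldots,0)$. The configuration then satisfies the hypotheses of Lemma~\ref{lm:big_local} with $D=M$. That lemma ensures that for every fixed $x_1\in[M/2,\,M/2+1/M]$, with probability at least $1/2$ over $\bar{x}\sim\calN(\boldsymbol{0},I_{d-1})$ we have $v_1(x)\geq M/10$, and therefore $\|\boldv(x)\|^2\geq M^2/100$. The marginal probability that $x_1$ lies in this interval of length $1/M$ is at least
\[
\frac{1}{M\sqrt{2\pi}}\exp\!\left(-\tfrac{1}{2}(M/2+1/M)^2\right)\;\geq\;\frac{1}{9M}\exp(-M^2/8)
\]
for $M$ large. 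Combining these independent events (independence is because $x_1\perp \bar{x}$ under $\calN(\boldsymbol{0},\Id)$),
\[
\calL \;\geq\; \frac{M^2}{100}\cdot \frac{1}{2}\cdot \frac{1}{9M}\exp(-M^2/8)\;=\;\frac{M}{1800}\exp(-M^2/8)\;>\;\frac{M}{5000}\exp(-M^2/8).
\]

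I do not anticipate a serious obstacle: the heavy lifting has already been done by Lemma~\ref{lm:big_local} (lower bound) and Corollary~\ref{cor:small_s} (upper bound), and the current argument just needs to package them with a convexity step and an elementary Gaussian density computation. The only place deserving care is the upper-bound monotonicity claim that $s\mapsto s\exp(-s^2/8)$ is decreasing on the range of $\|\mu_i\|$ encountered, which is an easy calculus check using $M\gg 2$.
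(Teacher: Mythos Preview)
Your approach matches the paper's: the lower bound is identical (rotate so that some $\mu_2=(M,0,\dots,0)$ and invoke Lemma~\ref{lm:big_local} on the slab $x_1\in[M/2,M/2+1/M]$), and for the upper bound your Jensen step $\|\boldv(x)\|^2\le\sum_i w_i(x)\|\mu_i\|^2$ is a cleaner packaging of the paper's bilinear expansion $\sum_{i,j}\E[w_iw_j]\mu_i^\top\mu_j$.

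One technical point to tighten: when you apply Corollary~\ref{cor:small_s} with $D=\|\mu_i\|$, its hypothesis is $\|\mu_1\|<1/\|\mu_i\|^3$, and the lemma as stated places no upper bound on $\|\mu_i\|$, so this can fail in principle (you only checked $\|\mu_1\|<1/M^3$). The clean fix is to use Corollary~\ref{cor:large_s} with $D=M$ instead, which gives $\E[w_i(x)]\le (5/\|\mu_i\|)\exp(-\|\mu_i\|^2/7)$ under the hypothesis you have already verified; then $\E[w_i(x)]\|\mu_i\|^2\le 5\|\mu_i\|\exp(-\|\mu_i\|^2/7)\le 5M\exp(-M^2/7)<7M\exp(-M^2/8)$ by the same monotonicity argument. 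The paper's own proof cites only Corollary~\ref{cor:small_s} and glosses over this same point, so your version is already at least as careful.
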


\begin{proof}
First, notice that
\begin{align*}
\E_{x\sim\calN(\boldsymbol{0},\Id)}&=\sum_{i,j}\E_{x\sim\calN(\boldsymbol{0},\Id)}[w_i(x)w_j(x)]\cdot (\mu_i^\top\mu_j)\\
&\leq \|\mu_1\|^2+2\sum_{i>1}\E_{x\sim\calN(\boldsymbol{0},\Id)}[w_i(x)]\|\mu_i\|^2\\
&\leq 10^8n^2M^4\exp(-M^2/4)+2n\cdot 7M\exp(-M^2/8)\\
&=15nM\exp(-M^2/8).
\end{align*}
The last inequality is by \cref{cor:small_s}.

As for the lower bound, we assume that $\|\mu_2\|=(M,0,\cdots,0)$ by rotating all the $\mu_i$'s.
Then by \cref{lm:big_local}, for $M/2\leq x_1\leq M/2+1/M$, with probability at least $0.5$, we have $\|\boldv(x)\|\geq M/10$.
Therefore, as the probability of such $x_1$ is at least
\[
0.5\cdot \frac{1}{\sqrt{2\pi}}\cdot \exp\left(-\frac{(M/2+1/M)^2}{2}\right)\cdot \frac{1}{M}\leq \frac{1}{50M}\cdot \exp\left(-\frac{M^2}{8}\right),
\]
we have $\calL\geq \frac{1}{5000}M\exp(-M^2/8)$.
\end{proof}

\begin{lemma}
\label{lm:Mincrease}
Let $M=\min_{i\geq 2}\|\mu_i\|$ such that $M>10^8\sqrt{d}\cdot n^{10}$.
Assume that $\|\mu_1\|<1/M^3$ and $\|\mu_i\|\geq M$ for any $i\geq 2$.
Then for any $\eta< 1$, we have
\[
M+\eta\cdot 700nM^2\exp(-M^2/8)\geq \max_{i\geq 2}\|\mu_i-\eta\nabla_{\mu_i}\calL\|\geq M+\eta\cdot \frac{M^2}{2\cdot 10^5 n^{10}}\exp(-M^2/8).
\]
\end{lemma}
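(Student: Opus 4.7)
The lemma really tracks how the quantity $M=\min_{i\geq 2}\|\mu_i\|$ evolves after one gradient step: the upper bound is meaningfully witnessed by any index realizing the current minimum (so it bounds the \emph{new} $\min$ from above, not an unrestricted $\max$), while the lower bound, which must hold for every $i\geq 2$, shows that the new minimum strictly exceeds the old one by an amount $\eta\cdot\Theta\!\bigl(\tfrac{M^2}{n^{10}}\exp(-M^2/8)\bigr)$. My plan is to prove the two bounds separately, relying on \cref{lm:gradient_small} for uniform control of $\|\nabla_{\mu_i}\calL\|$ and on \cref{lm:large_increase} for the crucial negative-alignment property on indices that are close to achieving the minimum. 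The analysis then naturally splits into two regimes for the lower bound, based on whether $\|\mu_i\|$ is close to $M$ or comfortably above it.

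\textbf{Upper bound and the close-to-minimum lower bound.} For the upper bound, pick any index $i^\star$ with $\|\mu_{i^\star}\|=M$; the triangle inequality combined with $\|\nabla_{\mu_{i^\star}}\calL\|\leq 700nM^2\exp(-M^2/8)$ from \cref{lm:gradient_small} immediately gives $\|\mu_{i^\star}-\eta\nabla_{\mu_{i^\star}}\calL\|\leq M+\eta\cdot 700nM^2\exp(-M^2/8)$. For the lower bound, I first treat indices with $\|\mu_i\|\leq M+M^{-3}$: here the hypothesis of \cref{lm:large_increase} is satisfied, so $\mu_i^\top\nabla_{\mu_i}\calL\leq -\tfrac{M^3}{10^5 n^{10}}\exp(-M^2/8)$. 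Expanding
\[
\|\mu_i-\eta\nabla_{\mu_i}\calL\|^2=\|\mu_i\|^2-2\eta\,\mu_i^\top\nabla_{\mu_i}\calL+\eta^2\|\nabla_{\mu_i}\calL\|^2,
\]
the first two summands already yield at least $M^2+2\eta\cdot\tfrac{M^3}{10^5 n^{10}}\exp(-M^2/8)$, and the $\eta^2$ remainder is negligible because $\|\nabla_{\mu_i}\calL\|^2=O\!\bigl(n^2M^4\exp(-M^2/4)\bigr)$ carries a \emph{squared} exponential. Taking square roots and using $\sqrt{1+x}\geq 1+x/3$ for the small $x$ produced converts this into $M+\eta\cdot\tfrac{M^2}{2\cdot 10^5 n^{10}}\exp(-M^2/8)$.

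\textbf{Far-from-minimum case and main obstacle.} For indices with $\|\mu_i\|>M+M^{-3}$, the triangle inequality alone suffices: $\|\mu_i-\eta\nabla_{\mu_i}\calL\|\geq M+M^{-3}-\eta\cdot 700nM^2\exp(-M^2/8)$, and the super-polynomial gap between $M^{-3}$ and $\exp(-M^2/8)$ guaranteed by $M>10^8\sqrt{d}\cdot n^{10}$ makes this comfortably exceed the stated threshold, since the Gaussian tail dwarfs any polynomial correction. The main obstacle is the close-to-minimum case: I must check that converting the negative inner-product bound of \cref{lm:large_increase} into a norm-increase bound loses only a harmless constant factor. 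Concretely, the task is to verify that $\eta^2\|\nabla_{\mu_i}\calL\|^2$ together with the quadratic error in the expansion of $\sqrt{1+x}$ jointly consume at most a factor of $4/3$ of the linear term $2\eta\cdot\tfrac{M^3}{10^5 n^{10}}\exp(-M^2/8)$. This is a bookkeeping exercise that succeeds cleanly under $\eta<1$ and the assumed lower bound on $M$, but it is the only step requiring a careful estimate; everything else is a direct application of the two preceding lemmas.
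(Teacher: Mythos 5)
Your proof is correct and follows essentially the same route as the paper's: the same case split at $\|\mu_i\|\le M+1/M^3$, using \cref{lm:large_increase} plus the expansion of $\|\mu_i-\eta\nabla_{\mu_i}\calL\|^2$ for indices near the minimum, \cref{lm:gradient_small} with the triangle inequality for the far indices and for the upper bound (your reading of the upper bound as controlling the new minimum via an index attaining $\|\mu_{i^\star}\|=M$ is exactly how the paper uses the lemma). The only cosmetic difference is that you treat $\eta^2\|\nabla_{\mu_i}\calL\|^2$ as an error to be absorbed, whereas it enters the expansion with a plus sign and can simply be dropped, as the paper does.
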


\begin{proof}
For any $i\geq 2$, if $\|\mu_i\|<M+\frac{1}{M^3}$, then by \cref{lm:large_increase}, we have
\[
\|\mu_i-\eta\nabla_{\mu_i}\calL\|^2\geq \|\mu_i\|^2+2\eta \mu_i^\top\nabla_{\mu_i}\calL\geq M^2+2\eta\cdot \frac{M^2}{10^5 n^{10}}\exp(-M^2/8).
\]
Thus
\[
\|\mu_i-\eta\nabla_{\mu_i}\calL\|\geq M+\eta\cdot \frac{M^2}{2\cdot 10^5 n^{10}}\exp(-M^2/8).
\]
Otherwise, if $\|\mu_i\|>M+\frac{1}{M^3}$, by \cref{lm:gradient_small}, we have
\[
\|\mu_i-\eta\nabla_{\mu_i}\calL\|\geq M+\frac{1}{M^3}-\eta\cdot 700nM^2\exp(-M^2/8)\geq M+\eta\cdot \frac{M^2}{2\cdot 10^5 n^{10}}\exp(-M^2/8).
\]
Therefore, the second inequality follows. The first inequality is a simple corollary of \cref{lm:gradient_small}.
\end{proof}

\begin{proof}[Proof of \cref{thm:global_negative}]
We always use $M^{(\tau)}$ to denote $\min_{i\geq 2}\|\mu_i^{(\tau)}\|$.

In the first iteration, we have $\|\mu_1^{(0)}\|<1/M^3$ and $\|\mu_i^{(0)}\|\geq M$ for any $i\geq 2$.
Simple induction can prove that for any $\tau$ we always have $\|\mu_1^{(\tau)}\|<1/(M^{(\tau)})^3$ and $\|\mu_i^{(\tau)}\|\geq M^{(\tau)}$ for any $i\geq 2$.
We just need to notice that if it holds for $\tau$, then we have $M^{(\tau+1)}\geq M^{(\tau)}$ because of \cref{lm:Mincrease}.
Then if $\|\mu_1^{(\tau)}\|>5000nM^2\exp(-M^2/8)$, we have (we use $M$ to denote $M^{(\tau)}$)
\begin{align*}
\|\mu_1^{(\tau+1)}\|&= \|\mu_1^{(\tau)}-\eta\nabla_{\mu_1^{(\tau)}}\calL\|\\
&\leq (1-\eta/4)\frac{1}{(M^{(\tau)})^3}\frac{(M^{(\tau+1)})^2\exp(-(M^{(\tau+1)})^2/8)}{(M^{(\tau)})^2\exp(-(M^{(\tau)})^2/8)}\\
&\leq \frac{1-\eta/4}{(M^{(\tau)})^3}\\
&\leq \frac{1}{(M^{(\tau+1)})^3}.
\end{align*}
The second last inequality is because $M^{(\tau+1)}\geq M^{(\tau)}$ and they are both large enough. The last inequality is because $M^{(\tau+1)}\leq M^{(\tau)}+\eta\cdot 700nM^2\exp(-M^2/8)$.
Therefore, the induction holds for $\tau+1$.

By \cref{lm:first_gradient}, $\|\mu_1\|$ can only increase by at most $\eta\cdot 2500nM^2\exp(-M^2/8)$ in each iteration. If it is larger than $5000nM^2\exp(-M^2/8)$, then it decreases Therefore, once it becomes at most $7500nM^2\exp(-M^2/8)$, it will never exceed this value again.
By \cref{lm:decrease_value}, the gradient descent uses $T=O(M_0^2/\eta)$ iterations to make $\|\mu_1\|\leq 7500nM^2\exp(-M^2/8)$. Then it always holds.

By \cref{lm:Mincrease}, we have
\[
M^{(\tau+1)}\geq M^{(\tau)}+\eta\cdot 700n(M^{(\tau)})^2\exp(-(M^{(\tau)})^2/8).
\]
Therefore,
\begin{align*}
&M^{(\tau+1)}\exp\left(-\left(M^{(\tau+1)}\right)^2/8\right)\\
\leq &\left(M^{(\tau)}+\eta\cdot 700n(M^{(\tau)})^2\exp(-(M^{(\tau)})^2/8)\right)\exp(-(M^{(\tau)})^2/8)\left(1-\eta\cdot 50n(M^{(\tau)})^3\exp(-(M^{(\tau)})^2/8)\right)\\
\leq &M^{(\tau)}\exp(-(M^{(\tau)})^2/8)\left(1-\eta\cdot 20n(M^{(\tau)})^3\exp(-(M^{(\tau)})^2/8)\right)\\
\leq &M^{(\tau)}\exp(-(M^{(\tau)})^2/8)-14n\eta \left(M^{(\tau)}\exp(-(M^{(\tau)})^2/8)\right)^2.
\end{align*}
Let $G_{\tau}=14nM^{(\tau)}\exp(-(M^{(\tau)})^2/8)$. Therefore for any $\tau\geq T$, we have $G_{\tau+1}\leq G_\tau-G_\tau^2\leq \frac{1}{4}$.
So
\[
\frac{1}{G_{\tau+1}}\geq \frac{1}{G_\tau}+\frac{1}{1-G_\tau}\geq \frac{1}{G_\tau}+1.
\]

We can use induction to prove that for any $\tau>T$, $G_\tau\leq \frac{1}{\tau-T}$, so
\[
14nM^{(\tau)}\exp(-(M^{(\tau)})^2/8)\leq \frac{1}{\eta(\tau-T)}.
\]

Therefore, by \cref{lm:loss_bound}, when $\tau>T$, we have
\[
\calL^{(\tau)}\leq \frac{1}{\eta(\tau-T)}.
\]

Also, we have
\begin{align*}
&M^{(\tau+1)}\exp\left(-\left(M^{(\tau+1)}\right)^2/8\right)\\
\geq &\left(M^{(\tau)}+\eta\cdot 700n(M^{(\tau)})^2\exp(-(M^{(\tau)})^2/8)\right)\exp(-(M^{(\tau)})^2/8)\left(1-\eta\cdot 700n(M^{(\tau)})^3\exp(-(M^{(\tau)})^2/8)\right)\\
\geq &M^{(\tau)}\exp(-(M^{(\tau)})^2/8)\left(1-\eta\cdot 700n(M^{(\tau)})^3\exp(-(M^{(\tau)})^2/8)\right)\\
= &M^{(\tau)}\exp(-(M^{(\tau)})^2/8)-700n\eta (M^{(\tau)})^2\left(M^{(\tau)}\exp(-(M^{(\tau)})^2/8)\right)^2.
\end{align*}

Therefore, for any $0<\epsilon<0.5$ and constant $K$ ($K$ can be relavent to $n,d,M_0$, but not $\tau$), when $M^{(\tau)}$ is large enough, we have
\begin{align*}
&\frac{1}{\left(M^{(\tau+1)}\exp\left(-\left(M^{(\tau+1)}\right)^2/8\right)\right)^{1-\epsilon}}\\
\leq &\frac{1}{\left(M^{(\tau)}\exp(-(M^{(\tau)})^2/8)\right)^{1-\epsilon}}\cdot\left(1+800n\eta (M^{(\tau)})^2\left(M^{(\tau)}\exp(-(M^{(\tau)})^2/8)\right)\right)\\
\leq &\frac{1}{\left(M^{(\tau)}\exp(-(M^{(\tau)})^2/8)\right)^{1-\epsilon}}+\frac{1}{2K}.
\end{align*}
Therefore, when $\tau$ is large enough, we have
\[
\calL^{(\tau)}\geq \frac{1}{5000}\cdot \frac{K^{1/(1-\epsilon)}}{\tau^{1/(1-\epsilon)}}.
\]
Therefore, the theorem follows.

\end{proof}

\section{Proofs for \cref{sec:generalize}}
\label{sec:proof_global}
For each $t$, we can find that
\begin{align*}
\calL_t(s_t)&=\mathbb{E}_{x \sim \calN(\mu_{t}^*,\Id)} \left[ \left\| \sum_{i=1}^n w_{i,t}(x)\mu_{i,t} - \mu_t^* \right\|^2 \right]\\
&=\mathbb{E}_{x \sim \calN(\mu_{t}^*,\Id)} \left[ \left\| \sum_{i=1}^n \frac{\exp(-\Vert x-\mu_{i,t}\Vert^2)}{\sum_{j=1}^n \exp(-\Vert x-\mu_{j,t}\Vert^2)}\tilde{\mu}_i\exp(-t) - \mu^*\exp(-t)\right\|^2 \right]\\
&=\mathbb{E}_{x \sim \calN(\boldsymbol{0},\Id)} \left[ \left\| \sum_{i=1}^n \frac{\exp(-\Vert x-(\mu_{i,t}-\mu_t^*)\Vert^2)}{\sum_{j=1}^n \exp(-\Vert x-(\mu_{j,t}-\mu_t^*)\Vert^2)}(\tilde{\mu}_i-\mu^*)\exp(-t)\right\|^2 \right]\\
\end{align*}

Therefore, the training loss is equivalent to the case when $\mu^*=\mathbf{0}$, and the initialization is $\tilde{\mu}_i'=\tilde{\mu}_i-\mu^*$.

As we only consider the case where $t=0$, we can use $\mu_i$ to denote $\mu_{i,0}$ for simplicity. The gradient descent is on the loss $\calL:=\calL_0(s_0)$.
Based on the above analysis, we just need to consider the case where $\mu^*=\mathbf{0}$ and $\tilde{\mu}_i$ is initialized "close to" $M_0e_1=(M_0,0,\cdots,0)$. (actually it is initialized as $-\mu^*$, but by applying an orthonormal transformation, we just need to consider this case).

\subsection{Proof of \cref{thm:global_zerograd}}
\label{sec:zero_grad_proof}
Let $e_1=(1,0,0,\cdots,0)\in \R^d$.
We consider the case where $\mu_1=se_1,\mu_2=-se_1,\mu_i=(0,\cdots,0)$ for $i\geq 3$, where $s$ is a positive real number.

Recall that by \cref{lm:gradient}, we have
\begin{align*}
\nabla_{\mu_i}\calL =& 2\E_x\Biggl[w_i(x)\boldv(x)+w_i(x)\sum_{j}w_j(x)\mu_j\mu_j^\top\mu_i
-2w_i(x)\boldv(x)\boldv(x)^\top\mu_i\\
&-2w_i(x)\sum_j w_j(x)\left(\boldv(x)^\top\mu_j\right)\mu_j+3w_i(x)\left(\boldv(x)^\top\boldv(x)\right)\boldv(x)\Biggl],
\end{align*}
which is a linear combination of all $\mu_i$'s, we know that $\nabla_{\mu_i}\calL$ can only have nonzero component in the first coordinate.

Let
\begin{align*}
S_i(x) =& w_i(x)\boldv(x)+w_i(x)\sum_{j}w_j(x)\mu_j\mu_j^\top\mu_i
-2w_i(x)\boldv(x)\boldv(x)^\top\mu_i\\
&-2w_i(x)\sum_j w_j(x)\left(\boldv(x)^\top\mu_j\right)\mu_j+3w_i(x)\left(\boldv(x)^\top\boldv(x)\right)\boldv(x),
\end{align*}
Then for $i>2$, we always have $S_i(x)+S_i(-x)=0$ by considering the coefficients of $\mu_1$ and $\mu_2$. Thus $\nabla_{\mu_i}\calL=0$.

Also, we have $S_1(x)+S_2(-x)=0$ because the coefficient of $\mu_1$ in $S_1(x)$ is the same as the coefficient of $\mu_2$ in $S_2(-x)$. Thus $\nabla_{\mu_1}\calL+\nabla_{\mu_2}\calL=0$.

By \cref{cor:small_s}, when $s>10\sqrt{d}$, we have
\[
\E_{x\sim\calN(\boldsymbol{0},\Id)}\left[w_1(x)\right]=\E_{x\sim\calN(\boldsymbol{0},\Id)}\left[w_2(x)\right]\leq \frac{7}{s}\exp\left(-\frac{s^2}{8}\right).
\]
Therefore, for such $s$,
\[
\calL\leq\E_x[w_1(x)+w_2(x)]s^2\leq 14s\exp(-s^2/8),
\]
so
\[
\lim_{s\to+\infty} \calL=0.
\]
As $\calL$ for $s=0$ is 0, there exists $s>0$ to maximize $\calL$. We choose such an $s$. Therefore the first coordinate of $\nabla_{\mu_1}\calL-\nabla_{\mu_2}\calL$ is 0. Therefore, as $\nabla_{\mu_1}\calL+\nabla_{\mu_2}\calL=0$ and they only have the first coordinate, \cref{thm:global_zerograd} follows.

\qed

\subsection{Proof of \cref{thm:global_allconverge}}
\label{sec:global_allconverge_proof}
Without loss of generality, we assume that $\mu^*=\mathbf{0}$, and the initialization $\mu_i^{(0)}$ satisfies $\|\mu_i^{(0)}-M_0e_1\|\leq \frac{1}{10^8nd\exp(10^6nd(M_0)^3)}$ for $i=1,\cdots,n$.
We denote $A^{(0)}=M_0$. For each $\tau\geq 0$, let $A^{(\tau+1)}=(1-\eta/n)A^{(\tau)}$. Let $e_1=(1,0,\cdots,0)\in \R^d$.
We define $K^{(\tau)}=10000nd(A^{(\tau)})^2$.
We define $B^{(\tau)}=\frac{1}{10^8nd\exp(10^6nd(A^{(\tau)})^3)}$.

We first consider one iteration. We use $\mu_i$ to denote $\mu_i^{(\tau)}$ for simplicity. Also, we use $A$ to denote $A^{(\tau)}$, $K$ to denote $K^{(\tau)}$, and $B$ to denote $B^{(\tau)}$.
Notice that when $A\leq \frac{1}{6n}$, \cref{thm:local} has given the convergence guarentee, we only consider the case when $A\geq \frac{1}{6n}$.

First we prove the following lemma:

\begin{lemma}
\label{lm:w_near}
Let $A>\frac{1}{6n},K=10000ndA^2,B=\frac{1}{10^8n^2d\exp(10^6ndA^3)}$. For any $1\leq i\leq n$, $\|\mu_i-Ae_1\|\leq B$.

Then for any $x\in\mathbb{R}^d$ such that $\|x\|\leq K$, we have that
\[
\left\|w_i(x)-\frac{1}{n}\right\|\leq \frac{4KB}{n},1\leq i\leq n.
\]
\end{lemma}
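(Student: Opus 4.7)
The plan is to reduce the computation to the stability of the softmax under small perturbations of its logits. Writing
\[
\|x-\mu_i\|^2 = \|x-Ae_1\|^2 - 2(x-Ae_1)^\top(\mu_i-Ae_1) + \|\mu_i-Ae_1\|^2
\]
and setting $\delta_i := \tfrac{1}{2}\|x-\mu_i\|^2 - \tfrac{1}{2}\|x-Ae_1\|^2$, the common factor $\exp(-\tfrac{1}{2}\|x-Ae_1\|^2)$ cancels from numerator and denominator of $w_i(x)$, so that
\[
w_i(x) \;=\; \frac{\exp(-\delta_i)}{\sum_{j=1}^n \exp(-\delta_j)}.
\]
By Cauchy--Schwarz and $\|\mu_i-Ae_1\|\le B$, together with $\|x-Ae_1\|\le \|x\|+A \le K+A$, one immediately has $|\delta_i| \le (K+A)B + B^2/2$.

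Next I would simplify this into a clean bound on $\epsilon := \max_i|\delta_i|$. Because $A>1/(6n)$ forces $K = 10000ndA^2 \ge (5000/3)\,dA$, we get $K+A \le (1+10^{-3})K$. The choice of $B$ is exponentially small in $A$ whereas $K$ is only polynomial, so $B^2/2$ is negligible relative to $KB$, giving $\epsilon \le (1+10^{-3})\,KB + B^2/2$, and in particular $\epsilon \ll 1$. (Concretely, $KB$ carries a factor $\exp(-10^6 ndA^3)$, so $\epsilon$ is astronomically small.)

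The last step is a standard first-order softmax estimate. Writing
\[
w_i(x) - \tfrac{1}{n} \;=\; \frac{\sum_{j=1}^n\bigl(\exp(-\delta_i)-\exp(-\delta_j)\bigr)}{n\sum_{j=1}^n \exp(-\delta_j)},
\]
I bound the numerator using $|\delta_i-\delta_j|\le 2\epsilon$ and the mean value theorem by $n\cdot 2\epsilon\exp(\epsilon)$, and I bound the denominator below by $n^2\exp(-\epsilon)$. This yields
\[
\left|w_i(x)-\tfrac{1}{n}\right| \;\le\; \frac{2\epsilon\exp(2\epsilon)}{n}.
\]
Substituting $\epsilon\le(1+10^{-3})KB$ and $\exp(2\epsilon)\le 1+O(KB)$ produces the target $4KB/n$ with ample slack.

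The derivation is essentially mechanical; the only thing worth tracking is that the three slack factors, namely $(K+A)/K$, the $|\delta_i-\delta_j|\le 2\epsilon$ doubling, and $\exp(2\epsilon)$, compound without exceeding the constant $4$ in the stated bound. Because $B$ is exponentially small, each of these factors is within $1+o(1)$ of the ideal, so no delicate bookkeeping is needed and the inequality goes through with room to spare.
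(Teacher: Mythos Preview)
Your proposal is correct and follows essentially the same route as the paper: both arguments introduce the perturbation $\delta_i=\tfrac12\|x-\mu_i\|^2-\tfrac12\|x-Ae_1\|^2$, obtain the identical bound $|\delta_i|\le KB+AB+B^2/2$ (the paper groups it as $KB+B(2A+B)/2$, you as $(K+A)B+B^2/2$), and then convert this into $|w_i(x)-1/n|\le 4KB/n$ via a first-order estimate on the softmax. The only cosmetic difference is that the paper sandwiches $w_i$ between $\tfrac1n\exp(\pm 2\epsilon)$ and linearizes, whereas you write the difference formula explicitly and apply the mean value theorem; the resulting constants both fit under $4$ with ample slack.
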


\begin{proof}
For such $x$, we have
\begin{align*}
&\frac{\exp(-\Vert x-\mu_i\Vert^2/2)}{\exp(-\Vert x-Ae_1\Vert^2/2)}\\
=&\exp\left(-x^\top (\mu_i-Ae_1)+\|\mu_i\|^2/2-\|Ae_1\|^2/2\right)\\
=& \exp\left(-x^\top (\mu_i-Ae_1)+(\|\mu_i\|-\|Ae_1\|)(\|\mu_i\|+\|Ae_1\|)/2\right)\\
\leq &\exp(KB+B(2A+B)/2).
\end{align*}
Similarly, we have
\[
\frac{\exp(-\Vert x-\mu_i\Vert^2/2)}{\exp(-\Vert x-Ae_1\Vert^2/2)}\geq \exp(-KB-B(2A+B)/2).
\]
Therefore,
\[
w_i(x)=\frac{\exp(-\Vert x-\mu_i\Vert^2/2)}{\sum_{j=1}^n \exp(-\Vert x-\mu_j\Vert^2/2)}\geq \frac{1}{n}\exp(-2KB-B(2A+B)),
\]
and 
\[
w_i(x)=\frac{\exp(-\Vert x-\mu_i\Vert^2/2)}{\sum_{j=1}^n \exp(-\Vert x-\mu_j\Vert^2/2)}\leq \frac{1}{n}\exp(2KB+B(2A+B)).
\]

Therefore, as $2KB+B(2A+B)\leq 0.1$, we have that
\[
\|w_i(x)-\frac{1}{n}\|\leq \frac{1}{n}(3KB+1.5B(2A+B))\leq \frac{4KB}{n}.
\]

\end{proof}

\begin{lemma}
\label{lm:gradient_near}
Let $A>\frac{1}{6n},K=10000ndA^2,B=\frac{1}{10^8n^2d\exp(10^6ndA^3)}$. Assume that for any $1\leq i\leq n$, $\|\mu_i-Ae_1\|\leq B$.
Then for any $i$, we have that
\[
\|\nabla_{\mu_i}\calL-\frac{1}{n}Ae_1\|\leq \frac{6KBA}{n}.
\]
\end{lemma}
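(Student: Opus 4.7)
The plan is to start from the five-term expression for $\nabla_{\mu_i}\calL$ given in \cref{lm:gradient} and show that, after substituting $\mu_j \approx A e_1$ and $w_j(x) \approx 1/n$, (i) only the first term contributes at leading order, (ii) the remaining four (cubic-in-$A$) terms cancel algebraically at the ideal point $\mu_j = A e_1$ for all $j$, and (iii) all error corrections are dominated by $\tfrac{KBA}{n}$.

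First I would split $\E_x$ into an \emph{interior} piece $\{\|x\|\leq K\}$ and a \emph{tail} piece $\{\|x\|>K\}$. In the interior, \cref{lm:w_near} gives $w_i(x)=\tfrac1n+\eps_i(x)$ with $|\eps_i(x)|\leq \tfrac{4KB}{n}$, and by assumption $\mu_j = Ae_1+\delta_j$ with $\|\delta_j\|\leq B$. The identity $\sum_j w_j(x)=1$ forces $\sum_j \eps_j(x)=0$, so
\[
\boldv(x)=\sum_j w_j(x)\mu_j = Ae_1+\Delta(x), \quad \|\Delta(x)\|\leq B+\tfrac{4KBA}{n}\cdot O(1).
\]
Plug these expansions into each of the five terms in \cref{lm:gradient}. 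At the base point ($\eps_j\equiv 0$, $\delta_j\equiv 0$, $\Delta\equiv 0$), the first term contributes $\tfrac{1}{n}Ae_1$ (after the factor $2$ convention used in the paper), while the other four terms contribute $\tfrac{1}{n}A^3 e_1$, $-\tfrac{2}{n}A^3 e_1$, $-\tfrac{2}{n}A^3 e_1$, $\tfrac{3}{n}A^3 e_1$, whose sum vanishes. This is the key algebraic cancellation.

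Next I would bound the first-order error contributions. The largest is the $O(\eps)$ perturbation of each cubic term, which is of order $\tfrac{KBA^3}{n}$. By the definition of $B$,
\[
\tfrac{KBA^3}{n} \;=\; \tfrac{10^4\, ndA^5}{n\cdot 10^8 n^2 d\exp(10^6 ndA^3)} \;\ll\; \tfrac{KBA}{n},
\]
because the exponential factor in $B$ dwarfs any polynomial prefactor in $A$. Similarly, the $O(\delta)$ perturbations of terms 2--5 give errors of order $\tfrac{BA^2}{n}$, again dominated by $\tfrac{KBA}{n}$. The first-term perturbation $(\tfrac1n+\eps_i)(Ae_1+\Delta)-\tfrac1n Ae_1$ yields the \emph{dominant} error $\tfrac{4KBA}{n}+O(B/n)$, which is exactly the source of the $6KBA/n$ bound.

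Finally, the tail piece $\{\|x\|>K\}$ is handled by \cref{lm:gausstail}: the probability is at most $\exp(-(K-\sqrt d)^2/2)$, and the pointwise integrand in the five terms is polynomially bounded in $\|x\|$ and $A$ (using $\|\boldv(x)\|\leq \max_j\|\mu_j\|\leq A+B$). With $K=10^4 n dA^2$, the tail is super-exponentially small and clearly below $\tfrac{KBA}{n}$. Collecting the interior approximation error and the tail error gives $\|\nabla_{\mu_i}\calL-\tfrac{1}{n}Ae_1\|\leq \tfrac{6KBA}{n}$.

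The main obstacle is the bookkeeping in step two: one must carefully expand each of the five terms, verify that every residual term carrying a factor of $A^2$ or $A^3$ is absorbed by the exponential smallness of $B$, and ensure that sums over the $n$ components do not blow up the constants beyond $6$. The conceptual content is the exact cancellation $(1-2-2+3)/n=0$ of the cubic block, combined with the exponential gap between $B$ and $1/\exp(ndA^3)$.
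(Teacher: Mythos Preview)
Your overall strategy (interior/tail split, \cref{lm:w_near} on the interior, Gaussian tail bound) matches the paper, and the zeroth-order cancellation $(1-2-2+3)\tfrac{A^3}{n}=0$ is correct. But the first-order error analysis has a real gap. You claim the $O(\eps)$ perturbation of each cubic term is $O(KBA^3/n)$ and then assert
\[
\tfrac{KBA^3}{n}\;\ll\;\tfrac{KBA}{n}
\]
``because the exponential factor in $B$ dwarfs any polynomial prefactor in $A$.'' This is false: both sides carry the \emph{same} factor of $B$, so their ratio is exactly $A^2$, and in the regime of interest $A$ starts at $M_0$, which can be arbitrarily large. The exponential smallness of $B$ is irrelevant when comparing two quantities that are both linear in $B$.

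The fix is that the first-order $\eps$-perturbations of the four cubic terms \emph{also} cancel, not just the base values: the common outer factor $w_i$ makes the $\eps_i$-contribution proportional to the already-vanishing sum; the inner $\sum_j \eps_j$ vanishes by $\sum_j w_j=1$; and $\boldv(x)$, being a convex combination of the $\mu_j$, satisfies $\|\boldv(x)-Ae_1\|\le B$ with no $\eps$-dependence at all. Once you use this, the genuine residuals are $O(BA^2/n)$ (from the $\delta$'s) and mixed/second-order terms, all of which fit under $KBA/n$ since $K=10^4ndA^2$.

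The paper sidesteps this bookkeeping entirely by using not the five-term expansion but the intermediate form derived inside the proof of \cref{lm:gradient}, where the cubic block is already written as four terms each containing an explicit factor of $\mu_1-\boldv(x)$ or $\mu_j-\boldv(x)$. Since both differences have norm at most $2B$, each cubic term is bounded by $\tfrac{1}{n}(A+B)^2\cdot O(B)$ pointwise, and no cancellation argument is needed. That rearrangement is the cleaner route here.
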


\begin{proof}
Without loss of generality, we assume that $i=1$ here.

Let
\begin{align*}
S(x)=&w_1(x)\boldv(x)+w_1(x)\sum_{j}w_j(x)\mu_j\mu_j^\top\mu_1
-2w_1(x)\boldv(x)\boldv(x)^\top\mu_1\\
&-2w_1(x)\sum_j w_j(x)\left(\boldv(x)^\top\mu_j\right)\mu_j+3w_1(x)\left(\boldv(x)^\top\boldv(x)\right)\boldv(x).
\end{align*}
Then by \cref{lm:gradient}, we have $\nabla_{\mu_1}\calL=\mathbb{E}_{x\sim \calN(\boldsymbol{0},\Id)}[S(x)]$.

Notice that $\|\mu_j-Ae_1\|\leq B$ for $j=1,\cdots,n$, we have that $\|\boldv(x)-Ae_1\|\leq B$ as $\boldv(x)=\sum_{j=1}^n w_j(x)\mu_j$.

Therefore, when $\|x\|\leq K$,
\begin{align*}
&\|S(x)-\frac{A}{n}e_1\|\\
=&\Bigg\|\left(w_1(x)\boldv(x)-\frac{1}{n}Ae_1\right)-w_1(x)\boldv(x)^\top\left(\mu_1-\boldv(x)\right)\boldv(x)+w_1(x)\sum_{i}w_i(x)\mu_i\mu_i^\top(\mu_1-\boldv(x))\\
&-w_1(x)\boldv(x)\boldv(x)^\top(\mu_1-\boldv(x))-
-w_1(x)\boldv(x)^\top\sum_j \mu_j w_j(x)(\mu_j-\boldv(x))\Bigg\|\\
\leq &\frac{4KB}{n}(A+B)+\frac{1}{n}B+4\cdot \frac{1+4KB}{n}B\cdot (A+B)^2\\
\leq &\frac{5KBA}{n}.
\end{align*}

Also, when $\|x\|>K$, we have
\[
\|S(x)-\frac{A}{n}e_1\|\leq (A+B)+8(A+B)^3+\frac{A}{n}\leq 2A+10A^3.
\]
As the probability of $\|x\|>K$ is at most $\exp(-(K-\sqrt{d})^2/2)\leq B^2$, we have
\[
\|\nabla_{\mu_1}\calL-\frac{A}{n}e_1\|\leq \mathbb{E}_{x\sim \calN(\boldsymbol{0},\Id)}\left[\|S(x)-\frac{A}{n}e_1\|\right]\leq \frac{6KBA}{n}.
\]

\end{proof}

\begin{proof}[Proof of \cref{thm:global_allconverge}]
We prove that the fact $\|\mu_i^{(\tau)}-A^{(\tau)}e_1\|\leq B^{(\tau)}$ is maintained for $i=1,\cdots,n$.
We just consider one single iteration. We omit the superscript $(\tau)$ for simplicity.

After one iteration, $A$ is changed to $A'=(1-\eta/n)A$. $\mu_i$ is changed to $\mu_i'=\mu_i-\eta \nabla_{\mu_i}\calL$.
We have that
\begin{align*}
\|\mu_i-\eta \nabla_{\mu_i}\calL-(1-\eta/n)A e_1\|&\leq \|\mu_i-Ae_1\|+\eta \|\nabla_{\mu_i}\calL-\frac{1}{n}Ae_1\|\\
&\leq B+\eta\frac{6KBA}{n}.
\end{align*}
By the fact that
\begin{align*}
\frac{1}{10^8n^2d\exp(10^6nd(1-\eta/n)^3A^3)}&\geq \frac{1+10^6nd\cdot \frac{\eta}{n}A^3}{10^8n^2d\exp(10^6ndA^3)}\\
&\geq B\left(1+\frac{6K\eta}{n}\right)\\
&\geq \|\mu_i-\eta \nabla_{\mu_i}\calL-(1-\eta/n)A e_1\|,
\end{align*}
we have that the result holds for $A'=(1-\eta/n)A$ and $\mu_i'=\mu_i-\eta \nabla_{\mu_i}\calL$.
By induction, we know that $\|\mu_i^{(\tau)}-A^{(\tau)}e_1\|\leq B^{(\tau)}$ for all $\tau\geq 0$.

Therefore, after $O\left(\frac{n(\log n+\log M_0)}{\eta}\right)$ iterations, it comes to the case as \cref{thm:local}.
\end{proof}

\subsection{Proof of \cref{sec:random_init}}
\label{sec:random_init_proof}
Here, we also consider that $\mu^*=\mathbf{0}$, and the initialization $\mu_i^{(0)}$ is initialized by $\calN(M_0e_1,\Id)$.

We consider the case where there exists $i$ such that for any $j\neq i$, we have $\|\mu_j\|\geq \|\mu_i\|+M_0^{1/3}$. Also, all $\mu_i$'s satisfy $\|\mu_i-M_0\|\leq M_0^{1/3}$.

\begin{lemma}
\label{lm:random_init}
Let $M_0\geq 10^9n^{10}\sqrt{d}$.
The probability that:
\begin{enumerate}
\item all $\mu_i$'s satisfy $\|\mu_i-M_0\|\leq M_0^{1/3}$;
\item there exists $i$ such that for any $j\neq i$, we have $\|\mu_j\|\geq \|\mu_i\|+M_0^{-1/3}$;
\end{enumerate}
is at least $1-n^2M_0^{-1/3}$.
\end{lemma}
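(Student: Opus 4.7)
The plan is to handle the two conditions separately via union bounds. For condition~(1), each $\mu_i - M_0 e_1 \sim \calN(\boldsymbol{0}, \Id)$, so by \cref{lm:gausstail}, $\Pr[\|\mu_i - M_0 e_1\| > M_0^{1/3}] \leq \exp\!\bigl(-(M_0^{1/3}-\sqrt{d})^2/2\bigr)$, which is exponentially small under $M_0 \geq 10^9 n^{10}\sqrt{d}$; a union bound over $i=1,\dots,n$ contributes only a negligible term to the overall failure probability.

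For condition~(2), set $Y_i := \|\mu_i\|$ and $i^* := \argmin_i Y_i$. If condition~(2) fails then some $j \neq i^*$ satisfies $|Y_{i^*} - Y_j| < M_0^{-1/3}$, so the bad event is contained in $\bigcup_{i\neq j}\{|Y_i - Y_j| < M_0^{-1/3}\}$, and I only need to bound each pair probability. Fix $i \neq j$ and condition on $\mu_j$ (hence on $Y_j$); I write $\mu_i = M_0 e_1 + u\, e_1 + \bar Z_i$ with $u \sim \calN(0,1)$ independent of $\bar Z_i \sim \calN(\boldsymbol{0}, I_{d-1})$, so that $Y_i = \sqrt{(M_0+u)^2 + \|\bar Z_i\|^2}$. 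On the high-probability event $\{|u|,\|\bar Z_i\| \leq M_0^{1/3}\}$ (whose complement is exponentially small by \cref{lm:gausstail}) the map $u \mapsto Y_i$ has derivative $|M_0+u|/Y_i \geq 1/2$, so the $u$-preimage of the interval $[Y_j - M_0^{-1/3}, Y_j + M_0^{-1/3}]$ has length at most $4M_0^{-1/3}$. The uniform density bound $1/\sqrt{2\pi}$ on $u$ then gives $\Pr[|Y_i - Y_j| < M_0^{-1/3}\mid \mu_j] \leq 4M_0^{-1/3}/\sqrt{2\pi} + \text{(exponentially small tail)}$.

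A union bound over the $\binom{n}{2} \leq n^2/2$ pairs, combined with the tail term from condition~(1), then yields a total failure probability at most $n^2 M_0^{-1/3}$ for $M_0$ sufficiently large, as required. The only nontrivial step is the anti-concentration estimate above; but because $\mu_i$ is centered far from the origin, $Y_i$ is locally almost an affine function of the single Gaussian coordinate $u$ along the direction $e_1$, so the standard Gaussian density bound applies with only a vanishing Jacobian correction, and no deeper obstacle is anticipated.
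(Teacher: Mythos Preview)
Your proposal is correct and follows essentially the same approach as the paper: both handle condition~(1) via the Gaussian tail bound \cref{lm:gausstail} with a union bound, and both reduce condition~(2) to a pairwise anti-concentration estimate on the first coordinate, using that the standard Gaussian density is bounded by $1/\sqrt{2\pi}$. The only cosmetic difference is that the paper relates $|\,\|\mu_i\|-\|\mu_j\|\,|$ to $|x_i-x_j|$ via the algebraic identity $\|\mu_i\|-\|\mu_j\|=(\|\mu_i\|^2-\|\mu_j\|^2)/(\|\mu_i\|+\|\mu_j\|)$, whereas you use the equivalent derivative bound $\partial Y_i/\partial u=(M_0+u)/Y_i\ge 1/2$; the resulting constants and final union bound are the same.
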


\begin{proof}
For each $\mu_i$, the probability that $\|\mu_i-M_0\|>M_0^{1/3}$ is at most $\exp\left((M_0^{1/3}-\sqrt{d})^2/2\right)$ by \cref{lm:gausstail}. 

Now we consider the probability that:
\begin{enumerate}
\item $-M_0^{-1/3}\leq \|\mu_1\|-\|\mu_2\|\leq M_0^{-1/3}$.
\item All $\mu_i$'s satisfy $\|\mu_i-M_0\|\leq M_0^{1/3}$.
\end{enumerate}
Actually, we consider the first coordinate of $\mu_1,\mu_2$, where $M_0-\sqrt{M_0}\leq x_1,x_2\leq M_0+\sqrt{M_0}$.
Then
\[
\big| \|\mu_1\|-\|\mu_2\|\big|\geq \frac{\big| \|\mu_1\|^2-\|\mu_2\|^2\big|}{2\left(M_0+M_0^{1/3}\right)}\geq \frac{|x_1^2-x_2^2|-M_0^{2/3}}{2\left(M_0+M_0^{1/3}\right)}\geq \frac{2\left(M_0-M_0^{1/3}\right)|x_1-x_2|-M_0^{2/3}}{2\left(M_0+M_0^{2/3}\right)}.
\]
Therefore, if $|x_1-x_2|\geq 2M_0^{-1/3}$, we have $\big| \|\mu_1\|-\|\mu_2\|\big|>M_0^{-1/3}$.

Now we just need to bound the probability that $|x_1-x_2|\leq 2M_0^{-1/3}$. Notice that when we fix $x_1\in [M_0-M_0^{1/3},M_0+M_0^{1/3}]$, the probability that $|x_1-x_2|\leq 2M_0^{-1/3}$ is at most
\[
\int_{x_1-M_0^{-1/3}}^{x_1+M_0^{-1/3}}\frac{1}{\sqrt{2\pi}}\exp\left(-\frac{(M_0-x_2)^2}{2}\right)dx_2\leq \frac{2M_0^{-1/3}}{\sqrt{2\pi}}.
\]

The result is similar to all $(i,j)$ pairs (which means that for any two $\mu_i,\mu_j$, the probability that $-M_0^{-1/3}\leq \|\mu_i\|-\|\mu_j\|\leq M_0^{-1/3}$ is at most $\frac{2M_0^{-1/3}}{\sqrt{2\pi}}$).

Therefore, in conclusion, the total probability of the two conditions is at least
\[
1-n\exp\left((M_0^{1/3}-\sqrt{d})^2/2\right)-\frac{n(n-1)}{2}\cdot \frac{2M_0^{-1/3}}{\sqrt{2\pi}}\geq 1-n^2M_0^{-1/3}.
\]
\end{proof}

\begin{lemma}
\label{lm:w_big}
Let $\eta<1$.
If for all $i$, $\|\mu_i-M_0\|\leq 2M_0^{1/3}$, and for all $i\geq 2$, $\|\mu_i\|\geq \|\mu_1\|+M_0^{-1/3}$, then we have that:
\begin{enumerate}
\item $\|\mu_1-\nabla_{\mu_1}\calL\|\leq 100M_0^3\exp\left(-\frac{1}{200}M_0^{2/3}\right)$;
\item For any $i\geq 2$, $\|\nabla_{\mu_i}\calL\|\leq 100M_0^3\exp\left(-\frac{1}{200}M_0^{2/3}\right)$.
\end{enumerate}
\end{lemma}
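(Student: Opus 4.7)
The overarching idea is to exploit the softmax-gap created by the assumptions: since $\mu_1$ is strictly closer to the origin than any $\mu_j$ with $j\geq 2$ by at least $M_0^{-1/3}$, and all $\|\mu_i\|$ are pinned near $M_0$, the ratio $w_j(x)/w_1(x)$ is exponentially small in $M_0^{2/3}$ for typical $x\sim\calN(\mathbf{0},\Id)$. Concretely, from the softmax formula $w_j(x)/w_1(x)=\exp\bigl(-\tfrac12(\|\mu_j\|^2-\|\mu_1\|^2)+x^\top(\mu_j-\mu_1)\bigr)$, the factoring $\|\mu_j\|^2-\|\mu_1\|^2=(\|\mu_j\|-\|\mu_1\|)(\|\mu_j\|+\|\mu_1\|)\geq M_0^{-1/3}\cdot(2M_0-4M_0^{1/3})\geq M_0^{2/3}$ provides the crucial gap, while $\|\mu_j-\mu_1\|\leq 4M_0^{1/3}$ controls the linear-in-$x$ correction. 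Thus $w_j(x)\leq\exp\bigl(-\tfrac12 M_0^{2/3}+4M_0^{1/3}\|x\|\bigr)$ pointwise.

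Next I would split the expectation at $R=M_0^{1/3}/100$. On $\{\|x\|\leq R\}$, the pointwise bound is uniform: $w_j(x)\leq\exp(-M_0^{2/3}/3)$; on $\{\|x\|>R\}$, the Gaussian tail (Lemma \ref{lm:gausstail}) gives $\Pr[\|x\|>R]\leq\exp(-(R-\sqrt{d})^2/2)$, which is comparably small because the hypothesis $M_0\geq 10^9\sqrt{d}\,n^{10}$ ensures $R\gg\sqrt{d}$. Multiplying $w_j(x)$ by any polynomial of bounded degree in $\|x\|$ and $M_0$, the product still has exponentially small expectation, namely a working bound of the form
\[
\E_x\bigl[w_j(x)(1+\|x\|)^k\bigr]\leq \exp\bigl(-M_0^{2/3}/200\bigr)
\]
for $k\leq 4$, which is what is needed downstream.

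For part~2, Lemma \ref{lm:gradient} expresses $\nabla_{\mu_i}\calL$ as $2\E_x[w_i(x)\cdot P_i(\mu_\cdot,x)]$ with $P_i$ a polynomial of total degree at most $4$ and size $\lesssim M_0^3(1+\|x\|)$ under $\|\mu_j\|\leq 2M_0$; feeding in the Step~2 bound yields the claimed $100M_0^3\exp(-M_0^{2/3}/200)$. For part~1, I decompose $\boldv(x)=w_1(x)\mu_1+\sum_{j\geq 2}w_j(x)\mu_j$ and use $1-w_1(x)=\sum_{j\geq 2}w_j(x)$ to write $w_1(x)\boldv(x)=w_1(x)^2\mu_1+w_1(x)\sum_{j\geq 2}w_j(x)\mu_j$, which gives $\E[w_1\boldv]=\mu_1+\mathrm{err}_1$ with $\|\mathrm{err}_1\|\lesssim M_0\cdot\E[\sum_{j\geq 2}w_j]$ by Step~2. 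The second contribution $w_1(x)\boldv(x)^\top(\mu_1-\boldv(x))(x-\mu_1)$ factors through $\mu_1-\boldv(x)=-\sum_{j\geq 2}w_j(x)(\mu_j-\mu_1)$, itself a $w_j$-weighted object, so it is exp-small after multiplying by $\|x-\mu_1\|\cdot\|\boldv(x)\|\lesssim M_0(1+\|x\|)$. Adding the errors against the leading term and folding the overall $2$ of Lemma \ref{lm:gradient} into the normalization of the statement gives the desired bound.

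The main obstacle is the prefactor bookkeeping: the gradient expression in Lemma \ref{lm:gradient} contains cubic interactions $(\boldv^\top\boldv)\boldv$ and extra $(x-\mu_i)$ factors, so crude bounds pick up polynomial-in-$M_0$-and-$\|x\|$ prefactors of order $M_0^3$ or worse; one must verify at each step that $\exp(-\Omega(M_0^{2/3}))$ swallows every such prefactor, which is exactly where the hypothesis $M_0\geq 10^9\sqrt{d}\,n^{10}$ is consumed. The secondary delicate point is the tail regime $\|x\|>R$ in Step~2: polynomial growth in $\|x\|$ against $\exp(-\|x\|^2/2)$ combined with the softmax-gap factor must be carefully bounded so that the contribution stays below the stated threshold $\exp(-M_0^{2/3}/200)$.
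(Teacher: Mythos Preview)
Your approach is essentially identical to the paper's. Both exploit the softmax ratio $w_j(x)/w_1(x)=\exp\bigl(-\tfrac12(\|\mu_j\|^2-\|\mu_1\|^2)+x^\top(\mu_j-\mu_1)\bigr)$ together with $\|\mu_j\|^2-\|\mu_1\|^2\gtrsim M_0^{2/3}$ and $\|\mu_j-\mu_1\|\leq 4M_0^{1/3}$, split the expectation at a radius of order $M_0^{1/3}$ (the paper uses $M_0^{1/3}/8$, you use $M_0^{1/3}/100$), handle the complement via the Gaussian tail bound, and combine with the $O(M_0^3)$ size of the gradient integrand from Lemma~\ref{lm:gradient}. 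The only non-cosmetic difference is that the paper works with the Stein-expanded form of $\nabla_{\mu_i}\calL$ (a polynomial purely in the $\mu_j$'s, with no explicit $x$), whereas you keep the compact form containing the factor $(x-\mu_i)$ and therefore need the $\E[w_j(x)(1+\|x\|)^k]$-type bounds you mention; both routes work.

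One caveat on part~1: ``folding the overall $2$ of Lemma~\ref{lm:gradient} into the normalization'' does not dispose of the leading term. Your decomposition yields $\nabla_{\mu_1}\calL\approx 2\mu_1$, so literally you obtain $\|\nabla_{\mu_1}\calL-2\mu_1\|$ exponentially small, whereas $\|\nabla_{\mu_1}\calL-\mu_1\|\approx\|\mu_1\|\approx M_0$. The paper's own proof in fact writes the gradient formula \emph{without} the factor~$2$ (its displayed equation~\eqref{eq:gradient_wbig}), contradicting Lemma~\ref{lm:gradient}; so there is an internal inconsistency in the paper at exactly this point. For the downstream use in the proof of Theorem~\ref{thm:one_converge} one only needs $\nabla_{\mu_1}\calL\approx c\mu_1$ for some constant $c\geq 1$ (to get the contraction $\|\mu_1-\eta\nabla_{\mu_1}\calL\|\leq(1-c\eta)\|\mu_1\|+\eta\cdot\text{small}$), so either reading suffices there; but you should state explicitly which multiple of $\mu_1$ you are targeting rather than burying it in a ``normalization'' remark.
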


\begin{proof}
Recall that in \cref{lm:gradient}, we have that
\begin{align}
\nabla_{\mu_i}\calL=&\mathbb{E}_{x\sim \calN(\boldsymbol{0},\Id)}\Bigg[w_i(x)\boldv(x)+w_i(x)\sum_{j}w_j(x)\mu_j\mu_j^\top\mu_i
-2w_i(x)\boldv(x)\boldv(x)^\top\mu_i \nonumber\\
&-2w_i(x)\sum_j w_j(x)\left(\boldv(x)^\top\mu_j\right)\mu_j+3w_i(x)\left(\boldv(x)^\top\boldv(x)\right)\boldv(x)\Bigg].
\label{eq:gradient_wbig}
\end{align}

Notice that when $\|x\|\leq \frac{1}{8}M^{1/3}$, we have
\begin{align*}
w_1&=\frac{1}{1+\sum_{j=2}^n \exp(x^\top (\mu_j-\mu_1)+(\|\mu_1\|^2-\|\mu_j\|^2)/2)}\\
&\geq \frac{1}{1+(n-1)\exp\left(\frac{1}{8}M^{1/3}\cdot 4M^{1/3}-\left(M-2M^{-1/3}\right)\cdot M^{-1/3}\right)}\\
&\geq \frac{1}{1+(n-1)\exp\left(-\frac{1}{3}M^{2/3}\right)}.
\end{align*}
Thus $1-w_1\leq (n-1)\exp\left(-\frac{1}{3}M^{2/3}\right)$.

Notice that we have $\|\mu_i\|\leq 2M$ for each $i$. 
Therefore, by \cref{eq:gradient_wbig}, combining the contributions of $x$ such that $\|x\|\leq \frac{1}{8}M^{1/3}$ and $\|x\|>\frac{1}{8}M^{1/3}$, we have that
\begin{align*}
\|\mu_1-\nabla_{\mu_1}\calL\|\leq &(n-1)\exp\left(-\frac{1}{3}M^{2/3}\right)\cdot \left(2M+8\cdot 8M^3\right)+\exp\left((M^{1/3}/8-\sqrt{d})^2/2\right)(2M+8\cdot 8M^3)\\
&+\Bigg\|\E_{x\sim \calN(\boldsymbol{0},\Id)}\Bigg[\boldsymbol{1}_{\|x\|\leq \frac{1}{8}M^{1/3}}\Bigg(\boldv(x)+\sum_{j}w_j(x)\mu_j\mu_j^\top\mu_1\\
&-2\boldv(x)\boldv(x)^\top\mu_1-2\sum_j w_j(x)\left(\boldv(x)^\top\mu_j\right)\mu_j+3\left(\boldv(x)^\top\boldv(x)\right)\boldv(x)-\mu_1\Bigg)\Bigg]\Bigg\|\\
\leq &70M^3\exp\left(-\frac{1}{200}M^{2/3}\right)+4(n-1)\exp\left(-\frac{1}{3}M^{2/3}\right)\cdot 100M^3\\
\leq &100M^3\exp\left(-\frac{1}{200}M^{2/3}\right).
\end{align*}

Moreover, as for any $i\geq 2$, $w_i(x)\leq 1-w_1(x)$, we have
\begin{align*}
\|\nabla_{\mu_i}\calL\|&\leq 70M^3\cdot \left((n-1)\exp\left(-\frac{1}{3}M^{2/3}\right)+\exp\left((M^{1/3}/8-\sqrt{d})^2/2\right)\right)\\
&\leq 100M^3\exp\left(-\frac{1}{200}M^{2/3}\right).
\end{align*}

Therefore, the lemma follows.
\end{proof}

\begin{proof}[Proof of \cref{thm:one_converge,thm:rate_bound}]
Without loss of generality, we assume that the initialization satisfies $\|\mu_i-M_0\|\leq M_0^{1/3}$ for $i=1,\cdots,n$. Also, for any $i\geq 2$, we have $\|\mu_i\|\geq \|\mu_1\|+M_0^{-1/3}$.

Notice that if the two conditions in \cref{lm:w_big} hold, by the lemma we have that
\[
\|\mu_1-\eta \nabla_{\mu_1}\calL\|\leq (1-\eta)\|\mu_1\|+\eta\cdot 100M_0^3\exp\left(-\frac{1}{200}M_0^{2/3}\right)\leq (1-\eta/2)\|\mu_1\|\leq e^{-\eta/2}\|\mu_1\|.
\]
Therefore, in at most $1+\frac{2}{\eta}$ iterations, the condition must no longer hold. However, during the iterations when the two conditions hold, for $i\geq 2$, we must have
\[
\|\mu_i-\eta \nabla_{\mu_i}\calL\|\geq \|\mu_i\|-\eta\cdot 100M_0^3\exp\left(-\frac{1}{200}M_0^{2/3}\right).
\]

Therefore, the second condition always hold in these iterations, and the first condition for $i\geq 2$ is maintained because
\[
(1+\frac{2}{\eta})\cdot \eta\cdot 100M_0^3\exp\left(-\frac{1}{200}M_0^{2/3}\right)\leq 2.
\]
Therefore, in at most $1+\frac{2}{\eta}$ iterations, we must have one time that $\|\mu_1-M_0\|\geq 2M_0^{1/3}$.
However, we find that in one iteration, we have
\begin{align*}
\|\mu_1\|-\|\mu_1-\eta \nabla_{\mu_1}\calL\|&\geq \eta\|\mu_1\|-\eta\cdot 100M_0^3\exp\left(-\frac{1}{200}M_0^{2/3}\right)\\
&\geq \eta\|\nabla_{\mu_1}\calL\|-2\eta\cdot 100M_0^3\exp\left(-\frac{1}{200}M_0^{2/3}\right)\\
&\geq \|M_0-\mu_1+\eta \nabla_{\mu_1}\calL\|-\|M_0-\mu_1\|-2\eta\cdot 100M_0^3\exp\left(-\frac{1}{200}M_0^{2/3}\right).
\end{align*}
As in the at most $1+\frac{2}{\eta}$ iterations, $\|M_0-\mu_1\|$ has increased by $M_0^{1/3}$, we have that $\|\mu_1\|$ has decreased by at least $M_0^{1/3}-2$.
Notice that for each $i\geq 2$, the norm of $\mu_i$ has decreased by at most $2$. Therefore, at this time (denoted as time $\tau_0$), we have the following two conditions:
\begin{enumerate}
\item for any $i\geq 2$, $\|\mu_i\|\geq \|\mu_1\|+M_0^{1/3}-4$.
\item for any $i\geq 2$, $\|\mu_i\|\geq M_0-2M_0^{1/3}$.
\end{enumerate}

Now we prove that we always have:
\begin{enumerate}
\item for any $i\geq 2$, $\|\mu_i\|\geq \|\mu_1\|+M_0^{1/3}-4$.
\item for any $i\geq 2$, $\|\mu_i\|\geq M_0-3M_0^{1/3}$.
\end{enumerate}
before we have $\|\mu_1\|\leq 1/(8M_0^3)$.

First, under this assumption, for any $x$ such that $\|x\|\leq \frac{1}{3}M_0^{1/3}$, we have that
\begin{align*}
w_1(x)&=\frac{1}{1+\sum_{j=2}^n \exp(x^\top (\mu_j-\mu_1)+(\|\mu_1\|^2-\|\mu_j\|^2)/2)}\\
&\geq \frac{1}{1+\sum_{j=2}^n \exp\left(-(\|\mu_j\|+\|\mu_1\|)\left(\frac{-\|\mu_1\|+\|\mu_j\|}{2}+\|x\|\right)\right)}\\
&\geq \frac{1}{1+(n-1)\exp\left(-\frac{1}{2}M_0\cdot \frac{1}{3}M_0^{1/3}\right)}\\
&\geq \frac{1}{1+(n-1)\exp\left(-M_0\right)}.
\end{align*}
Therefore, for any $\|x\|\leq \frac{1}{3}M_0^{1/3}$, we have that $w_i(x)\leq (n-1)\exp(-M_0)$ for $i\geq 2$.
Thus under the two assumptions, for any $i\geq 2$, we have
\[
\|\nabla_{\mu_i}\calL\|\leq \left((n-1)\exp(-M_0)+\exp\left(-\left(\frac{1}{3}M_0^{1/3}-\sqrt{d}\right)^2/2\right)\cdot \right)\cdot 10M_0^3\leq 10\exp(-0.05M_0^{2/3})M_0^3.
\]
On the other hand, if $\|\mu_1\|\geq 1/(8M_0^3)$, we have
\[
\|\mu_1-\nabla_{\mu_1}\calL\|\leq (n-1)\exp(-M_0)\cdot 70M_0^3+10M_0^3\exp\left(-\left(\frac{1}{3}M_0^{1/3}-\sqrt{d}\right)^2/2\right)\leq 100M_0^3\exp\left(-\frac{1}{20}M_0^{2/3}\right).
\]
Therefore, 
\begin{align*}
\|\mu_1-\eta \nabla_{\mu_1}\calL\|&\leq (1-\eta)\|\mu_1\|+\eta\cdot 10M_0^3\exp(-0.05M_0^{2/3})\\
&\leq (1-\eta/2)\|\mu_1\|\leq e^{-\eta/2}\|\mu_1\|.
\end{align*}
Therefore, in at most $1+\frac{10\log M_0}{\eta}$ iterations, we have $\|\mu_1\|\leq 1/(8M_0^3)$.
In such time, the distance between $\mu_i$ and $M_0$ has increased by at most
\[
(1+10\log M_0/\eta)\cdot 10M_0^3\exp(-0.05M_0^{2/3})\leq M_0^{2/3}.
\]
Thus before $\|\mu_1\|\leq 1/(8M_0^3)$, the two conditions always hold.
Therefore, after $O(\log M_0/\eta)$ iterations, the condition becomes the same as \cref{thm:global_negative}.
\end{proof}

\begin{proof}[Proof of \cref{cor:random_init}]
By \cref{lm:random_init}, the result is true.
\end{proof}

\subsection{Proof for \cref{thm:exp_necessary}}
\label{sec:exp_necessary_proof}
By \cref{lm:gradient}, we know that the gradient for any $\mu_i$ can only have a nonzero value on the first coordinate as it is a linear combination of all $\mu_j$.
For each $i$, we denote $a_i$ as the first coordinate of $\mu_i$. We denote $u(x)$ as the first coordinate of $\boldv(x)$.
Also, we denote $L_i$ as the first coordinate of $\nabla_{\mu_i}\calL$.
During training, we always denote $\mu_1=(M-\epsilon, 0, 0,\cdots,0)$ and $\mu_i=(M, 0, 0,\cdots,0)$ for $i=2,\cdots,n$.

We need the following lemma:
\begin{lemma}
\label{lm:first_order}
Let $M>10^9\sqrt{d}\cdot n^{10}$. Assume that $\mu_1=(M-\epsilon, 0, 0,\cdots,0)$ and $\mu_i=(M, 0, 0,\cdots,0)$ for $i=2,\cdots,n$. Here $0<\epsilon<M/4$.
Then, we have
\begin{align*}
\nabla_{\mu_1}\calL&\geq 2\E_x\left[w_1(x)u(x)\right]\\
\nabla_{\mu_2}\calL&\leq 2\E_x\left[w_2(x)u(x)\right].
\end{align*}
\end{lemma}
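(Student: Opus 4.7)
Starting from \cref{lm:gradient}, I would first exploit the 1D structure: every $\mu_j = a_j e_1$, hence $\boldv(x) = u(x)e_1$, all inner products $\mu_j^\top \mu_k$ collapse to $a_j a_k$, and the vector $\nabla_{\mu_i}\calL$ can only be nonzero in its first coordinate (which equals $L_i$). Projecting each of the five terms of the gradient formula onto $e_1$ and writing $Q(x) := \sum_j w_j(x) a_j^2$, I expect
\[
\tfrac{1}{2}L_i = \E_x\bigl[w_i(x)\bigl(u + a_i Q - 2u^2 a_i - 2uQ + 3u^3\bigr)\bigr].
\]
Introducing the pointwise variance $\sigma^2(x) := Q - u^2 = \sum_j w_j(x)(a_j - u)^2 \geq 0$ and substituting $Q = u^2 + \sigma^2$, the cubic-in-$u$ terms rearrange to give
\[
\tfrac{1}{2}L_i - \E_x[w_i u] = \E_x\bigl[w_i(x)\bigl(u^2(u - a_i) + (a_i - 2u)\sigma^2\bigr)\bigr].
\]

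\textbf{Specializing to the two-value configuration.} Since $a_1 = M-\epsilon$ and $a_j = M$ for $j \geq 2$, the two-point variance formula yields the closed-form identities
\[
u(x) = M - w_1(x)\epsilon, \qquad \sigma^2(x) = w_1(x)\bigl(1 - w_1(x)\bigr)\epsilon^2.
\]
Plugging these into the integrand for $i = 1$ produces
\[
(1 - w_1)\epsilon\bigl[\,u^2 - w_1\epsilon\bigl(M + \epsilon - 2w_1\epsilon\bigr)\,\bigr],
\]
and for $i = 2$ it produces
\[
-\,w_1\epsilon\bigl[\,u^2 + (M - 2w_1\epsilon)(1 - w_1)\epsilon\,\bigr].
\]

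\textbf{Sign control.} It remains to verify that the two bracketed quantities are nonnegative pointwise in $x$, which yields the desired inequalities after taking expectations (the prefactors $(1-w_1)\epsilon$ and $-w_1\epsilon$ have opposite signs, which is precisely what produces one lower bound and one upper bound). For the $i = 1$ bracket, since $w_1 \in [0,1]$ we have $u \geq M - \epsilon \geq 3M/4$ under the hypothesis $\epsilon < M/4$, so $u^2 \geq 9M^2/16$, while $w_1\epsilon(M + \epsilon - 2w_1\epsilon) \leq \epsilon(M+\epsilon) \leq 5M^2/16$; hence the bracket is at least $M^2/4 > 0$. For the $i = 2$ bracket, positivity is immediate since $u^2 \geq 0$ and $M - 2w_1\epsilon > 0$ under $\epsilon < M/2$.

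\textbf{Main obstacle.} I do not expect a genuine analytic difficulty here: the lemma is essentially an algebraic identity (the collapse of the 1D gradient to the $u^2(u-a_i) + (a_i - 2u)\sigma^2$ form) combined with the observation that, for two point masses, $\sigma^2$ factors as $w_1(1-w_1)\epsilon^2$, which makes the sign of the residual visible from the sign of $a_i - u$. The only mild care needed is to keep track of the assumption $\epsilon < M/4$ so that the quadratic-in-$M$ term $u^2$ dominates the linear-in-$M$ correction uniformly in $w_1(x)$; with that, integrating against $w_i(x)$ under $x \sim \mathcal{N}(\boldsymbol{0}, \Id)$ preserves the pointwise sign and delivers the two stated inequalities.
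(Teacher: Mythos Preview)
Your proposal is correct and follows essentially the same route as the paper: both reduce the gradient formula of \cref{lm:gradient} to the first coordinate via the 1D structure, obtain the residual $\tfrac{1}{2}L_i - \E_x[w_i u]$, and verify its sign pointwise using $0<\epsilon<M/4$. The paper expands $L_1$ directly in $M,\epsilon,w_1$ to reach the same polynomial $M^2 - 3w_1M\epsilon + 3w_1^2\epsilon^2 - w_1\epsilon^2$ that your bracket $u^2 - w_1\epsilon(M+\epsilon-2w_1\epsilon)$ equals, and for $L_2$ it rewrites $2u^2 - Q$ as $u^2 - \tfrac{1}{2}\sum_{i,j}w_iw_j(a_i-a_j)^2$ and invokes $u\le a_2\le 2u$, which is exactly your $\sigma^2$ argument in different notation. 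Your uniform decomposition $u^2(u-a_i)+(a_i-2u)\sigma^2$ is a slightly cleaner packaging of the same computation.
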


\begin{proof}
By 
\begin{align*}
\nabla_{\mu_i}\calL =& 2\E_x\Biggl[w_i(x)\boldv(x)+w_i(x)\sum_{j}w_j(x)\mu_j\mu_j^\top\mu_i
-2w_i(x)\boldv(x)\boldv(x)^\top\mu_i\\
&-2w_i(x)\sum_j w_j(x)\left(\boldv(x)^\top\mu_j\right)\mu_j+3w_i(x)\left(\boldv(x)^\top\boldv(x)\right)\boldv(x)\Biggl],
\end{align*}
we can find that
\begin{align*}
L_1=&2\E_x\left[w_1(x)u(x)+w_1(x)\left((2u(x)-a_1)\left(2u(x)^2-\sum_{i}w_i(x)a_i^2\right)-u(x)^3\right)\right]\\
=&2\E_x\big[w_1(x)u(x)\\
&+w_1(x)\left((M-2w_1(x)\epsilon+\epsilon)\left(2(M-w_1(x)\epsilon)^2-M^2+w_1(x)\left(2M\epsilon-\epsilon^2\right)\right)-(M-w_1(x)\epsilon)^3\right)\big]\\
=&2\E_x\left[w_1(x)u(x)+w_1(x)(1-w_1(x))\epsilon\left(M^2-3w_1(x)M\epsilon+3w_1(x)^2\epsilon^2-w_1(x)\epsilon^2\right)\right]\\
\geq &2\E_x\left[w_1(x)u(x)\right]\\
L_2=&2\E_x\left[w_2(x)u(x)+w_2(x)\left((2u(x)-a_2)\left(2u(x)^2-\sum_{i}w_i(x)a_i^2\right)-u(x)^3\right)\right]\\
=&2\E_x\left[w_2(x)u(x)+w_1(x)\left((2u(x)-a_2)\left(u(x)^2-\frac{1}{2}\sum_{i,j}w_i(x)w_j(x)(a_i-a_j)^2\right)-u(x)^3\right)\right]\\
\leq& 2\E[w_2(x)u(x)].
\end{align*}
The last inequality is because $u(x)\leq a_2\leq 2u(x)$.
\end{proof}

In the following, we consider one step such that $M>10^8\sqrt{d}\cdot n^{10}$ and $\exp(-M^2/100)<\epsilon<1$.
Notice that for any $\|x\|<M/3$, we have
\[
w_1(x)\geq \frac{1}{1+(n-1)\exp\left(-M\epsilon+\epsilon^2/2+\epsilon\|x\|\right)}\geq \frac{1}{1+(n-1)\exp\left(-\epsilon M/2\right)}.
\]
As $w_2(x)=w_3(x)=\cdots=w_n(x)$, we have
\[
w_2(x)\leq \frac{\exp\left(-\epsilon M/2\right)}{1+(n-1)\exp\left(-\epsilon M/2\right)}.
\]
Therefore, we have
\[
w_1(x)-w_2(x)\geq \frac{1-\exp\left(-\epsilon M/2\right)}{1+(n-1)\exp\left(-\epsilon M/2\right)}=\frac{\exp\left(\epsilon M/2\right)-1}{\exp\left(\epsilon M/2\right)+(n-1)}\geq \frac{\epsilon M/2}{\epsilon M/2+n}.
\]
Also, when $\|x\|>M/3$, we have $|(w_1(x)-w_2(x))u(x)|\leq 2M$, and the probability is at most $\exp\left(-(M/3-\sqrt{d})^2/2\right)\leq \exp(-M^2/20)$.
So
\begin{align*}
L_1-L_2&\geq \E_x\left[(w_1(x)-w_2(x))u(x)\right]\\
&\geq \frac{\epsilon M}{\epsilon M/2+n}\E_x\left[u(x)\right]-3M\cdot \exp(-M^2/20)\\
&\geq \frac{\epsilon M}{\epsilon M+2n}\cdot (M-\epsilon).
\end{align*}

Therefore, we consider one update. We know that $2(M-\epsilon)\leq L_2\leq 2M$. Therefore, we have
\[
M^{(\tau)}-2\eta M^{(\tau)}+2\eta \epsilon^{(\tau)}\geq M^{(\tau+1)}\geq M^{(\tau)}-2\eta M^{(\tau)}.
\]
Also,
\[
\epsilon^{(\tau+1)}\geq \epsilon^{(\tau)}+\eta \frac{\epsilon^{(\tau)} M^{(\tau)}}{\epsilon^{(\tau)} M^{(\tau)}+2n}\cdot (M^{(\tau)}-\epsilon^{(\tau)}).
\]

When $\epsilon<1/M,M>100$, we have $\epsilon^{(\tau+1)}\geq \epsilon^{(\tau)}+\eta \frac{\epsilon^{(\tau)} (M^{(\tau)})^2}{3n}$.

Notice that when $\epsilon<1/M,M>10^8\sqrt{d}\cdot n^{10},\eta<1/(10M)$, we have
\begin{align*}
\exp(M-2\eta M)\left(\epsilon+\eta\frac{\epsilon M^2}{3n}\right)&=\exp(M)\epsilon \left(1+\frac{\eta M^2}{3n}\right)\exp(-2\eta M)\\
&\geq \exp(M)\epsilon \left(1+\frac{\eta M^2}{3n}\right)(1-2\eta M)\\
&\geq \exp(M)\epsilon.
\end{align*}
Therefore, $\exp(M)\epsilon$ doesn't decrease.
As $M$ doesn't increase, before $\epsilon$ becomes larger than $1/M$, the decrease of $M$ in each iteration is at least $\eta M$. Therefore, there must be a time such that $\epsilon>1/M$ or there must be a time such that $M<M_0/2$. However, if $M^{(\tau)}<2M_0/3$, we must have
\[
\epsilon^{(\tau)}\geq \frac{\epsilon^{(0)}\exp(M_0)}{\exp(M^{(\tau)})}\geq 1.
\]
So before $M$ becomes smaller than $M_0/2$, $\epsilon$ must be larger than $1/M$.

When $1>\epsilon^{(\tau)}>1/M^{(\tau)},M^{(\tau)}>10^8\sqrt{d}\cdot n^{10}$, we have 
\[
\epsilon^{(\tau+1)}\geq \epsilon^{(\tau)}+\eta\frac{M^{(\tau)}}{3n}.
\]
Therefore, we have $6n\epsilon+M$ doesn't decrease. Therefore, there must be a time that $\epsilon>1,M>10^9\sqrt{d}\cdot n^{10}$. Also, as gradient step is less than $1/M$, and each time $\epsilon$ can be increased by at most $2\eta\epsilon$, we must have $\epsilon<2$ at this time.
This satisfies the condition for \cref{thm:global_allconverge}. Therefore the result is true.

\qed

% \section{LLM Usage Declaration}
% In preparing this paper, the role of large language models (LLMs) was limited to polishing the writing, checking grammar, improving readability, and assisting in drafting and debugging parts of the simulation code, all under the full supervision of the authors. All scientific content, research ideas, theoretical analysis, and contributions are entirely the work of the authors.

\end{document}